\newcommand{\shirley}[2][]{\todo[color=orange!20,#1]{{\bf SL:} #2}}
\newtheorem{theorem}{Theorem}
\newtheorem{lemma}{Lemma}
\newcommand{\Lx}{\beta}
\newcommand{\dx}{d}
\tikzset{
  >=Stealth,                                   
  figarrow/.style={-{Triangle[length=4.5pt,width=3.8pt]}, line width=0.8pt}
}
\DeclareRobustCommand{\tikzto}[1][1.7em]{%
  \mathrel{\tikz[baseline=-0.6ex]{\draw[figarrow] (0,0) -- (#1,0);}}%
}
\title{Non-stationary Bandit Convex Optimization:\\A Comprehensive Study}
\author{%
Xiaoqi Liu\thanks{University of Oxford. Correspondence to: \texttt{\{shirley.liu, arya.akhavan\}@stats.ox.ac.uk}.}
\And
Dorian Baudry$^*$\thanks{Univ. Grenoble Alpes, Inria, CNRS, Grenoble INP, LIG, 38000 Grenoble, France.}\And
Julian Zimmert\thanks{Google Research.}
\And
Patrick Rebeschini$^*$\And
Arya Akhavan$^*$\thanks{École Polytechnique de Paris, IP Paris.}
}
\begin{document}

\maketitle

\begin{abstract}
Bandit Convex Optimization is a fundamental class of sequential decision-making problems, where the learner selects actions from a continuous domain and observes a loss (but not its gradient) at only one point per round. We study this problem in non-stationary environments, and aim to minimize the regret under three standard measures of non-stationarity: the number of switches $S$ in the comparator sequence, the total variation $\Delta$ of the loss functions, and the path-length $P$ of the comparator sequence. We propose a polynomial-time algorithm, Tilted Exponentially Weighted Average with Sleeping Experts (TEWA-SE), which adapts the sleeping experts framework from online convex optimization to the bandit setting.  
For strongly convex losses, we prove that TEWA-SE is minimax-optimal with respect to known $S$ and $\Delta$ by establishing matching upper and lower bounds. 
By equipping TEWA-SE with the Bandit-over-Bandit framework, we extend our analysis to environments with unknown non-stationarity measures. 
For general convex losses, we  introduce a second algorithm, clipped Exploration by Optimization (cExO), based on exponential weights over a discretized action space. While not polynomial-time computable, this method achieves minimax-optimal  regret with respect to known $S$ and $\Delta$, and improves  on the best existing bounds  with respect to $P$.
\end{abstract}

\section{Introduction}\label{sec:intro}
Many real-world decision-making problems, such as resource allocation, 
experimental design, 
 or hyperparameter tuning  require repeatedly selecting an action from a continuous space under uncertainty and limited feedback. These settings are naturally modeled as 
Bandit Convex Optimization (see \cite{lattimore24introbco} for an introduction), 
in which an  adversary  fixes a sequence of $T$ loss functions $f_1, f_2, \dots, f_T: \mathbb{R}^d \to \mathbb{R}$ beforehand, and a learner sequentially interacts with the adversary  for $T$ rounds. 
At each round $t$, the learner selects an action $\bz_t$ from a continuous arm set $\com \subseteq \mathbb{R}^d$, assumed to be convex and compact. The learner then incurs a loss  $f_t(\bz_t)$ and observes a  noisy feedback: 
\begin{align}
y_t = f_t(\bz_t) + \xi_t\,, 
\end{align} where $\xi_t$ is a sub-Gaussian noise variable (Definition~\ref{def::subgaussian}). The goal is to minimize the learner's \textit{regret} with respect to (w.r.t.)\@ a performance benchmark. 
In the online learning literature \cite{Hazan16intoduction, orabona2019modern}, the benchmark is typically the best \textit{static} action in hindsight, with cumulative loss $\min_{\bz\in\com}\sum_{t=1}^{T} f_t(\bz)$.

However,  non-stationarity arises in many applications where  different actions may work well during different time intervals. Hence,  a line of works  \cite{zinkevich2003online, Mokhtari16, Jadbabaie15, besbes2015non, hall2015online} propose to compare the learner's actions against   a sequence of comparators $\bu_1, \dots, \bu_T \in \Theta$, leading to a regret defined as
\begin{align}\label{eq:Reg_universal}
   R(T, \bu_{1:T})\coloneqq \sum_{t=1}^{T}\Exp\left[f_t(\bz_t) -  f_t(\bu_t)\right]\,,
\end{align}
where   $\bu_{1:T}$ denotes $(\bu_t)_{t=1}^T$, and 
 the expectation is taken w.r.t.\@ the randomness in the learner’s actions $\bz_t$'s and the randomness of the noise variables $\xi_t$'s, similarly to the standard notion of  pseudo-regret in the bandit literature, see e.g., \cite[Section  4.8]{lattimore2020bandit}. Choosing the regret-maximizing comparator in \eqref{eq:Reg_universal} gives rise to the notion of \textit{dynamic regret}, defined as
\begin{align}\label{eq:DynReg}
   R^\dyn(T)\coloneqq \max_{\bu_{1:T}\in \Theta^T} R(T,  \bu_{1:T})\,.
\end{align}
While addressing non-stationarity through dynamic regret has been extensively studied in multi-armed bandits (e.g., \cite{garivier2008upper, besbes2014stochastic, BessonKMS22}), it remains relatively underexplored in continuum bandits \cite{zhao2021bandit, besbes2015non, wang2025adaptivity}. This work aims to bridge this gap by proposing algorithms for Bandit Convex Optimization that achieve sublinear dynamic regret.
Such a rate is generally unattainable without imposing  structural constraints on the environment, i.e.,  the comparator sequence  and the loss function sequence  \cite{Jadbabaie15}.  
For the  comparator sequence, two commonly studied constraints   are the number of switches \cite{herbster1998tracking} and the path-length \cite{zinkevich2003online}, defined respectively as
\begin{align}\label{eq:def_S}
 S(\bu_{1:T})\coloneqq 1+ \sum_{t=2}^T \mathbf{1}\left(\bu_t \neq \bu_{t-1}\right)  \le S\,,\quad\quad   P(\bu_{1:T})\coloneqq\sum_{t=2}^T\|\bu_t - \bu_{t-1}\|\le P\,.
\end{align}
For the loss function sequence, a popular constraint   is the total variation \cite{besbes2015non}, defined as
\begin{align} 
     \Delta(f_{1:T}) \coloneqq \sum_{t=2}^{T}\max_{\bz\in\com}\left|f_t(\bz) - f_{t-1}(\bz)\right|\le \Delta\,.
\end{align}
The  constraints that the upper bounds  $S$, $P$ and $\Delta$ respectively impose on the comparators or on the loss functions lead to different notions of regret.  
We call the regret  for environments  constrained by $S$ the \textit{switching regret}, which we define as
\begin{align}\label{eq:def_R_S}
R^\swi(T,S)\coloneqq\max_{\bu_{1:T}:S(\bu_{1:T})\leq S} R(T, \bu_{1:T})\,.
\end{align} 
Similarly, we call the regret for environments constrained by $P$ the \textit{path-length regret}, denoted by   $R^\pat(T, P)$. 
We also use $R^\dyn(T, \Delta)$ and $R^{\dyn}(T, \Delta, S)$ to denote the dynamic regret, where the arguments after $T$ specify environment constraints. See \eqref{eq:R_dyn_T_Delta} for rigorous definitions.

We detail in Section~\ref{sec:conversions} conversion results between the different regret definitions that we introduced:  a sublinear  switching regret $R^\swi(T, S)$ implies  sublinear  $R^{\dyn}(T, \Delta), R^{\dyn}(T, \Delta, S)$ and $R^\pat(T, P) $, as  illustrated in Figure~\ref{fig:flowchart} (see also \cite{zhang2018dynamic, zhang2020minimizing}). 
%
Furthermore, the upper bounds on the switching regret presented in this work are derived from upper bounds on the \textit{adaptive regret} \cite{HazanS09, daniely2015strongly}, 
 which is defined using  an interval length   $\sfB\in [T]$  as follows,
    \begin{align}\label{eq:def_ada_regret}
            R^{\ada} (\sfB, T)\coloneqq \max_{\substack{p,q\in[T],\\
            0<q-p\leq \sfB}}\max_{\bu\in \Theta}\sum_{t=p}^{q} \Exp\left[f_t(\bz_t) -   f_t(\bu)\right]\,.
        \end{align}
With an appropriate tuning of $\sfB$ depending on   $S$, an adaptive regret sublinear in $\sfB$ implies a switching regret  sublinear in $T$  through a simple reduction, see e.g., discussions in \cite{daniely2015strongly}. We illustrate the relations between these regret notions in Figure \ref{fig:flowchart}.

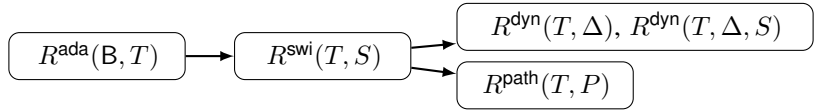
\begin{figure}[htbp]
\centering        
\begin{tikzpicture}[
    node distance=3cm,
    block/.style={rectangle, draw, fill=black!0, text width=6em, text centered, rounded corners, minimum height=1em},
    arrow/.style={->, >=latex, thick}
]
\node[block] (A) {$R^\ada(\sfB, T)$};
\node[block, right of=A] (B) {$R^\swi(T, S)$};
\node[block, above right=-0.25cm and 0.6cm of B, text width=4.5cm] (C) {$R^{\dyn}(T,\Delta)$, $R^\dyn(T, \Delta, S)$};
\node[block, below right=-0.25cm and 0.6cm of B] (D) {$R^\pat(T, P)$};

\draw[arrow] (A) -- (B);
\draw[arrow] (B) -- (C);
\draw[arrow] (B) -- (D);
\end{tikzpicture}
\caption{Conversions between regrets:   $R_1$$\tikzto$$R_2$   means that if regret  $R_1$  is sublinear in $T$ (or $\sfB$), then  regret  $R_2$   is also sublinear in $T$, 
see  Proposition~\ref{prop:conversions} for precise mathematical statements.}
\label{fig:flowchart}
\end{figure}

We conclude this section by detailing the main  notation and assumptions  used throughout the paper.

\paragraph{Notation} For $k\in\mathbb{N}^+$, we denote by $[k]$ the set of  positive integers  $\le k$. 
 We denote  by $\|\cdot\|$ the Euclidean norm, $\mathbb{B}^d=\{\bu\in \bbR^d:  \|\bu\|\le 1\}$ the unit Euclidean ball, and $\Pi_{\Theta}(\bx) =\argmin_{\bw\in \Theta} \|\bw- \bx\| $ the Euclidean projection of $\bx$ to $\Theta$. We use $ a\vee b\equiv \max(a,b)$ and $a\wedge b\equiv \min(a, b)$. 
%
    If $A,B$ depend on $T$, we use  $A=\cO(B)$  (resp.\@   $A=\Omega(B)$) 
    when there exists $c>0$ s.t.\@ $A\leq cB$ 
    (resp.\@ $\geq$) 
    with  $c$  independent of $ T, d, S, \Delta$ and  $P$. To hide polylogarithmic factors in $T$, we use interchangeably  $A=\wt \cO(B)$ and $A\lesssim B$  
    (resp.\@  $A=\wt \Omega(B)$ and $A\gtrsim B$),
    e.g., $A\leq T\log T\Longrightarrow A=\wt \cO(T)$. Moreover, $A=o(B)$ means $A/B\to 0$ as $T\to \infty$.

\paragraph{Assumptions}
For simplicity, we assume that the time horizon $T$ is known in advance; the case of unknown $T$ can be handled using the standard doubling trick \cite{cesaBianchi1997howto}.
 For some $\sigma>0$, the noise variables $(\xi_t)_{t=1}^T$ are $\sigma$-sub-Gaussian. 
For all  $t\in[T]$,   $\max_{\bx\in\com}|f_t(\bx)|\leq 1$. We consider two cases: (i)   general  convex losses $f_t$, where we assume Lipschitz continuity with constant $K$, and (ii)  the special   strongly-convex case, where we assume $\beta$-smoothness.
  The  domain $\Theta$ is assumed to contain a ball of radius $r$ for some constant $r>0$, and has a bounded diameter $\text{diam}(\Theta) \coloneqq \sup\{\|\bx-\bw\| : \bx,\bw \in \Theta\} \le D$  for some constant $D>0$. 

\subsection{Main contributions}\label{sec:contributions}

Existing works on non-stationary  Bandit Convex Optimization   study different aspects of the problem in isolation:   
\cite{besbes2015non, wang2025adaptivity}  focus on dynamic regret $ R^\dyn(T, \Delta)$, while  \cite{zhao2021bandit,chen2018bandit} address  path-length regret $R^\pat(T, P)$. The present work aims to systematically unify and extend previously  scattered results,  establishing a complete picture of the  state-of-the-art regret bounds w.r.t.\@ all three non-stationarity measures $S, \Delta$ and $P$.

Our first contribution  is a polynomial-time  algorithm called  Tilted Exponentially Weighted Average with Sleeping Experts (TEWA-SE), which we design by adapting a series of works  from  online convex optimization 
 \cite{vanerven2016metagrad, wang2020adaptivity, zhang2021dualadaptivity} to the bandit setting with  zeroth-order feedback. 
 It addresses  the absence of gradient information  by employing the randomized perturbation technique  from \cite{flaxman2004online, kleinberg2004nearly} to estimate  gradients, combined with the design of quadratic surrogate loss functions depending on a uniform upper bound on the norm of the gradient estimates.

Following \cite{vanerven2016metagrad, wang2020adaptivity, zhang2021dualadaptivity}, TEWA-SE runs multiple expert algorithms  with different learning rates in parallel, and combines them using  a tilted exponentially weighted average. This  allows TEWA-SE to adapt to the  curvature  of the loss function $f_t$'s without prior knowledge of parameters such as the strong-convexity parameter. For a given interval length $\sfB$, an appropriately tuned TEWA-SE simultaneously 
achieves an adaptive regret of the order $\sqrt{d}\sfB^{\frac{3}{4}}$ for general convex losses and $ d\sqrt{\sfB}$ for strongly-convex losses  (Theorem~\ref{thm:interval_regret}).
Consequently, for a known $S$, we prove that an optimal tuning of TEWA-SE 
yields a switching regret upper bound of  order  $\sqrt{d}S^{\frac{1}{4}}T^{\frac{3}{4}}$  for general convex losses (Corollary~\ref{cor:TEWA_R_swi_known}). In the same result, we further prove that if the losses are strongly convex, and that $\Delta$ is known and incorporated in the tuning of TEWA-SE, the algorithm simultaneously satisfies a $\min\big\{d\sqrt{ST}, d^{\frac{2}{3}}\Delta^{\frac{1}{3}}T^{\frac{2}{3}}\big\}$ dynamic regret bound.
Importantly, TEWA-SE does \textit{not} need to know the actual  strong-convexity parameter, inheriting the adaptivity properties of the framework developed in \cite{vanerven2016metagrad, zhang2021dualadaptivity, wang2020adaptivity}. 
We prove that this dynamic regret upper bound is minimax-optimal in $T, d, S$ and $\Delta$ by establishing a matching lower bound (Theorem~\ref{thm:lowerbound}). Finally, still for strongly-convex losses, we prove that TEWA-SE  can also achieve a  path-length regret of the order $d^{\frac{2}{3}}P^{\frac{1}{3}} T^{\frac{2}{3}}$ when $P$ is known. 
We summarize these results in Table~\ref{tab:summary}. To overcome the restriction of knowing $S,\Delta$ and $P$ to optimally tune TEWA-SE, we also analyze a variant equipped with the Bandit-over-Bandit framework \cite{cheung19bob}. 

\begin{table}[htbp]
    \caption{Regret bounds we obtain for $R^\swi(T, S)$, $R^\dyn(T, \Delta)$ and $ R^\pat(T, P)$, respectively,  for algorithms tuned with known $S,\Delta$ and $P$  (polylogarithmic factors  omitted). Straight underlines indicate  minimax-optimal rates. A wavy underline indicates the result is either new  to the literature (strongly-convex case) or improves on the best-known $P^{\frac{1}{4}}T^{\frac{3}{4}}$ rate \cite{zhao2021bandit} (general convex case). }
    \vspace{5pt}
    \label{tab:summary}
    \centering
    \begin{tabular}{ccc}
        \toprule
          & TEWA-SE (Algorithm \ref{alg:metagrad_lean})&   cExO (Algorithm \ref{alg:convex}) \\
        \midrule
       Convex& $ \sqrt{d}S^{\frac{1}{4}}T^{\frac{3}{4}},\; d^{\frac{2}{5}}\Delta^{\frac{1}{5}} T^{\frac{4}{5}}, \; d^{\frac{2}{5}}P^{\frac{1}{5}} T^{\frac{4}{5}} $ & \rule{0pt}{13pt}\multirow{2}{*}{ $d^\frac{5}{2}\underline{\sqrt{ST}},\; d^{\frac{5}{3}}\underline{\Delta^{\frac{1}{3}}T^{\frac{2}{3}}}, d^{\frac{5}{3}} \uwave{P^{\frac{1}{3}}T^{\frac{2}{3}}}$ }\\
        \rule{0pt}{13pt}Strongly convex & $ \underline{d\sqrt{ST}},\; \underline{d^{\frac{2}{3}}\Delta^{\frac{1}{3}} T^{\frac{2}{3}}},\;  \uwave{d^{\frac{2}{3}}P^{\frac{1}{3}} T^{\frac{2}{3}}}$ &  \\
        \bottomrule
    \end{tabular}
\end{table}


For general convex losses with known $S, \Delta$ and $P$, TEWA-SE achieves a suboptimal $T^{\frac{3}{4}}$ rate (Corollary \ref{cor:TEWA_R_swi_known}), matching the rates in  similar analysis for the static regret \cite{flaxman2004online, kleinberg2004nearly}. Thus, the second contribution of this work is the clipped Exploration by Optimization (cExO) algorithm with improved guarantees for this setting, 
 which  uses  exponential weights   on  a  discretized action space $\Theta$  with clipping \cite{lattimore2021mirror}. For a given interval length $\sfB$, this algorithm with an optimally tuned learning rate  w.r.t.\@ $\sfB$ attains an order $d^{\frac{5}{2}}\sqrt{\sfB}$ adaptive regret (Theorem~\ref{thm:ExO_adaptive_regret}).  When $S, \Delta$ and $P$ are known beforehand, this algorithm with an optimally tuned learning rate achieves the minimax-optimal dynamic regret w.r.t.\@ $ S $ and $ \Delta$ simultaneously, and attains a $P^{\frac{1}{3}}T^{\frac{2}{3}}$ path-length regret  (Corollary~\ref{cor:convex}), improving on the previous best $P^{\frac{1}{4}}T^{\frac{3}{4}}$  \cite{zhao2021bandit}.  While this algorithm is not polynomial-time computable  and has  suboptimal rates w.r.t.\@ the problem dimension $d$,   it provides  insights that may guide future research toward developing efficient algorithms with optimal guarantees for the  convex case.


\subsection{Related work}\label{sec:related_work}

The literature on \textit{Bandit Convex Optimization} (BCO) has  traditionally focused on minimizing the static regret, see the recent monograph \cite{lattimore24introbco} for a comprehensive historical overview. Both convex and strongly convex objective functions have attracted significant attention, beginning with the foundational work of \cite{flaxman2004online} and further developed in subsequent studies such as \cite{agarwal2011stochastic, saha2011improved, fokkema2024online, bubeck2021kernel,  hazan2014bandit, suggala2024second, suggala2021efficient, lattimore2023second}. 
Minimizing regret in non-stationary environments has only received attention more recently \cite{zhao2021bandit,besbes2015non, wang2025adaptivity,chen2018bandit}, see also \cite[Section 2.4]{lattimore24introbco} for an overview for this topic. Among these works,   \cite{besbes2015non, wang2025adaptivity}   study $ R^\dyn(T, \Delta)$, whereas \cite{zhao2021bandit,chen2018bandit} focus on $R^\pat(T, P)$. As we explained above (and formalize in Section~\ref{sec:conversions}), the switching regret $R^\swi(T, S)$  can induce   guarantees on both $R^{\dyn}(T, \Delta)$ and $ R^\pat(T, P)$, but   the reverse does not necessarily hold. Therefore, the results in these works cannot be readily extended to provide regret guarantees  w.r.t.\@ all three  measures $S, \Delta$  and $P$.  

Minimizing regret in  environments with non-stationarity measures such as $S, \Delta$ and $P$  have been addressed with greater depth  in  \textit{Online Convex Optimization} (OCO),  where   the learner has direct access to   gradient information  and can query the gradient or function value at multiple points of the loss function per round. The state-of-the-art algorithm with optimal adaptive regret guarantees is MetaGrad with sleeping experts \cite{zhang2021dualadaptivity}, which queries only one gradient per round, and adapts to  curvature information of the loss function such as strong-convexity when   available. Our polynomial-time algorithm TEWA-SE  builds upon \cite{zhang2021dualadaptivity} and its precursors \cite{vanerven2016metagrad, wang2020adaptivity}, adapting this approach to BCO by  replacing the exact gradient per round with an approximate  gradient estimate, and by  designing a  quadratic surrogate loss. 
The approach in \cite{zhang2021dualadaptivity} follows a long line of successive developments in OCO from expert tracking methods    \cite{herbster1998tracking, bousquet2002tracking, littlestone1994weighted, cesaBianchi1997howto, Vovk1998game, freud1997predictors_specialise, koolen2015squint} to the study of adaptive regret  \cite{HazanS09, adamskiy2016closer,  daniely2015strongly, jun2017online,cutkosky2020parameter, lu2022adaptive, baby2022optimal}, with recent advances \cite{wang2018minimizing, zhao2022efficient, zhang2021dualadaptivity, zhang2018adaptive, yang2024universal} reducing the query complexity from   $\cO(\log T)$ to $\cO(1)$ per round, while achieving optimal adaptive regret or dynamic regret.  
The adaptivity of \cite{zhang2021dualadaptivity} directly inherits from  MetaGrad \cite{vanerven2016metagrad} and its extension  \cite{wang2020adaptivity}, which themselves  build on  earlier adaptive methods \cite{hazan2007adaptive_ogd,do2009proximal}.

For general  convex functions, the approach of substituting a one-point  gradient estimate  for the exact gradient in each round of an OCO algorithm   often yields suboptimal $T^{\frac{3}{4}}$ rates, both in static regret  \cite{flaxman2004online, kleinberg2004nearly} and dynamic regret  \cite{zhao2021bandit, chen2018bandit}; see also our Corollary~\ref{cor:TEWA_R_swi_known}. 
A series of breakthroughs \cite{russo2014learning, bubeck2015bandit,bubeck2018exploratory,bubeck2021kernel, lattimore2021mirror, lattimore2020improved}  indicate that 
$\sqrt{T}$ rates (up to logarithms) are attainable for static regret, at the cost of a higher dependency on $d$.
Our cExO algorithm follows this line of work, using exponential weights  on a discretized action space \cite{lattimore2021mirror}.
By playing inside a clipped domain, we transform the algorithm from one with $\sqrt{T}$ static regret  into one  with $\sqrt{\sfB}$ adaptive regret  (modulo logarithms) for intervals of length $\le\sfB$, which in turn leads to regret guarantees w.r.t.\@ $S, \Delta$ and $P$.

Finally, we  mention that non-stationarity has been widely studied in the \textit{Multi-Armed Bandit} (MAB) literature. A substantial body of work has focused on adapting standard policies—such as UCB \cite{auer2002finite, KL_UCB}, EXP3 \cite{auer2002nonstochastic}, and Thompson Sampling \cite{TS_1933, TS12AG, TS12kaufmann}—to perform effectively under non-stationarity. These adaptations often employ mechanisms to discard outdated information, either \textit{actively} (e.g., change-detection methods \cite{LiuLS18, Cao19, BessonKMS22, auer2019adswitch, WeiL21, SukK22}), or \textit{passively} (e.g., discounted rewards \cite{garivier2008upper, RussacVC19}, sliding windows \cite{TrovoRG20, baudry21ns}, or scheduled restarts \cite{besbes2014stochastic}), but are not straightforward to adapt to BCO.


\subsection{Conversions between different regret definitions}\label{sec:conversions}
We present the key conversions  between different regret notions, illustrated in Figure~\ref{fig:flowchart} above. 
Using the definition of $R^\dyn(T)$ in \eqref{eq:DynReg}, we overload notation slightly to define
\begin{align}\label{eq:R_dyn_T_Delta}
    R^\dyn(T, \Delta)&\coloneqq 
    \sup_{f_{1:T}:\Delta(f_{1:T})\le \Delta} \sum_{t=1}^T\Exp\Big[f_t(\bz_t) - \min_{\bz\in \Theta} f_t(\bz)\Big] \,,
\end{align}
and $R^{\dyn}(T, \Delta, S)$  additionally constrains 
$1 + \sum_{t=2}^T \min_{(\bz_t^*,\bz_{t-1}^*) \in (\mc{Z}_t^*,\mc{Z}_{t-1}^*)} \mathbf{1}(\bz_t^* \neq \bz_{t-1}^*)\le S$ 
where $\mc{Z}_t^*\coloneqq \argmin_{\bz\in \Theta} f_t(\bz)  $ for all $ t\in[T]$. In Proposition~\ref{prop:conversions}, we  show how the adaptive regret  $R^\ada(\sfB, T)$ can be used to bound the switching regret $R^\swi(T, S)$, which in turn can be used to bound the dynamic regret $R^{\dyn}(T, \Delta, S)$ and path-length regret $R^\pat(T, P)$. Consequently,   $R^\ada(\sfB, T)$ and $R^{\swi}(T, S)$ are the primary objects to analyze.

\begin{restatable}{proposition}{propconv}
\label{prop:conversions}
Suppose that an algorithm can be calibrated to satisfy $R^\ada(\sfB, T)\le C \sfB^\kappa$, for any interval length $\sfB \in [T]$, for some factor $C>0$ that is at most polynomial in $d$ and $\log(T)$,  and $\kappa\in [0, 1)$. 

Then, for any $S,S_\Delta, S_P \in [T]$, an appropriate choice of $\sfB$ yields the following regret guarantees:
\begin{align*}
&\textbf{Switching:}\quad \sfB=\ceil{\tfrac{T}{S}}\text{ guarantees that }  
        R^{\swi} (T, S) \leq 2^{1+\kappa}C S^{1-\kappa} T^\kappa\,.\\
&\textbf{Dynamic:}\quad \sfB=\ceil{\tfrac{T}{S}}\vee\ceil{\tfrac{T}{S_\Delta}} \text{ yields } R^{\dyn}(T, \Delta, S) \le R^{\swi} (T, S) \wedge  \left(R^{\swi} (T, S_\Delta) + \Delta\ceil{\tfrac{ T }{S_\Delta}}\right).\\
&\textbf{Path-length:}\quad \sfB=\ceil{\tfrac{T}{S_P}} \text{ ensures that }
    R^{\pat}(T,P) \leq  R^{\swi} (T, S_P)+  \tfrac{P}{r}\cdot \ceil{\tfrac{T}{S_P}}.
\end{align*}

\end{restatable}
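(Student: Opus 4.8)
The plan is to prove each of the three conversions separately, in each case by partitioning the time horizon $[T]$ into consecutive blocks of length at most $\sfB$ and applying the adaptive regret hypothesis $R^\ada(\sfB,T)\le C\sfB^\kappa$ on each block, then summing.

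\textbf{Switching regret.} First I would fix a comparator sequence $\bu_{1:T}$ with $S(\bu_{1:T})\le S$. The switches partition $[T]$ into at most $S$ maximal intervals on each of which $\bu_t$ is constant; call this value $\bu^{(j)}$ on the $j$-th interval. Each such interval has some length $\ell_j$ with $\sum_j \ell_j = T$, but an interval could itself be longer than $\sfB$, so I would further subdivide each interval into sub-blocks of length at most $\sfB=\ceil{T/S}$. On each sub-block $[p,q]$ of length $\le\sfB$, the comparator is the single constant $\bu^{(j)}$, so the per-block regret $\sum_{t=p}^q\Exp[f_t(\bz_t)-f_t(\bu^{(j)})]$ is bounded by $R^\ada(\sfB,T)\le C\sfB^\kappa$ by definition of adaptive regret (taking the supremum over $\bu$ covers $\bu^{(j)}$). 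The total number of sub-blocks is at most $S + \sum_j \lfloor \ell_j/\sfB\rfloor \le S + T/\sfB \le 2S$, using $\sfB=\ceil{T/S}\ge T/S$. Summing gives $R^\swi(T,S)\le 2S\cdot C\sfB^\kappa \le 2CS(2T/S)^\kappa = 2^{1+\kappa}CS^{1-\kappa}T^\kappa$, where I bound $\sfB=\ceil{T/S}\le 2T/S$. The main care here is the counting of sub-blocks and the ceiling bound $\ceil{T/S}\le 2T/S$ (valid since $T/S\ge 1$).

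\textbf{Dynamic regret.} The first bound $R^\dyn(T,\Delta,S)\le R^\swi(T,S)$ is essentially definitional: the comparator class defining $R^\dyn(T,\Delta,S)$ uses the per-round minimizers $\bz_t^*\in\mc{Z}_t^*$, and the constraint forces the number of switches among these minimizers to be $\le S$, so this is a special case of a switching comparator sequence. For the second bound, I would take $\bu_t$ constant on each block of length $\ceil{T/S_\Delta}$ equal to the minimizer of $f$ at one reference point in the block; the switching part contributes $R^\swi(T,S_\Delta)$ (there are $\le S_\Delta$ blocks), while the error from comparing $\min_\bz f_t(\bz)$ against $f_t$ at the fixed reference minimizer within a block is controlled by the total variation: across a block the minimum value of the loss can drift by at most the accumulated $\max_\bz|f_t(\bz)-f_{t-1}(\bz)|$, and summing these telescoping-type terms over all blocks yields the additive $\Delta\ceil{T/S_\Delta}$ penalty. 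Taking the better of the two bounds and choosing $\sfB$ to accommodate both gives the stated result.

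\textbf{Path-length regret.} Here I would again block $[T]$ into pieces of length $\ceil{T/S_P}$ and use a constant comparator per block. The switching cost is $R^\swi(T,S_P)$. The approximation error comes from replacing the moving comparator $\bu_t$ (with $\sum_t\|\bu_t-\bu_{t-1}\|\le P$) by a fixed representative within each block: using the boundedness/Lipschitz structure and the fact that $\Theta$ contains a ball of radius $r$, the deviation of $f_t(\bu_t)$ from $f_t$ at the block representative is controlled by the movement of $\bu_t$ within the block, scaled by $1/r$. Summing the within-block path-length over all blocks recovers the full path-length $P$ weighted by the block length, giving the $\tfrac{P}{r}\ceil{T/S_P}$ term. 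The $1/r$ factor is where the interior-ball assumption enters, likely via a rescaling/shrinking argument that maps the comparator into a slightly contracted domain so that function values can be compared at a shared point.

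\emph{Main obstacle.} The switching argument is a clean counting exercise, so the real work is in the dynamic and path-length conversions, specifically in quantifying the approximation error from freezing the comparator within each block. The trickiest step is the path-length bound, where I expect the $1/r$ factor and the precise way the interior-ball assumption converts geometric movement of the comparator into a bound on the loss gap to require the most care; I would want to verify that the within-block path-lengths sum correctly to $P$ and that the smoothness/boundedness assumptions deliver a clean Lipschitz-type control without extra dimension-dependent constants.
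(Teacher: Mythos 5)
Your proposal follows essentially the same route as the paper's proof: block the horizon into at most $2S$ intervals of length $\le \sfB$ and apply the adaptive regret bound per block for the switching conversion, then freeze the comparator on each block with the error controlled by within-block total variation (dynamic) or by an interior-ball argument (path-length). The ``rescaling/shrinking'' mechanism you anticipated for the $1/r$ factor is exactly the paper's construction: it writes the block representative as $\bv_s = \alpha_0 \bu_t + (1-\alpha_0)\bu_t'$ with $\bu_t'\in\Theta$ guaranteed by the $r$-ball, and uses convexity plus $|f_t|\le 1$ with $\alpha_0 = \tfrac{2r}{P_s+2r}$ to obtain a per-round error of $P_s/r$.
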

The proof is provided  in Appendix~\ref{app::conversion}. We note that the reduction  from $R^\pat(T, P)$ to $R^\swi(T, S)$   in Proposition \ref{prop:conversions} is new and  employs   simple  geometric arguments (see  Lemma \ref{lem: swi to dyn and path simple} in Appendix~\ref{app::conversion}). This reduction  simplifies the analysis of  $R^\pat(T, P)$, though it can yield  slightly looser bounds on $R^\pat(T, P)$ than a  direct analysis, as discussed in \cite{zhang2020minimizing}.




\section{The TEWA-SE  algorithm}\label{sec:AdaptiveReg}
In this section, we develop a polynomial-time algorithm called  Tilted Exponentially Weighted Average  with Sleeping Experts (TEWA-SE, Algorithm~\ref{alg:metagrad_lean}), building on the  two-layer structure of previous experts-based algorithms \cite{HazanS09,  daniely2015strongly, jun2017online}. 
Each expert in TEWA-SE is uniquely defined by its lifetime  and learning rate.   We denote the active experts at time $t$ by $E_1, E_2, \dots, E_{n_t}$, where $E_i$ operates over   interval  $I_i$ with   learning rate $\eta_i$.  In each round $t$, the active  experts  each propose an action, denoted by $\bx_{t, I_i}^{\eta_i}$, and a meta-algorithm  aggregates them into  a single meta-action $\bx_t$ by computing their tilted exponentially weighted average  \cite{vanerven2016metagrad, zhang2021dualadaptivity}, see line~\ref{line:update_xt_lean} in the pseudo-code. Then the algorithm receives  a noisy evaluation of   $f_t$  at $\bx_t$ and constructs an approximate  \textit{gradient estimate} 
$\bg_t\in \bbR^d$ of $f_t$ at $\bx_t$. 
Both  $\bx_t$ and $\bg_t$ are shared with all experts, who update  their actions  via online gradient descent on their \textit{surrogate} loss functions defined using $ \bx_t$ and $\bg_t$. 


TEWA-SE employs the  Geometric Covering scheme from \cite{daniely2015strongly, zhang2021dualadaptivity} to schedule  experts across different time intervals, and the exponential grid from \cite{zhang2021dualadaptivity, vanerven2016metagrad} to assign  varied learning rates to the experts. These deterministic  schemes ensure that  only a \textit{logarithmic} number of  experts are active per round, maintaining computational efficiency.
Intuitively, the meta-algorithm  achieves low adaptive regret on the original loss function because, for each subinterval of times, there exists at least one  individual expert with low static regret on their surrogate loss functions on this subinterval. This is guaranteed by the careful design of the exponential grid of learning rates. 
While full details of TEWA-SE is deferred to Appendix~\ref{sec:details_ada_TEWA}, we highlight below the distinctions between this paper and  prior works.


\begin{algorithm}
\caption{ Tilted Exponentially Weighted Average with Sleeping Experts (TEWA-SE)}
\label{alg:metagrad_lean}
\textbf{Input:} $d,  T, \sfB$,  $h=\min\big(\sqrt{d}\sfB^{-\frac{1}{4}}, r\big)$, 
$\tilde{\Theta}=\{\bu\in \Theta:\bu+h\mathbb{B}^d\subset \Theta\}$, $G$ as in  \eqref{eq:def_hat_G}, expert algorithm $E(I, \eta)$ defined in Algorithm~\ref{alg:ogd_lean}, and $(n_t)_{t\in [T]}$ and  $(I_i, \eta_i)_{i\in [n_t]}\, \forall t\in [T]$ 
\vspace{0.1cm} 
\begin{algorithmic}[1]
\For{$t = 1, 2, \dots,T$}
\For{$E_i\equiv E_i(I_i, \eta_i)\in \{E_1, E_2, \dots, E_{n_t}\}$} \Comment{ $n_t$ experts active at $t$}
\State Receive  action $\bx_{t, I_i}^{\eta_i}$ from expert $E_i$
\If{$\min\{\tau: \tau\in I_i\}=t$}
initialize $L_{t-1, I_i}^{\eta_i}=0 $,  clipped domain $\tilde{\Theta}$ and parameter $G$ \label{line:init_tewa}
\EndIf
\EndFor
\State Set meta-action as $\bx_t = \sum_{i=1}^{n_t} \eta_i\exp(-L_{t-1, I_i}^{\eta_i}) \bx_{t, I_i}^{\eta_i} /\sum_{j=1}^{n_t}\eta_j \exp(-L_{t-1, I_j}^{\eta_j}) $\label{line:update_xt_lean}
\State Sample $\bzeta_t$ uniformly from $\partial \mathbb{B}^d$ 
\State   Query  point $\bz_t=\bx_t +h \bzeta_t $ to obtain $y_t = f_t(\bz_t)+\xi_t$
\State Construct gradient estimate $\bg_t = (d/h) y_t \bzeta_t$

\For{$i=1, 2, \dots, n_t$}
\State Send   meta-action  $\bx_t$ and $\bg_t$ to $E_i$
\State Increment cumulative  loss $ L_{t,I_i}^{\eta_i} = L_{t-1,I_i}^{\eta_i}+\ell_{t}^{\eta_i}(\bx_{t, I_i}^{\eta_i})$ \Comment{$\ell_t^{\eta_i}(\cdot)$ depends on $\bx_t$ and $\bg_t$}
\EndFor
\EndFor
\end{algorithmic}
\end{algorithm}

\paragraph{Construction of one-point gradient estimate}
For a fixed  parameter $h\in (0, r)$, we define the clipped domain  $\tilde{\Theta}=\{\bu\in \Theta: \bu + h\mathbb{B}^d\subset \Theta\}$, where $h< r$  ensures $\tilde{\Theta}\neq \emptyset$. 
In each round $t$, we select a meta-action $\bx_t\in \tilde{\Theta}$ and query the function at a perturbed point  $\bx_t+h\bzeta_t$, receiving noisy  feedback $y_t=f_t(\bx_t+h\bzeta_t)+\xi_t$, where $ \bzeta_t\in \bbR^d $ is sampled uniformly from the unit sphere $\partial \mathbb{B}^d$.  This allows us    to construct the gradient estimate
$\bg_t= (d/h)y_t\bzeta_t$. 
 As implied by \cite[Lemma 1]{flaxman2004online}, the vector $\bg_t$ is an unbiased gradient estimate  of a spherically smoothed version of $f_t$ at $\bx_t$, satisfying 
\begin{align}
    \Exp[\bg_t| \bx_t]=\nabla \hat{f}_t(\bx_t)\,, \quad \text{where}\quad \hat{f}_t(\bx)=\Exp\left[f_t(\bx+h \tilde{\bzeta})\right] \;\forall\, \bx\in \tilde{\Theta}\,, 
\end{align}
with  $\tilde{\bzeta}$  distributed uniformly on the unit ball $\mathbb{B}^d$. Importantly, $\hat{f}_t$ inherits the convexity properties of $f_t$  \cite[Lemmas A.2--A.3]{akhavan2020exploiting}.
Our approach differs from related works in OCO \cite{vanerven2016metagrad, wang2020adaptivity, zhang2021dualadaptivity, wang2018minimizing, zhao2022efficient} that use exact gradients in two key ways: i) in each round, we query the perturbed point $\bz_t=\bx_t+h\bzeta_t$ rather than $\bx_t$, accumulating regret at the perturbed point, and ii) we  constrain $\bx_t$  inside the clipped domain $\tilde{\Theta}$ to ensure all perturbed  $\bz_t$ remain feasible.
In our setting, under the high probability event $\Lambda_T = \big\lbrace|\xi_t| \le 2\sigma \sqrt{\log (T+1)}, \; \forall t\in [T]\big\rbrace$, 
  we have 
    \begin{align}
\|\bg_t\|
      &=(d/h)|f_t(\bx_t + h \bzeta_t) + \xi_t|\le (d/h) \big(1+ 2\sigma \sqrt{\log (T+1)}\big)=: G,\quad \forall t\in[T].\label{eq:def_hat_G}
    \end{align}
This implies a fundamental  tradeoff in selecting the smoothing (and clipping) parameter $h$: larger values reduce $G$ (and  the variance of $\bg_t$), but increase both the approximation error  between $\hat{f}_t$ and $f_t$ and the  error due to clipping, while smaller values reduce  bias at the cost of a higher variance in $\bg_t$. In Theorem~\ref{thm:interval_regret} and Corollary~\ref{cor:TEWA_R_swi_known}, we establish the optimal  $h$  and the resulting   regret guarantees.

    
\begin{algorithm}
\caption{Expert algorithm $  E(I, \eta)$: projected online gradient descent (OGD)} \label{alg:ogd_lean}
\textbf{Input:} 
$I=[r,s]$,   $\eta$,  $G$, clipped domain $\tilde{\Theta}$, and  surrogate loss  $ \ell_t^\eta (\cdot )$ defined in \eqref{eq:surrogate_loss} $\forall  t\in \bbN^+$
\\
\textbf{Initialize:} $\bx_{r, I}^\eta$ be any point in $\tilde{\Theta}$
\begin{algorithmic}[1]
\For{$t=r, r+1, \dots, s$}
\State Send action $\bx_{t, I}^\eta $ to Algorithm~\ref{alg:metagrad_lean}
\State Receive meta-action $\bx_t$ and   $\bg_t$  from Algorithm~\ref{alg:metagrad_lean}
\State \label{line:base_update_lean} Update  $\bx_{t+1, I}^\eta=\Pi_{\tilde{\Theta}} \big(\bx_{t, I}^\eta -\mu_t \nabla \ell_t^\eta (\bx_{t, I}^\eta)\big)$,  where $ \mu_t=1/(2\eta^2G^2 (t-r+1))$
\EndFor
    \end{algorithmic}
\end{algorithm}

\paragraph{Design of expert algorithms and  surrogate losses}

We choose  projected online gradient descent (OGD) as the expert algorithms (Algorithm~\ref{alg:ogd_lean}), i.e., each expert $ E(I, \eta)$ runs OGD  during its lifetime $I$. 
In  the full-information setting, 
where   experts  observe $f_t$ and   gradients are evaluated at all of their actions, each expert  could simply run OGD on the true loss functions. In contrast, for the bandit setting, with  only one  gradient estimate $\bg_t$ of  the smoothed loss $\hat{f}_t$  per round, we need to construct  surrogate losses for the experts.  
The simplest option is the linear surrogate loss 
$\ell_t(\bx) = -\bg_t^\top(\bx_t-\bx)$, but this fails to leverage curvature information and leads to a large $\wt \cO(\sqrt{|I|}) $ static regret for each expert, ultimately yielding  linear adaptive regret. 


To address these limitations,  inspired by \cite{wang2020adaptivity, vanerven2016metagrad, zhang2021dualadaptivity},  we design the following strongly-convex surrogate loss 
$\ell_t^\eta:\bbR^d\to \bbR$: 
\begin{equation}
    \ell_t^\eta (\bx)= -\eta \bg_t^\top (\bx_t-\bx) + \eta^2 G^2 \|\bx_t-\bx\|^2\,,\quad\forall \bx\in \bbR^d,\label{eq:surrogate_loss}
\end{equation}
where  $G$ is the upper bound \eqref{eq:def_hat_G} on $\|\bg_t\|$, and $\eta$ is the learning rate of the expert. We highlight that our choice of the quadratic term in \eqref{eq:surrogate_loss}  differs from the $\eta^2\|\bg_t\|^2\|\bx_t-\bx\|^2$ and $\eta^2(\bg_t^\top(\bx_t-\bx))^2$ in \cite{zhang2021dualadaptivity} and \cite{vanerven2016metagrad}. The latter necessitates  an additional  condition relating $\Exp[\|\bg_t\|]$ and $\Exp[\|\bg_t\|^2]$ (or $\Exp[\bg_t \bg_t^\top]$) to be satisfied in the analysis, see e.g., \cite[Theorem 2]{vanerven2016metagrad}, and may yield suboptimal rates in dimension $d$ for strongly-convex losses, similar to \cite{vanerven2016metagrad}. 
Our choice of the quadratic term, similar to  \cite{wang2020adaptivity},  eliminates  these limitations and  simplifies the proof. 

For a  comparator $\bu\in \Theta$, \eqref{eq:surrogate_loss} implies that   the linearized  regret  associated with $\hat{f}_t$ on interval $I$  can be bounded as:
\begin{align}\label{eq:surrogate_tradeoff}
\sum_{t\in I}\left\langle \Exp[\bg_t| \bx_t, \Lambda_T], \bx_t-\bu\right\rangle\le\tfrac{1}{\eta} \underbrace{\sum_{t\in I}  \Exp\big[ \ell_t^\eta(\bx_t)-\ell_t^\eta (\bu)\mid \bx_t, \Lambda_T\big]}_{\coloneqq\textsf{A}} + \eta G^2 \sum_{t\in I}  \|\bx_t-\bu\|^2\,.
\end{align} 
 Due to the strong-convexity of $\ell_t^\eta$, each expert attains 
 only an $\cO(\log |I|)$  static regret under OGD  with an optimally tuned step size $\mu_t$ (see line~\ref{line:base_update_lean} of Algorithm~\ref{alg:ogd_lean}, and Lemma~\ref{lem:base_regret} in  Appendix \ref{sec:regret_GC_interval}). This ensures term \textsf{A} above 
 is  also of $\cO(\log |I|)$.  
By the convexity  of $\hat{f}_t$ we have 
\begin{align}
\sum_{t\in I}\Exp\big[\hat{f}_t(\bx_t)- \hat{f}_t(\bu)\mid \Lambda_T\big]
 &\le 
 \Exp\Big[\tfrac{1}{\eta}\textsf{A}+ (\eta G^2-\tfrac{\alpha}{2}) \sum_{t\in I}  \|\bx_t-\bu\|^2 \,\big|\, \Lambda_T\Big]\,,\label{eq:hat_ft_upper_bound}
\end{align} 
where $\alpha=0$ for  general convex $\hat{f}_t$ (and $f_t$), and $\alpha>0$ for  strongly-convex. 
Since both $\alpha$ and $\sum_{t\in I}\|\bx_t-\bu\|^2$ are  unknown a priori, 
we use a deterministic exponential grid of $\eta$ values \cite{daniely2015strongly, zhang2021dualadaptivity}, ensuring  at least one expert covering $I$ effectively minimize the RHS of \eqref{eq:hat_ft_upper_bound},  ultimately yielding a sublinear adaptive regret w.r.t.\@ $f_t$. 
We present this result in the following theorem.

\begin{restatable}{theorem}{thmTEWAada}\label{thm:interval_regret}
For any $T\in \bbN^+$ and $\sfB\in [T]$, 
Algorithm~\ref{alg:metagrad_lean} with $h=\min(\sqrt{d}\sfB^{-\frac{1}{4}}, r)$ 
satisfies 
\begin{align}\label{eq:thm_adaptive_regret}
R^\ada(\sfB,T)
\lesssim \sqrt{d}\sfB^{\frac{3}{4}} +   d^2 ,
\end{align}
and if  $f_t$ is $\alpha$-strongly-convex with $\argmin_{\bx\in \bbR^d} f_t(\bx)\in \Theta$ for all $t\in [T]$,\footnote{
      The assumption that  loss minimizers lie inside  $\Theta$  is common in  zeroth-order optimization, see e.g., \cite{shamir2013complexity, besbes2015non,  ito2020optimal}. Without it, our  upper bound analysis 
 would have an extra term depending on  the gradients at the  minimizers. 
 } it furthermore holds that
\begin{align}\label{eq:thm_adaptive_regret_sc}
    R^{\ada}(\sfB, T)\lesssim  \tfrac{d}{\alpha}\sqrt{\sfB} +  \tfrac{1}{\alpha} d^2\,,
\end{align}
where $\lesssim$ conceals  polylogarithmic terms in $\sfB$ and $T$, independent of $d$ and $\alpha$.
\end{restatable}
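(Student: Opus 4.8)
Fix an interval $I=[p,q]$ with $|I|=q-p\le\sfB$ and a comparator $\bu\in\Theta$; it suffices to bound $\sum_{t\in I}\Exp[f_t(\bz_t)-f_t(\bu)]$ uniformly. I would first condition on the event $\Lambda_T$, on which $\|\bg_t\|\le G$: since $|f_t|\le1$, a union bound over the sub-Gaussian tails gives $\mathbb P(\Lambda_T^c)\le 2/(T+1)$, so the complement contributes at most $2|I|\,\mathbb P(\Lambda_T^c)=\cO(1)$. On $\Lambda_T$, let $\bar\bu\in\tilde\Theta$ be $\bu$ contracted toward the center $\mathbf c$ of the inscribed ball, so $\|\bar\bu-\bu\|\le hD/r$, and decompose each summand as
\begin{align*}
f_t(\bz_t)-f_t(\bu)
&=\big[f_t(\bz_t)-f_t(\bx_t)\big]+\big[f_t(\bx_t)-\hat f_t(\bx_t)\big]+\big[\hat f_t(\bx_t)-\hat f_t(\bar\bu)\big]\\
&\quad+\big[\hat f_t(\bar\bu)-f_t(\bar\bu)\big]+\big[f_t(\bar\bu)-f_t(\bu)\big].
\end{align*}
The second bracket is nonpositive by Jensen ($\hat f_t\ge f_t$), the main term is $\hat f_t(\bx_t)-\hat f_t(\bar\bu)$, and the rest are error terms.

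\textbf{Error terms.} Using $\beta$-smoothness with $\Exp_{\bzeta_t}[\bzeta_t]=\mathbf{0}$, the perturbation bracket and the smoothing-bias bracket $\hat f_t(\bar\bu)-f_t(\bar\bu)$ are each $\cO(\beta h^2)$ per round, hence $\cO(\beta h^2|I|)$ after summation. The clipping bracket $f_t(\bar\bu)-f_t(\bu)$ is where the two curvature regimes diverge. For general convex losses I only use convexity, $f_t(\bar\bu)-f_t(\bu)\le(h/r)(f_t(\mathbf c)-f_t(\bu))\le2h/r$, a first-order $\cO(h|I|/r)$ contribution. For $\alpha$-strongly convex losses I instead route the comparison through minimizers: since $\argmin_\bx f_t\in\Theta$ the gradient there vanishes, so clipping a minimizer into $\tilde\Theta$ costs only the second-order $\cO(\beta h^2|I|)$ by a Taylor bound (this is the ``gradient at the minimizers'' contribution flagged in the footnote), and strong convexity is used to relate this to the single worst comparator. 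Substituting $h^2|I|\le h^2\sfB$ and $h|I|\le h\sfB$ turns these into $\cO(d\sqrt\sfB)$ (bias, via $h^2\sfB=d\sqrt\sfB$) and $\cO(\sqrt d\,\sfB^{3/4})$ (first-order clipping), which explains why the $\sfB^{3/4}$ term is absent under strong convexity.

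\textbf{Main term via the experts.} Here I invoke \eqref{eq:surrogate_tradeoff}--\eqref{eq:hat_ft_upper_bound}: since $\Exp[\bg_t\mid\bx_t]=\nabla\hat f_t(\bx_t)$ and $\hat f_t$ inherits ($\alpha$-strong) convexity, the main term is at most $\tfrac1\eta\Exp[\textsf{A}]+(\eta G^2-\tfrac\alpha2)\sum_{t\in I}\Exp[\|\bx_t-\bar\bu\|^2]$, where $\textsf{A}=\sum_{t\in I}(\ell_t^\eta(\bx_t)-\ell_t^\eta(\bar\bu))$ is the static surrogate regret of a single expert $E(I,\eta)$. I split $\textsf{A}$ into the meta-regret $\sum_{t\in I}(\ell_t^\eta(\bx_t)-\ell_t^\eta(\bx_{t,I}^\eta))$ and the expert's own OGD regret $\sum_{t\in I}(\ell_t^\eta(\bx_{t,I}^\eta)-\ell_t^\eta(\bar\bu))$. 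The latter is $\cO(\log|I|)$ because $\ell_t^\eta$ is $2\eta^2G^2$-strongly convex and the step $\mu_t$ is optimally tuned (Lemma~\ref{lem:base_regret}); the former is $\cO(\mathrm{polylog}\,T)$ from the tilted exponential-weights aggregation, using the exp-concavity of $\ell_t^\eta$ (ensured by $\eta GD=\cO(1)$) and the fact that the Geometric Covering reaches $I$ with only $\cO(\log|I|)$ intervals carrying sufficient prior weight. Thus $\textsf{A}=\cO(\mathrm{polylog}\,T)$.

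\textbf{Tuning $\eta$ and assembling.} The last step selects a learning rate from the exponential grid. For general convex losses ($\alpha=0$) I balance $\tfrac1\eta\textsf{A}$ against $\eta G^2\sum\|\bx_t-\bar\bu\|^2\le\eta G^2D^2|I|$, yielding $\cO(GD\sqrt{\sfB}\,)=\cO(\tfrac dh\sqrt\sfB)$ up to logarithms; for $\alpha$-strongly convex losses I pick the largest grid point with $\eta G^2\le\tfrac\alpha2$, annihilating the quadratic term and leaving $\cO(G^2/\alpha)=\cO(d^2/(\alpha h^2))$, the grid being geometric so that the chosen $\eta$ is within a constant factor of the ideal one regardless of the unknown $\alpha$. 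Finally I substitute $h=\min(\sqrt d\,\sfB^{-1/4},r)$ and treat the two regimes of the minimum: in the capped regime $\sfB\lesssim d^2$ the trivial bound $2|I|=\cO(d^2)$ already suffices, while the uncapped regime produces $\sqrt d\,\sfB^{3/4}+d\sqrt\sfB$ (convex) and $\tfrac d\alpha\sqrt\sfB$ (strongly convex), after absorbing $d\sqrt\sfB$ into $\tfrac d\alpha\sqrt\sfB$ using $\alpha=\cO(1)$ (a consequence of $|f_t|\le1$). The main obstacle I anticipate is twofold: establishing the $\cO(\mathrm{polylog})$ meta-regret of the tilted exponential-weights layer so that a \emph{single} run adapts to both $\alpha=0$ and $\alpha>0$ (the heart of the MetaGrad-style argument, where exp-concavity and $\eta GD=\cO(1)$ are used), and controlling the strongly-convex clipping at second order by reconciling the single worst-case comparator with the minimizer-based bound under the $\argmin\in\Theta$ assumption, which is what removes the $\sfB^{3/4}$ term.
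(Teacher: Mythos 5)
Your proposal is correct and, at the level of architecture, it \emph{is} the paper's proof: condition on $\Lambda_T$, decompose into perturbation, Jensen, main, and smoothing-bias terms, control the main term through the quadratic surrogate losses with $\cO(\log|I|)$ expert regret and polylogarithmic tilted-exponential-weights meta-regret assembled over the Geometric Covering, and finally substitute $h=\min(\sqrt{d}\,\sfB^{-1/4},r)$.

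The places where you diverge are minor but worth recording. (i) In the strongly-convex case you tune the analysis-side learning rate to annihilate the quadratic term (largest grid point with $\eta G^2\le\tfrac{\alpha}{2}$), whereas the paper keeps exactly the same grid selection as in the convex case and applies a post-hoc self-bounding step, $3G\sqrt{aQ}-\tfrac{\alpha}{2}Q\le\tfrac{18}{\alpha}G^2a$ uniformly in $Q=\sum_{t\in I}\|\bx_t-\bu\|^2$ (see \eqref{eq:term_III_sc}). The two give the same rate, but your claim that the chosen grid point is ``within a constant factor of the ideal one regardless of the unknown $\alpha$'' is not literally true: the exponential grid $\mc{S}(|I|)$ is floored at roughly $1/(GD\sqrt{|I|})$, so when $\alpha\lesssim G/(D\sqrt{\sfB})$ the ideal $\eta\approx\alpha/(2G^2)$ is unavailable. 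The repair is one line---in that regime the convex-type bound $GD\sqrt{\sfB}$ is itself $\lesssim G^2/\alpha$, so \eqref{eq:thm_adaptive_regret_sc} still holds---but it must be said; the paper's self-bounding route avoids the case split entirely, which is precisely what makes adaptivity to unknown $\alpha$ transparent. (ii) You clip the comparator into $\tilde{\Theta}$ in the general convex case as well, paying $\cO(h\sfB/r)=\cO(\sqrt{d}\,\sfB^{3/4})$; the paper instead compares against $\bu\in\Theta$ directly there and reserves clipping for the strongly-convex case, where it projects the single worst comparator $\bu^*\in\argmin_{\bu\in\Theta}\sum_{t\in I}f_t(\bu)$ onto $\tilde{\Theta}$ and uses the assumption $\argmin_{\bx\in\bbR^d}f_t(\bx)\in\Theta$ plus $\beta$-smoothness to make the cost second order (\eqref{eq:regret_diff_due_to_clipping}). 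Your ``route through minimizers'' sentence is the right idea but should be instantiated exactly this way, on the fixed worst comparator rather than on the per-round minimizers, since the expert analysis requires a single comparator over the whole interval. (iii) Your trivial bound $2\sfB=\cO(d^2)$ in the capped regime $h=r$ is a clean simplification of the paper's re-derivation with $h=r$. Finally, for an arbitrary interval $[p,q]$ no expert has lifetime exactly $[p,q]$, so the bound on $\textsf{A}$ must be assembled per GC interval in the cover and summed (Lemma~\ref{lem:lin_reg_any_interval}); you gesture at this but should make it explicit, and in particular the annihilation in (i) has to be performed on each covering interval separately.
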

The proof of Theorem \ref{thm:interval_regret}  can be found in Appendix~\ref{sec:appx_TEWA_ada}. We emphasize that TEWA-SE does not require knowledge of  the strong-convexity parameter $\alpha$. This parameter is only used in the analysis and appear in the upper bound \eqref{eq:thm_adaptive_regret_sc}.
Compared to the $\cO(\sqrt{\sfB \log T})$ and $ \cO(\tfrac{1}{\alpha}\log T\log \sfB)$ adaptive regrets in \cite{zhang2021dualadaptivity} for   general convex and strongly-convex losses respectively, our bounds in Theorem~\ref{thm:interval_regret} reflect the separation between  online first-order and zeroth-order optimization. This mirrors the established gap in static regret analyses,  see e.g.\@ \cite{hazan2007logarithmic} vs.\@ \cite{shamir2013complexity}.
%
      We further note that our bound for the strongly-convex case has a $\tfrac{1}{\alpha}$  dependency,  which is suboptimal compared to the $ \frac{1}{\sqrt{\alpha}}$  dependency in \cite{ito2020optimal, hazan2014bandit} for static regret in BCO for $\alpha \lesssim 1$.

      Applying Proposition~\ref{prop:conversions},  the adaptive regret bounds in Theorem~\ref{thm:interval_regret} lead to   the following bounds for $R^\swi(T, S)$, $R^\dyn(T, \Delta, S)$ and $R^\pat(T, P)$.  In Corollary~\ref{cor:TEWA_R_swi_known}, for clarity we drop the $\ceil{\cdot}$ operators from the expressions for $\sfB$ and assume without loss of generality  $\sfB$ is an integer (proof  in Appendix~\ref{sec:appx_TEWA_cor}).

\begin{restatable}{corollary}{corTEWAswi}\label{cor:TEWA_R_swi_known} Consider any horizon $T\in \bbN^+$ and assume that, for all $t \in [T]$, the loss $f_t$ is convex, or strongly-convex with $\argmin_{\bx\in\bbR^d}f_t(\bx)\in \Theta$. We refer to the second scenario as the strongly-convex (SC) case. Then, Algorithm~\ref{alg:metagrad_lean} tuned with parameter 
$\sfB$ satisfies the following regret guarantees:

\begin{align*}
&\text{\textbf{Switching.}} \quad \sfB=\tfrac{T}{S} \Longrightarrow R^{\swi} (T,S) \lesssim \begin{cases}  \sqrt{d}S^{\frac{1}{4}}T^{\frac{3}{4}} +    d^2 S  \\
    d\sqrt{ST}+ d^2 S \quad \text{(SC)}  \\
\end{cases} \\ 
&\text{\textbf{Dynamic.}}  \begin{cases}
 \sfB=\tfrac{T}{S}\vee \big(\frac{\sqrt{d}T}{\Delta}\big)^{\frac{4}{5}} \Rightarrow        R^\dyn(T, \Delta, S)\lesssim R^\swi(T, S) \wedge    (d^{\frac{2}{5}} \Delta^{\frac{1}{5}} T^{\frac{4}{5}} + d^{\frac{8}{5}}\Delta^{\frac{4}{5}}T^{\frac{1}{5}})  \quad 
 \\
  \sfB= \frac{T}{S} \vee \big(\frac{dT}{\Delta}\big)^{\frac{2}{3}} \Rightarrow    R^\dyn(T, \Delta, S)\lesssim R^\swi(T, S) \wedge  (d^{\frac{2}{3}} \Delta^{\frac{1}{3}} T^{\frac{2}{3}}+ d^{\frac{4}{3}} \Delta^{\frac{2}{3}} T^{\frac{1}{3}}) \quad \text{(SC)}  
\end{cases}  \\
&\text{\textbf{Path-length.}} \;  \begin{cases}
 \sfB=\big(\frac{r\sqrt{d}T}{P}\big)^{\frac{4}{5}} \Rightarrow  R^\pat(T, P)\lesssim 
 r^{-\frac{1}{5}} d^{\frac{2}{5}}P^{\frac{1}{5}}T^{\frac{4}{5}}  
    + r^{-\frac{4}{5}}d^{\frac{8}{5}} P^{\frac{4}{5}}T^{\frac{1}{5}}
    \\   
  \sfB=\big(\frac{rdT}{P}\big)^{\frac{2}{3}}  \Rightarrow   
  R^\pat(T, P)\lesssim r^{-\frac{1}{3}} d^{\frac{2}{3}} P^{\frac{1}{3}}T^{\frac{2}{3}}+  r^{-\frac{2}{3}}d^{\frac{4}{3}} P^{\frac{2}{3}}T^{\frac{1}{3}} \quad \text{(SC)}\,.  \\
\end{cases} 
\end{align*}
\end{restatable}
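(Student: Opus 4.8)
The plan is to obtain all three families of bounds directly from the adaptive regret guarantees of Theorem~\ref{thm:interval_regret} by feeding them into the conversions of Proposition~\ref{prop:conversions}. The key structural observation is that each adaptive regret bound is a finite sum of power-law terms $C_j\sfB^{\kappa_j}$: in the general convex case the pairs $(C_j,\kappa_j)$ run over $(\sqrt{d},\tfrac34),(d,\tfrac12),(d^2,0)$, and in the strongly-convex case over $(\tfrac{d}{\alpha},\tfrac12),(\tfrac{d^2}{\alpha},0)$. Since the conversion $R^\ada(\sfB,T)\le C\sfB^\kappa \Rightarrow R^\swi(T,S)\le 2^{1+\kappa}CS^{1-\kappa}T^\kappa$ arises from multiplying the per-interval bound by a piece count that depends only on $S$ and $\sfB$ (not on $j$), I would apply Proposition~\ref{prop:conversions} separately to each term and sum the contributions. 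The strong-convexity constant $\alpha$ is treated as a fixed $\Theta(1)$ quantity and absorbed into $\lesssim$.

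For the switching bound I would set $\sfB=T/S$ and read off $R^\swi(T,S)\lesssim\sum_j C_j S^{1-\kappa_j}T^{\kappa_j}$. The three convex terms become $\sqrt{d}S^{1/4}T^{3/4}$, $d\sqrt{ST}$ and $d^2S$, and the two strongly-convex terms become $d\sqrt{ST}$ and $d^2 S$, matching the statement. For the dynamic bound I would use the second line of Proposition~\ref{prop:conversions}, which controls $R^\dyn(T,\Delta,S)$ by $R^\swi(T,S)\wedge\big(R^\swi(T,S_\Delta)+\Delta\lceil T/S_\Delta\rceil\big)$ once $\sfB=\lceil T/S\rceil\vee\lceil T/S_\Delta\rceil$. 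Here the free parameter $S_\Delta$ (equivalently $\sfB_\Delta=T/S_\Delta$) is chosen to balance the \emph{leading} adaptive-regret term against the drift $\Delta\,T/S_\Delta$: balancing $\sqrt{d}S_\Delta^{1/4}T^{3/4}$ in the convex case gives $S_\Delta\propto \Delta^{4/5}T^{1/5}d^{-2/5}$, i.e.\ $\sfB_\Delta=(\sqrt{d}T/\Delta)^{4/5}$, while balancing $\tfrac{d}{\alpha}\sqrt{S_\Delta T}$ in the strongly-convex case gives $\sfB_\Delta=(dT/\Delta)^{2/3}$. Substituting this $S_\Delta$ into every term of $R^\swi(T,S_\Delta)$ and adding the matched drift term reproduces exactly the listed expressions (e.g.\ $d^{2/5}\Delta^{1/5}T^{4/5}+d^{4/5}\Delta^{2/5}T^{3/5}+d^{8/5}\Delta^{4/5}T^{1/5}$ in the convex case).

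The path-length bound follows identically from the third line of Proposition~\ref{prop:conversions}, with the drift $\Delta\lceil T/S_\Delta\rceil$ replaced by $\tfrac{P}{r}\lceil T/S_P\rceil$. I would therefore simply carry out the substitution $\Delta\mapsto P/r$ in the dynamic computation, obtaining $\sfB_P=(r\sqrt{d}T/P)^{4/5}$ in the convex case and $\sfB_P=(rdT/P)^{2/3}$ in the strongly-convex case, together with the corresponding $r$-dependent rates stated in the corollary.

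The main obstacle is conceptual rather than computational: the adaptive regret bound is a genuine sum of terms with distinct exponents $\kappa_j$, yet Proposition~\ref{prop:conversions} must be instantiated with a single interval length $\sfB$. I would resolve this by (i) confirming that the conversion is affine in the per-interval bound, so that term-by-term application is legitimate and one only needs to optimize $\sfB$ against the dominant term ($\kappa=\tfrac34$ in the convex case, $\kappa=\tfrac12$ in the strongly-convex case); and (ii) verifying that at the chosen $\sfB$ the remaining terms do not exceed those listed — indeed they are lower-order precisely in the regime $T\gtrsim d^2\Delta$ (respectively $T\gtrsim d^2P/r$) in which the leading rate is tight, but they are reported in full so that the bound remains valid across all parameter ranges. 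A secondary point to check is that the prescribed $\sfB$ lies in $[T]$; when it does not, for extreme values of $\Delta$ or $P$, clipping $\sfB$ to $[T]$ together with the $\wedge\, R^\swi(T,S)$ term keeps the stated guarantee valid.
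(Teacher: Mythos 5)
Your proposal is correct and follows essentially the same route as the paper: it instantiates the conversions of Proposition~\ref{prop:conversions} with the adaptive regret bounds of Theorem~\ref{thm:interval_regret}, using $\sfB=T/S$ for switching, balancing the leading adaptive-regret term against the drift $\Delta\lceil T/S_\Delta\rceil$ (resp.\ $\tfrac{P}{r}\lceil T/S_P\rceil$) to get the stated $\sfB$ for the dynamic and path-length bounds, and carrying all lower-order terms through. Your explicit justification that the conversion is affine in the per-interval bound (so it may be applied term-by-term to the sum $\sqrt{d}\sfB^{3/4}+d\sqrt{\sfB}+d^2$) is a point the paper leaves implicit, but it is the same argument.
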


 \subsection{Lower bound for strongly-convex loss functions}\label{sec:lower_bound}
In this section, we derive a minimax lower bound on the dynamic regret  and path-length regret, and  discuss the  optimality of  TEWA-SE. 
To derive the lower bound for the dynamic regret, we adopt a standard minimax approach by constructing a class of hard functions, following \cite[Theorem 6.1]{akhavan2020exploiting}. We assume that the adversary  either (i) partitions the time horizon into $S$ segments and assigns a different function from this class to each segment, or (ii) selects a sequence of functions with total variation bounded by $\Delta$.

\begin{restatable}{theorem}{theoremLowerBound}\label{thm:lowerbound}
Let $\com = \mathbb{B}^d$. For $\alpha>0$ denote by $\mathcal{F}_\alpha$ the class of $\alpha$-strongly convex and smooth functions. Let $\pi=\{\bz_t\}_{t=1}^{T}$ be any randomized algorithm (see Appendix~\ref{sec:lowerbounds_proof} for a definition). Then there exists $T_0>0$ such that for all $T\geq T_0$ it holds that 
\begin{align}\label{eq:lwBoundXX}
\sup_{f_1,\dots,f_T\in\mathcal{F}_\alpha} R^{\dyn}(T, \Delta, S) \geq c_1 \cdot \left(\dx \sqrt{ST}\wedge d^{\frac{2}{3}} \Delta^{\frac{1}{3}}T^{\frac{2}{3}}\right)\,,
\end{align}
where $c_1>0$ is a constant independent of $d, T$, $S$ and $\Delta$. 
\end{restatable}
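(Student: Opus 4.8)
The plan is to reduce the dynamic-regret lower bound to a collection of independent stationary lower bounds, one per stationary segment, and then to optimize the segment structure against the two non-stationarity budgets $S$ and $\Delta$. First I would invoke the known minimax lower bound for static regret of strongly-convex BCO from \cite[Theorem 6.1]{akhavan2020exploiting}: there exists a family of $\alpha$-strongly-convex, smooth functions on $B^d$ such that any algorithm incurs regret $\gtrsim d\sqrt{m}$ against the best fixed action over any horizon of length $m$, where the hard instances are built by randomizing each coordinate of the function's minimizer over a sign pattern in $\{-1,1\}^{d}$ (scaled into the domain). The key property I need is that this lower bound is \emph{prior-based}: the hard instance is a distribution over functions, and the bound holds in expectation, which lets me paste independent copies together across disjoint time blocks.

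The two regimes in the bound \eqref{eq:lwBoundXX} correspond to the two ways the adversary can spend its budget, so I would split into cases. For the switching term $d\sqrt{ST}$, I would partition $[T]$ into $S$ consecutive blocks of length $T/S$, draw an independent hard instance (independent sign pattern) for each block, and keep the loss function constant within a block. Each block contributes $\gtrsim d\sqrt{T/S}$ in expected regret against its own block-optimal comparator, and because the comparators may differ freely across the $S$ blocks this respects the switching constraint $S(\cdot)\le S$; summing over $S$ blocks gives $\gtrsim S\cdot d\sqrt{T/S}=d\sqrt{ST}$. The crucial check is that the algorithm's behavior on one block cannot help it on another: since the sign patterns are drawn independently, information from past blocks is useless for the current block's minimizer, so the per-block regrets genuinely add. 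For the total-variation term $d^{2/3}\Delta^{1/3}T^{2/3}$, I would instead choose the block length $m$ freely (hence the number of blocks is $T/m$), but now constrain the function changes: switching hard instances at a block boundary costs some fixed total variation $\delta$ per switch (controlled by the amplitude of the construction), so the total variation over the horizon is $\approx (T/m)\,\delta \le \Delta$. I would then lower bound the regret by $(T/m)\cdot d\sqrt{m}=dT/\sqrt{m}$ and optimize over $m$ subject to the variation budget, tuning the amplitude $\delta$ of the instances to balance regret against variation. Taking $m$ and $\delta$ optimally yields the $d^{2/3}\Delta^{1/3}T^{2/3}$ scaling.

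The main obstacle, and the step requiring the most care, is making the total-variation case rigorous: I must control the variation $\Delta(f_{1:T})$ \emph{per switch}, which forces the hard instances to be scaled so that two members of the family differ by at most $\delta$ in sup-norm over $B^d$, while simultaneously preserving a per-block regret of order $d\sqrt{m}$. This couples the strong-convexity parameter $\alpha$, the amplitude $\delta$, and the block length $m$, so I would introduce the amplitude as a free tuning parameter in the construction of \cite{akhavan2020exploiting}, track how both the regret lower bound and the sup-norm gap scale with it, and then jointly optimize $(m,\delta)$ against the constraint $(T/m)\delta\le\Delta$. A secondary technical point is the information-theoretic argument within a single block: I would use a standard two-point or Assouad-type argument (as in the cited static lower bound) showing that with only one noisy function evaluation per round, the learner cannot identify the correct sign pattern quickly enough, yielding the $d\sqrt{m}$ floor. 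Finally I would take the minimum over the two regimes and absorb the constants into $c_1$, and note that $T_0$ arises merely to guarantee the chosen block lengths are at least one and the asymptotic constants in the per-block bound are valid.
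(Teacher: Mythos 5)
Your proposal is correct and follows essentially the same route as the paper: the paper likewise builds block-wise hard instances from the strongly convex family of \cite{akhavan2020exploiting} with i.i.d.\ sign patterns across blocks, runs an Assouad-type testing argument, and tunes the amplitude $h=\min\big(d^{-\frac{1}{2}},(T/S)^{-\frac{1}{4}},(dT/\Delta)^{-\frac{1}{6}}\big)$ and number of blocks $N_c=\min\big(S,(T\Delta^2/d^2)^{\frac{1}{3}}\big)$ against both budgets simultaneously, so your two cases correspond exactly to which term attains these minima. The only substantive difference is organizational: where you invoke a prior-based per-block static bound as a black box (which requires re-deriving it with a tunable amplitude) and paste blocks by independence, the paper performs one global argument, bounding the KL divergence between trajectories that differ in a single coordinate of a single block's pattern (which automatically localizes to that block's $T/N_c$ rounds) and concluding via Theorem 2.12 of \cite{Tsybakov09}.
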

We detail the proof in Appendix~\ref{sec:lowerbounds_proof}. This lower bound establishes that TEWA-SE  achieves the minimax-optimal dynamic regret (up to logarithms) for strongly convex and smooth functions w.r.t.\@ $d$, $T$, $S$ and $\Delta$. We  note that \cite{besbes2015non} derives a lower bound only in terms of 
$T$ and 
$\Delta$, matching \eqref{eq:lwBoundXX}, but it does not explicitly capture the dependence on $d$ nor does it address the interplay between $S$ and $\Delta$. 
In the special case where $S=1$,  Theorem~\ref{thm:lowerbound} recovers the classical minimax static regret  of order $d\sqrt{T}$  \cite{shamir2013complexity,akhavan2020exploiting}. Interestingly, for $d=1$ the scaling of the lower bound as function of $T, S$ and $\Delta$ is  the same as standard lower bounds in the  non-stationary MAB literature \cite{garivier2008upper, besbes2014stochastic}. 
The proof of Theorem~\ref{thm:lowerbound} can be readily adapted to consider only the measure  $S$ with the switching regret, yielding a rate of $d\sqrt{ST}$ and thereby establishing the minimax optimality of TEWA-SE’s switching regret bound.

We also derive a  lower bound for  path-length regret analogously to that for dynamic regret.
In Theorem~\ref{thm:2lowerbound} in Appendix~\ref{sec:lowerbounds_proof} we show that under the same assumptions as in the statement of Theorem~\ref{thm:lowerbound},  
\begin{align}\label{eq:2lowerbound}
   \sup_{f_1,\dots,f_T\in\mathcal{F}_\alpha} R^{\pat}(T,P) \;\ge\; c_2 \cdot   (d^2P)^{\frac{2}{5}}T^{\frac{3}{5}}\,,
\end{align}
where $c_2>0$ is a constant independent of $d$, $T$ and $P$. 
Hence, TEWA-SE may not achieve the optimal regret rate for path-length. Additionally, Eq.~\eqref{eq:2lowerbound} improves upon the only existing $d\sqrt{PT}$ lower bound from \cite{zhao2021bandit} in terms of the horizon $T$, by leveraging a different construction of a hard instance. This improvement comes from assuming $P=o(T)$, which is necessary for sublinear regret. 



\subsection{Parameter-free guarantees}

In Corollary~\ref{cor:TEWA_R_swi_known}, we showed that the knowledge of the non-stationarity measures $S, \Delta$ and $P$ allows optimal tuning of TEWA-SE's parameter $\sfB$. However,  these measures can be hard to estimate. To obtain guarantees without such knowledge, we further analyze TEWA-SE under the 
 \textit{Bandit-over-Bandit} (BoB) framework from \cite{cheung19bob} (see Appendix \ref{appx:TEWA_BoB} for details), 
  which divides  the time horizon into epochs of suitable length $L$ 
  and uses an adversarial bandit algorithm (e.g., EXP3) to select $\sfB$ for TEWA-SE in each epoch from the set  $\mc{B}=\{2^i: i=0, 1, \dots, \lfloor\log_2T\rfloor\}$. 
In Corollary~\ref{cor:BoB} in Appendix~\ref{appx:TEWA_BoB}, we adapt all the upper  bounds in Corollary~\ref{cor:TEWA_R_swi_known} to this framework, and show that this procedure  costs an additional $d^{\frac{1}{3}}T^{\frac{5}{6}}$ term for the general convex case and  $d^{\frac{1}{2}}T^{\frac{3}{4}}$   for the strongly-convex case. Our parameter-free path-length regret  bound $P^{\frac{1}{5}}T^{\frac{4}{5}}+ T^{\frac{5}{6}}$ for the general convex case improves on the $P^{\frac{1}{2}}T^{\frac{3}{4}}$ bound  in \cite{zhao2021bandit} when $P=\Omega(T^{\frac{1}{6}})$.

Recent works on MAB \cite{luo2018efficient, chen2019new, auer2019adswitch, WeiL21, SukK22} have proposed algorithms that achieve optimal dynamic regret  without prior knowledge of  $S$ and $\Delta$. However, they use procedures that crucially rely on the finiteness of the arm set, and are thus ill-suited for BCO. It remains open to determine if the minimax regret rate can be attained without such knowledge in the settings considered in this paper. 

\section{Clipped Exploration by Optimization}\label{sec:convex}
In this section, we propose a second algorithm (Algorithm \ref{alg:convex}) to improve upon the suboptimal rates for $R^\dyn(T, \Delta, S)$ and $R^\pat(T, P)$ that TEWA-SE achieves for   general convex loss functions. For ease of presentation, we assume   in this section that the problem is noiseless,
i.e., $\xi_t=0$ for $t\in[T]$.
We call this algorithm clipped Exploration by Optimization (cExO), which is built on Algorithm 8.3 (ExO)  in \cite{lattimore24introbco}.
The high level idea of ExO is to run exponential weights over a finite discretization of the feasible set, denoted by $\mc{C}\subset \Theta$. 
 We assume the discretization $\mc{C}$
admits a worst-case error of $\varepsilon \coloneqq \sup_{f\in\mc{F}_0} \min_{\bq\in\Delta(\mc{C})}\Exp_{\bz'\sim \bq}f(\bz')-\min_{\bz\in\Theta}f(\bz)$, where  $\mc{F}_0$ denotes  the class of convex and Lipschitz functions, and  $ \Delta(\mc{C})$ denotes the $(|\mc{C}|-1)$-dimensional simplex. 


With  $\bq_0$  initialized as the uniform distribution, in each round $t$, given a loss estimate $\widehat{\bs}_t\in \bbR^{|\mc{C}|}$, ExO  (in its mirror descent formulation) computes $\bq_t=\argmin_{\bq\in\Delta(\mc{C})} \ip{\bq,\widehat{\bs}_{t-1}}+\frac{1}{\eta}\text{KL}(\bq||\bq_{t-1})$, where $\text{KL}$ is the Kullback-Leibler divergence $\text{KL}(\bq||\bp)=\sum_{i=1}^{|\mc{C}|} q_i\log( q_i/ p_i)$ for $ \bq,\bp\in \Delta(\mc{C})$. 
The update rule in cExO departs from the vanilla ExO in this single step, by taking the minimum over the \textit{clipped} simplex $\tilde{\Delta}=\Delta(\mc{C}) \cap [\gamma, 1]^{|\mc{C}|}$ where $\gamma\in (0, \frac{1}{|\mc{C}|})$ is a constant to be tuned, see line \ref{line:ExO_qt} of Algorithm \ref{alg:convex}. Clipping is a standard technique  in mirror descent to ensure the algorithm does not commit too hard to any single action, and therefore detect changes in the environments more easily, yielding  regret guarantees w.r.t.\@ non-stationary measures  \cite[Chapter 31.1]{lattimore2020bandit}.

Given the reference distribution $\bq_t$, cExO selects a playing distribution $\bp_t\in \Delta(\mc{C})$ and an estimator function $E_t\in\mc{E}$ which   returns an updated  loss estimate for each  action in $\mc{C}$, where   $\mc{E}$ denotes the set of functions that map $\mc{C}\times[-1,1]$ to $\mathbb{R}^{|\mc{C}|}$. It does so by solving an intractable optimization problem:\footnote{For detailed discussions about these functions, we refer the reader to \cite{lattimore2021mirror}.} 
\begin{align}\label{eq:def_Lambda_eta}
\argmin_{\bp\in\Delta(\mc{C}),E\in\mc{E}}\Lambda_\eta(\bq_t,\bp,E)\,,
\end{align} where, with $S_{\bq}(\eta \hat \bs) = \max_{\bq'\in\Delta(\mc{C})}\ip{\bq-\bq',\eta\hat \bs} - \text{KL}(\bq'||\bq)$, the objective function is defined by
\begin{align*}
&\Lambda_{\eta}(\bq,\bp,E) \coloneqq \sup_{\bp^\star\in\Delta(\mc{C})}\sup_{f\in\mc{F}_0}\Exp_{\bz\sim \bp}\Big[\ip{\bp-\bp^\star, f}+\ip{\bp^\star-\bq, E(\bz,f(\bz))}+\tfrac{1}{\eta}S_{\bq}(\eta E(\bz,f(\bz)))\Big].
\end{align*}
This optimization problem is intractable due to the large size of $\mc{E}$ and $\mc{F}_0$.\footnote{One can in theory bound the domain of $\mathcal{E}$ and discretize  $\mc{E}$, $\mc{F}_0$ and $\Delta(\mc{C})$. The optimization problem is hence computable, though not in polynomial time.}
The role of this optimization problem is to tradeoff the worst-case cost of 
deviating from the desired distribution $\bq_t$ versus the gain of improved exploration (hence the name Exploration by Optimization).
Finally, cExO samples an action $\bz_t$ according to $\bp_t$, observes the feedback $f(\bz_t)$ and constructs a loss estimate $\widehat{\bs}_t= E_t(\bz_t, f(\bz_t))$ to be used in the subsequent round. cExO achieves  the adaptive regret guarantee stated in Theorem \ref{thm:ExO_adaptive_regret} below.


\begin{algorithm}
\caption{clipped Exploration by Optimization (cExO)} \label{alg:convex}
\textbf{Input:} $d, T,\sfB$, feasible set $\Theta$, a finite covering set $\mc{C}\subset \Theta$ of $\Theta$, discretization error $\varepsilon$, learning rate $\eta$, clipping parameter $\gamma\in (0, \tfrac{1}{|\mc{C}|})$, and $ \tilde{\Delta}=\Delta(\mc{C})\cap [\gamma, 1]^{|\mc{C}|}$\\
\textbf{Initialize:} $q_{0,i}=\frac{1}{|\mc{C}|}\;\forall i\in[|\mc{C}|]\, $.
\begin{algorithmic}[1]
\For{$t=1,\dots, T$}
\State \label{line:ExO_qt}Compute $\bq_t = \argmin_{\bq\in\tilde\Delta}\langle \bq, \widehat{\bs}_{t-1}\rangle + \frac{1}{\eta}\text{KL}(\bq||\bq_{t-1})$
\State\label{line:ExO_pt} Find distribution $\bp_t\in \Delta(\mc{C})$ and $E_t\in \mc{E}$ s.t. $\Lambda_\eta(\bq_t, \bp_t, E_t)\le \inf_{\substack{\bp\in \Delta(\mc{C}),\\ E\in \mc{E}}} \Lambda_\eta(\bq_t, \bp, E) + \eta d$
\State Sample $\bz_t\sim \bp_t$ and observe $ f_t(\bz_t)$
\State Compute $\widehat{\bs}_t = E_t(\bz_t, f_t(\bz_t))$ 
\EndFor
    \end{algorithmic}
\end{algorithm}
\begin{restatable}{theorem}{ExOtheorem}\label{thm:ExO_adaptive_regret}
    For $T\in \bbN^+$ and $\sfB\in[T]$,  Algorithm~\ref{alg:convex} calibrated with     $\varepsilon=\frac{1}{T}$, $\gamma = \frac{1}{T|\mc{C}|}$,  $\eta = \sqrt{\log(\gamma^{-1})/(d^4\log(dT) \sfB)}$ and $\log|\mc{C}|=\cO (d\log(dT^2))$ satisfies
    \begin{align} 
    R^\ada(\sfB, T)\lesssim d^\frac{5}{2}\sqrt{\sfB}\,.
    \end{align}
\end{restatable}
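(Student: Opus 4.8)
The plan is to prove the adaptive bound by transferring the static $\widetilde{\cO}(d^{5/2}\sqrt{T})$ analysis of the vanilla ExO algorithm (Algorithm~8.3 of \cite{lattimore24introbco}) to arbitrary intervals, with the clipping step supplying the uniform ``restart'' control that standard ExO lacks. Fix an interval $[p,q]$ with $q-p\le\sfB$ and a comparator $\bu\in\Theta$; since the maximization over $p,q,\bu$ sits outside the expectation in $R^\ada(\sfB,T)$, it suffices to bound $\sum_{t=p}^q\Exp[f_t(\bz_t)-f_t(\bu)]$ by $d^{5/2}\sqrt{\sfB}$ up to logarithms, uniformly in $p,q,\bu$.

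First I would replace the continuous comparator $\bu$ by a feasible distribution over $\mc{C}$. Using the definition of the discretization error $\varepsilon$ together with the Lipschitz/convexity structure of $\mc{F}_0$, there is a distribution $\bq^\star\in\Delta(\mc{C})$ with $\Exp_{\bz\sim\bq^\star}f_t(\bz)-f_t(\bu)\le\varepsilon$ for every $t$. To make the comparator admissible for mirror descent on the clipped simplex I would mix it slightly, $\tilde{\bq}^\star=(1-|\mc{C}|\gamma)\bq^\star+\gamma\mathbf{1}\in\tilde{\Delta}$; since $|f_t|\le 1$, this mixing inflates the per-round loss by at most $|\mc{C}|\gamma$. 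With $\varepsilon=1/T$, $\gamma=1/(T|\mc{C}|)$ and $q-p\le\sfB\le T$, the total cost of both approximations over the interval is $(\varepsilon+|\mc{C}|\gamma)\sfB\le 2$, so up to an additive constant it remains to bound the discrete regret $\sum_{t=p}^q\Exp[\ip{\bp_t-\tilde{\bq}^\star,f_t}]$.

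Next I would apply the ExO decomposition. By the definition of $\Lambda_\eta$ evaluated at the realized $f_t$ and $\widehat{\bs}_t=E_t(\bz_t,f_t(\bz_t))$, followed by the mirror-descent inequality for the clipped update in line~\ref{line:ExO_qt}, the interval regret telescopes into
\[
\sum_{t=p}^q\Exp[\ip{\bp_t-\tilde{\bq}^\star,f_t}]\;\le\;\tfrac1\eta\,\Exp[\text{KL}(\tilde{\bq}^\star\|\bq_p)]+\sum_{t=p}^q\Exp[\Lambda_\eta(\bq_t,\bp_t,E_t)].
\]
The decisive step is bounding the ``restart cost'' $\text{KL}(\tilde{\bq}^\star\|\bq_p)$: because $\bq_p$ is produced by the \emph{clipped} update and hence has every coordinate $\ge\gamma$, for any comparator in $\Delta(\mc{C})$ one has $\text{KL}(\tilde{\bq}^\star\|\bq_p)=\sum_i\tilde{q}^\star_i\log(\tilde{q}^\star_i/q_{p,i})\le\log(1/\gamma)$, uniformly over the choice of $p$ and independently of the algorithm's history before time $p$. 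This is exactly what converts the full-horizon analysis into an adaptive one: without clipping, $\bq_p$ could concentrate arbitrarily and render this term uncontrollable at a general interval start.

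Finally I would bound the per-round terms using $\Lambda_\eta(\bq_t,\bp_t,E_t)\le\inf_{\bp,E}\Lambda_\eta(\bq_t,\bp,E)+\eta d$ from line~\ref{line:ExO_pt} together with the ExO guarantee for convex bandits, $\inf_{\bp,E}\Lambda_\eta(\bq,\bp,E)\lesssim\eta d^4\log(dT)$. Plugging in $\log(1/\gamma)=\cO(d\log(dT^2))$ and $\eta=\sqrt{\log(\gamma^{-1})/(d^4\log(dT)\sfB)}$ balances the restart term $\tfrac1\eta\log(1/\gamma)$ against $\sfB\cdot\eta d^4\log(dT)$, both of order $d^{5/2}\sqrt{\sfB}$ up to logarithms, while the slack $\sfB\eta d$ is lower order. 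I expect the main obstacle to be this last step: verifying that the book's per-round bound $\inf_{\bp,E}\Lambda_\eta\lesssim\eta d^4\log(dT)$ remains valid when the reference $\bq$ is constrained to the clipped simplex $\tilde{\Delta}$ rather than being the output of an unclipped update. Concretely, one must reconstruct the near-optimal pair $(\bp,E)$ — the exploration distribution and the surrogate loss estimator underlying ExO — so that its guarantee depends on $\bq$ only through $\bq$ being a well-conditioned interior point, a property that clipping in fact enforces. A secondary point is checking that the comparator-boundary mixing and the discretization are compatible with a single net $\mc{C}$ of size $\log|\mc{C}|=\cO(d\log(dT^2))$.
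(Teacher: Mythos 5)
Your proposal is correct and follows essentially the same route as the paper's proof: the same decomposition of the interval regret into a discretization-plus-clipping cost and a clipped-OMD term, the same comparator in $\tilde{\Delta}$ (a near-point-mass mixed toward uniform), the same key bound $\text{KL}(\bq^\star||\bq_a)\le\log(1/\gamma)$ exploiting that clipping keeps every coordinate of $\bq_a$ above $\gamma$, the same use of the algorithm's $\eta d$ slack together with the cited bound $\inf_{\bp,E}\Lambda_\eta(\bq,\bp,E)\lesssim \eta d^4\log(dT)$ (which the paper invokes from \cite[Theorems 8.19 and 8.21]{lattimore24introbco} as holding uniformly over $\bq\in\tilde{\Delta}$, resolving the obstacle you flagged), and the same balancing of $\tfrac{1}{\eta}\log(\gamma^{-1})$ against $\sfB\eta d^4\log(dT)$. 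One minor slip: a single $\bq^\star$ with per-round error $\Exp_{\bz\sim\bq^\star}f_t(\bz)-f_t(\bu)\le\varepsilon$ for \emph{every} $t$ does not follow from the definition of $\varepsilon$; the correct step (as in the paper) applies the discretization guarantee to the interval-average function, yielding the aggregate bound $\le\varepsilon\sfB$, which is all your argument actually uses.
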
 

We then use Proposition~\ref{prop:conversions} to convert the bound of Theorem~\ref{thm:ExO_adaptive_regret} into  the  following regret guarantees w.r.t.\@ $S, \Delta$ and $P$. Like in Corollary~\ref{cor:TEWA_R_swi_known}, we omit $\ceil{\cdot}$ from the expressions for $\sfB$ for clarity. 

\begin{restatable}{corollary}{ExOcorollaryKnown}\label{cor:convex}
     For any horizon $T\in \bbN^+$,  Algorithm~\ref{alg:convex}  calibrated as in Theorem \ref{thm:ExO_adaptive_regret} and tuned with interval size $\sfB$ (which determines  $\eta$) 
     satisfies the following regret guarantees:
     \begin{align*}
         &\textbf{Switching:} \quad \sfB= \tfrac{T}{S} \Longrightarrow R^{\swi} (T,S) \lesssim     d^{\frac{5}{2}} \sqrt{ST}\,, \\ 
         &\textbf{Dynamic:}\quad    \sfB=   \tfrac{T}{S} \vee (d^{\frac{5}{2}}T/\Delta)^{\frac{2}{3}} \Longrightarrow       R^\dyn(T, \Delta, S)\lesssim   R^\swi(T, S) \wedge  d^{\frac{5}{3}}\Delta^{\frac{1}{3}}T^{\frac{2}{3}} \,, \\
         &\textbf{Path-length:} \quad  \sfB=(rd^{\frac{5}{2}}T/P)^{\frac{2}{3}} \Longrightarrow R^\pat(T, P)\lesssim r^{-\frac{1}{3}}  d^{\frac{5}{3}}P^{\frac{1}{3}}T^{\frac{2}{3}} \,. 
     \end{align*}
\end{restatable}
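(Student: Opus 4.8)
The plan is to invoke Proposition~\ref{prop:conversions} with the adaptive-regret guarantee of Theorem~\ref{thm:ExO_adaptive_regret}. That bound, $R^\ada(\sfB, T)\lesssim d^{\frac{5}{2}}\sqrt{\sfB}$, is exactly of the form $C\sfB^\kappa$ required by the proposition, with $\kappa=\tfrac12\in[0,1)$ and $C$ of order $d^{\frac{5}{2}}$ once the hidden polylogarithmic factors in $T$ are absorbed; in particular $C$ is polynomial in $d$ and $\log T$ and independent of $S,\Delta,P$, so every hypothesis is met. The key structural point is that the cExO adaptive bound is a single power of $\sfB$ (unlike the multi-term TEWA-SE bound of Theorem~\ref{thm:interval_regret}), so each conversion collapses to one clean rate rather than a sum of competing terms.

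\textbf{Switching.} I would substitute $\kappa=\tfrac12$, $C\lesssim d^{\frac52}$ and $\sfB=\tfrac{T}{S}$ directly into the switching clause of Proposition~\ref{prop:conversions}, obtaining $R^\swi(T,S)\le 2^{1+\kappa}C\,S^{1-\kappa}T^{\kappa}\lesssim d^{\frac52}\sqrt{ST}$.

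\textbf{Dynamic.} The dynamic clause leaves a free parameter $S_\Delta$ and yields $R^\dyn(T,\Delta,S)\le R^\swi(T,S)\wedge\big(R^\swi(T,S_\Delta)+\Delta\lceil T/S_\Delta\rceil\big)$. Using $R^\swi(T,S_\Delta)\lesssim d^{\frac52}\sqrt{S_\Delta T}$, I would balance this term against the drift cost $\Delta T/S_\Delta$; the two equalize at $S_\Delta$ of order $\Delta^{\frac23}T^{\frac13}d^{-\frac53}$, equivalently $\sfB=T/S_\Delta$ of order $(d^{\frac52}T/\Delta)^{\frac23}$, at which both are of order $d^{\frac53}\Delta^{\frac13}T^{\frac23}$. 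Taking $\sfB=\tfrac{T}{S}\vee(d^{\frac52}T/\Delta)^{\frac23}$ as in the statement keeps the wedge with $R^\swi(T,S)$ intact and delivers the claimed bound. The path-length clause is the same balancing argument with $\Delta$ replaced by $P/r$: minimizing $R^\swi(T,S_P)+\tfrac{P}{r}\lceil T/S_P\rceil$ over $S_P$ gives $S_P$ of order $(P\sqrt{T}/(rd^{\frac52}))^{\frac23}$, i.e.\ $\sfB=(rd^{\frac52}T/P)^{\frac23}$, with resulting rate $r^{-\frac13}d^{\frac53}P^{\frac13}T^{\frac23}$.

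There is no deep obstacle here; the content is bookkeeping. The two points requiring care are (i) verifying that the hidden constant $C$ in Theorem~\ref{thm:ExO_adaptive_regret} is genuinely independent of $S,\Delta,P$, so that optimizing $\sfB$ over these quantities is legitimate, and (ii) checking that each prescribed $\sfB$ lies in $[T]$---which is exactly the regime $\Delta,P=o(T)$ needed for sublinear regret---so that the calibration of $\eta$ in Theorem~\ref{thm:ExO_adaptive_regret} is admissible. The most delicate step is confirming that the two competing terms in the dynamic and path-length bounds equalize at the stated $\sfB$, recovering the single clean rates $d^{\frac53}\Delta^{\frac13}T^{\frac23}$ and $r^{-\frac13}d^{\frac53}P^{\frac13}T^{\frac23}$.
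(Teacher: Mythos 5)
Your proposal is correct and follows essentially the same route as the paper: the paper's own proof of Corollary~\ref{cor:convex} simply invokes Theorem~\ref{thm:ExO_adaptive_regret} together with Proposition~\ref{prop:conversions}, exactly as done for Corollary~\ref{cor:TEWA_R_swi_known}. Your explicit balancing computations for $S_\Delta$ and $S_P$ (yielding $\sfB=(d^{\frac52}T/\Delta)^{\frac23}$ and $\sfB=(rd^{\frac52}T/P)^{\frac23}$) match the choices in the statement and correctly recover the claimed rates.
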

The proofs of Theorem~\ref{thm:ExO_adaptive_regret} and Corollary~\ref{cor:convex} are presented in  Appendix~\ref{appx:ExO_proofs}. By comparing these results to the lower bounds in Section~\ref{sec:lower_bound}, we obtain that for known $S, \Delta$ and $P$, cExO achieves minimax-optimal rates in $T, S$ and $\Delta$, but remains suboptimal in $d$ (for all measures), and potentially for the path-length bound (see Eq.~\eqref{eq:2lowerbound}). This suboptimal dependence on $d$   is unsurprising since even the best known static regret bounds of \cite{fokkema2024online} and \cite{suggala2024second}  suffer from similar dimensional dependence.
Moreover, the gap between cExO's path-length regret  bound and our  minimax  lower bound of  order $d^{\frac{4}{5}} P^{\frac{2}{5}} T^{\frac{3}{5}}$  may stem from either (i) looseness  in the lower bound, or (ii) 
  sub-optimality of cExO, which runs OMD in distribution space rather than directly on the action set.  
  The latter may allow us to bound   path-length regret more  directly and sharply.

To adapt to unknown non-stationarity measures, cExO equipped with the BoB framework yields the upper bounds in Corollary~\ref{cor:convex} with an additional $ d^{\frac{5}{4}}T^{\frac{3}{4}}$ term (see Corollary~\ref{cor:BoB_ExO} in Appendix~\ref{appx:ExO_proofs}). Our path-length regret   of $P^{\frac{1}{3}}T^{\frac{2}{3}}$ and $P^{\frac{1}{3}}T^{\frac{2}{3}}+ T^{\frac{3}{4}}$ for known and unknown $P$, respectively, improves on the $P^{\frac{1}{4}}T^{\frac{3}{4}} $ and $ P^{\frac{1}{2}}T^{\frac{3}{4}}$ rates in \cite{zhao2021bandit} in terms of $T$.
\section{Conclusion}
\label{sec:conclusion}

In this work, we develop and analyze two approaches for non-stationary Bandit Convex Optimization. For strongly convex losses, our polynomial-time TEWA-SE algorithm achieves minimax-optimal dynamic regret w.r.t.\@ $S$ and $\Delta$ without knowing the strong-convexity parameter, but incurs a suboptimal $T^{\frac{3}{4}}$ rate for general convex losses. To address this, we propose a second algorithm, cExO, which achieves minimax-optimality for $S$ and $\Delta$. However, this algorithm is not polynomial-time computable and has an increased dimension dependence. Our matching lower bounds confirm the optimality results, but also reveal 
potentially suboptimal guarantees w.r.t.\@ the path-length $P$. This work highlights a central  open challenge: designing algorithms that are simultaneously minimax-optimal and computationally efficient for general convex losses in non-stationary environments.
A promising stepstone towards this goal is to incorporate second-order information, akin to  the online Newton methods from \cite{fokkema2024online, suggala2024second}  that achieve   state-of-the-art static regret guarantees for adversarial convex bandits. In particular, a restart criterion, similar to the one  in line 15 of  \cite[Algorithm 1]{suggala2021efficient} or   line 11 of  \cite[Algorithm 1]{fokkema2024online},  
may enable tracking capabilities and 
  lead to   improved regret bounds.






\bibliographystyle{unsrtnat}
\bibliography{bibliography}



\clearpage

\appendix


\section{Definitions}\label{app::definitions}
\begin{definition}\label{def::subgaussian} 
Let $\sigma>0$. A random variable $\xi$ is $\sigma$-sub-Gaussian  if for any $\lambda>0$ we have $\Exp[\exp(\lambda \xi)] \le \exp(\sigma^2 \lambda^2/2)$. 
\end{definition}
\begin{definition}
Let $\alpha>0$. A differentiable function     $f: \bbR^d\to \bbR$ is called $\alpha$-strongly convex, if for $ \bx,\bz\in \bbR^d$,  $f(\bz)\ge f(\bx)+\nabla f(\bx)^\top (\bz-\bx)+\frac{\alpha}{2}\|\bz-\bx\|^2$\,.
\end{definition}
\begin{definition}
  Let $\beta>0$. Function  $f: \bbR^d\to \bbR$ is called $\Lx$-smooth, if it is continuously differentiable and for any $\bx,\bz\in \bbR^d$, 
  $\|\nabla f(\bx)- \nabla f(\bz)\| \le \beta \|\bx - \bz\|$\,. 
\end{definition}
\begin{definition}
    Let $K>0$. Function $f:\bbR^d\to \bbR$ is called $K$-Lipschitz if for any $\bx,\bz\in\bbR^d$, $|f(\bx)-f(\bz)|\le K\|\bx-\bz\|$.
\end{definition}



\section{Proof of Proposition~\ref{prop:conversions}}\label{app::conversion}

We start this section by restating the proposition, before detailing its proof.

\propconv*

\begin{proof}[Proof of Proposition \ref{prop:conversions} ]
    The proof follows two steps. First, we state in Lemma~\ref{lem: ada to swi} the conversion between adaptive regret and switching regret. A similar conversion can be found in \cite{daniely2015strongly}, but we detail the proof for completeness. 
Next, we prove in Lemma~\ref{lem: swi to dyn and path simple} that switching regret guarantees for appropriate number of switches convert into dynamic and path-length regret guarantees. 
\end{proof}

In the remainder of this section, we detail the two supporting lemmas and their proof.

\begin{lemma}
\label{lem: ada to swi}
Consider an algorithm that satisfies the adaptive regret guarantees of Proposition~\ref{prop:conversions}, then this algorithm calibrated with interval size $\sfB=\ceil{\tfrac{T}{S}}$ satisfies 
\[R^{\swi} (T, S) \leq 2^{1+\kappa}C S^{1-\kappa} T^\kappa\;.\]
\end{lemma}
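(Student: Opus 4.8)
The plan is to fix an arbitrary comparator sequence $\bu_{1:T}$ with $S(\bu_{1:T})\le S$, decompose the regret $R(T,\bu_{1:T})$ against it into a sum of regrets over short time intervals, count how many such intervals are needed, and apply the hypothesis $R^\ada(\sfB,T)\le C\sfB^\kappa$ to each piece. Since the final bound must hold uniformly over all admissible comparator sequences, it suffices to establish it for a single arbitrary such sequence and then take the maximum.

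For the decomposition, note that $S(\bu_{1:T})\le S$ means the sequence is piecewise constant with at most $M\le S$ maximal constant blocks $B_1,\dots,B_M$ that partition $[T]$, where $\bu_t=\bu^{(m)}$ for all $t\in B_m$. I would split each block $B_m$ into consecutive sub-intervals containing at most $\sfB$ rounds each, which requires $\ceil{|B_m|/\sfB}$ of them. On any such sub-interval $J\subseteq B_m$ the comparator is the constant $\bu^{(m)}$, so by the definition of adaptive regret (where the inner maximum over a fixed $\bu\in\Theta$ only increases the value) one gets
\[
\sum_{t\in J}\Exp\big[f_t(\bz_t)-f_t(\bu^{(m)})\big]\;\le\; R^\ada(\sfB,T)\;\le\; C\sfB^\kappa .
\]

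The counting and constant-tracking step is then as follows. Summing over all sub-intervals, their total number is
\[
N=\sum_{m=1}^M\ceil{\tfrac{|B_m|}{\sfB}}\;\le\;\tfrac{1}{\sfB}\sum_{m=1}^M|B_m|+M\;\le\;\tfrac{T}{\sfB}+S .
\]
Choosing $\sfB=\ceil{T/S}\ge T/S$ gives $T/\sfB\le S$, hence $N\le 2S$; moreover $\sfB=\ceil{T/S}\le T/S+1\le 2T/S$ (using $S\le T$, so $T/S\ge1$), whence $\sfB^\kappa\le 2^\kappa(T/S)^\kappa$. Combining, $R(T,\bu_{1:T})\le N\,C\sfB^\kappa\le 2S\cdot C\cdot 2^\kappa (T/S)^\kappa=2^{1+\kappa}CS^{1-\kappa}T^\kappa$, and taking the maximum over $\bu_{1:T}$ with at most $S$ switches yields the claim.

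I expect the only subtle point to be the interval-length convention in the definition of $R^\ada(\sfB,T)$, where the constraint $0<q-p\le\sfB$ formally excludes single-round intervals; a leftover sub-interval of length one can be merged into an adjacent sub-interval of the same block (the guarantee still applies up to the constant, since $q-p\le\sfB$ permits intervals of up to $\sfB+1$ rounds) or bounded directly using $\max_{\bx}|f_t(\bx)|\le1$. Apart from this bookkeeping, the argument is routine, and the factor $2^{1+\kappa}$ arises precisely from the two factors of $2$ in the bounds $N\le 2S$ and $\sfB^\kappa\le 2^\kappa(T/S)^\kappa$.
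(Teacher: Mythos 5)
Your proof is correct and takes essentially the same route as the paper: partition $[T]$ into at most $2S$ constant-comparator intervals of length at most $\sfB$, apply the hypothesis $R^\ada(\sfB,T)\le C\sfB^\kappa$ to each, and finish with identical algebra (the paper cuts a uniform grid of width $\sfB$ at the switch points, whereas you subdivide the maximal constant blocks, but both yield the $\le 2S$ count and the same constants). The single-round-interval technicality you flag is silently glossed over in the paper's own proof, and your merge/direct-bound fix handles it at least as carefully.
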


\begin{proof}[Proof of Lemma \ref{lem: ada to swi}]
Consider $\sfB=\lceil \frac{T}{S} \rceil$. Let $\bu_{1:T}\in \Theta^T$ be a sequence of  arbitrary comparators with at most $S$ switches. We divide the horizon into intervals of length $\sfB$ (the last interval may be shorter than $\sfB$), and further divide the intervals at the rounds where $\bu_t\neq \bu_{t-1}$. This ensures  each of these intervals is associated with a constant comparator. 
By construction, these intervals are of length $\le \sfB$ and the number of intervals is bounded by $2S$. 
   Hence, we can apply the adaptive regret bound to   each interval to obtain
    \begin{align*}
    R^{\swi}(T, S)& \leq 2S\cdot C\sfB^{\kappa} \leq  2CS\cdot\left(\frac{T}{S}+1\right)^{\kappa}\\
    &= 2C\cdot S^{1-\kappa} T^\kappa \cdot \left(1+\frac{S}{T}\right)^\kappa \leq 2^{1+\kappa}C \cdot   S^{1-\kappa}T^\kappa  \,.
    \end{align*}
\end{proof}


We now prove the conversion between switching regret  and dynamic and path-length regrets.
\begin{lemma}
\label{lem: swi to dyn and path simple}
Consider any fictitious number of switches $S'\in [T]$. Then the dynamic regret of  environments constrained by  $\Delta$ satisfies
\begin{align}
    R^{\dyn}(T, \Delta) \leq R^{\swi}(T, S') + \Delta\ceil{\tfrac{ T }{S'}}\,,\label{eq:lem_R_dyn_S'}
\end{align}
and the path-length regret  satisfies 
\begin{align}\label{eq:lem_R_path_S'}
    R^{\pat}(T, P) \leq R^{\swi}(T, S')+  \tfrac{P}{r}\ceil{\tfrac{T}{S'}}\,.
\end{align}
\end{lemma}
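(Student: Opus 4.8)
The goal is to prove Lemma~\ref{lem: swi to dyn and path simple}, establishing that switching regret bounds transfer to dynamic and path-length regret bounds. The key idea in both cases is to construct, from an arbitrary comparator sequence satisfying the relevant non-stationarity constraint, a \emph{piecewise-constant} surrogate comparator sequence with at most $S'$ switches, and then control the gap between the original sequence and this surrogate. The plan is to divide the horizon into $\lceil T/S'\rceil$ consecutive blocks of length $\lceil T/S'\rceil$ (so there are at most $S'$ blocks), and within each block freeze the comparator to a single well-chosen point. Since this surrogate sequence has at most $S'$ switches, the switching regret bound $R^{\swi}(T, S')$ applies to it; the remaining task is to bound the cost of replacing the true per-round comparator by the frozen one.

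For the dynamic case \eqref{eq:lem_R_dyn_S'}, I would take the benchmark in each block to be the per-round minimizer's value but compare against the static minimizer of one representative round in the block. Concretely, on a block $J$ the difference $\sum_{t\in J}\big(f_t(\bu^*_t)-f_t(\bu^*_{\tau})\big)$, where $\bu^*_\tau$ is the minimizer chosen as the block's fixed comparator and $\bu^*_t$ is the per-round minimizer, should be controlled using the total-variation constraint: for any $t,\tau$ in the same block, $f_t(\bu^*_\tau)-f_t(\bu^*_t)\le f_t(\bu^*_\tau)-f_\tau(\bu^*_\tau) + f_\tau(\bu^*_t)-f_t(\bu^*_t)$ telescopes through intermediate rounds and is bounded by the accumulated total variation within the block. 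Summing over the $\le S'$ blocks yields a total variation cost of at most $\Delta$ spread over blocks of length $\lceil T/S'\rceil$, producing the $\Delta\lceil T/S'\rceil$ term. The switching regret of the frozen sequence gives $R^{\swi}(T,S')$.

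For the path-length case \eqref{eq:lem_R_path_S'}, I would again freeze the comparator on each block, say to the comparator $\bu_\tau$ at the block's start. The per-round gap $f_t(\bu_t)-f_t(\bu_\tau)$ must be bounded; here I would use Lipschitzness of $f_t$ together with the assumption that $\Theta$ contains a ball of radius $r$ to relate the loss difference to $\|\bu_t-\bu_\tau\|$, which in turn is bounded by the path-length accumulated within the block, $\sum_{s\in J}\|\bu_s-\bu_{s-1}\|$. The factor $1/r$ enters through the Lipschitz-constant-to-radius normalization implicit in the $\max|f_t|\le 1$ and radius-$r$ assumptions (the effective Lipschitz bound scales like $1/r$). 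Summing over blocks, the total path-length contribution is at most $P$, but since it is weighted by the block length $\lceil T/S'\rceil$ in the worst case within a block, we obtain the $\tfrac{P}{r}\lceil T/S'\rceil$ term.

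The main obstacle I anticipate is making the per-block error bounds clean and correctly aligned with the precise constants in the statement — in particular, verifying that the worst-case comparator in each block really can be charged against either the total-variation budget (dynamic case) or the path-length budget (path case) \emph{without} double-counting across block boundaries, and that the block-length factor $\lceil T/S'\rceil$ appears rather than $T$. A careful telescoping argument within each block, combined with the observation that variation/path-length budgets are additive across disjoint blocks, should resolve this. The Lipschitz-to-radius conversion in the path-length case is the most delicate step, as it requires invoking the geometric assumption on $\Theta$ to convert a distance bound into a loss bound with the correct $1/r$ scaling.
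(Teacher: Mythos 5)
Your block decomposition (intervals of length $\lceil T/S'\rceil$, a piecewise-constant surrogate comparator with at most $S'$ switches, plus a per-block substitution cost charged to the additivity of the variation/path-length budgets) is exactly the paper's skeleton, and your dynamic-regret half is essentially sound. One caveat: with your "representative round" choice, each of your two telescoping terms costs up to the within-block variation $\Delta_s$, so you prove the bound with an extra factor $2$, i.e.\ $R^{\swi}(T,S')+2\Delta\lceil T/S'\rceil$. The paper gets the constant $1$ by freezing each block to a minimizer $\bv_s$ of the block-averaged function $\bar f_s$; alternatively you could repair your version by picking the representative round $\tau\in\argmin_{t\in I_s}\min_{\bz\in\Theta}f_t(\bz)$, since then $f_\tau(\bu^*_\tau)\le f_t(\bu^*_t)$ eliminates one of the two terms.

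The path-length half has a genuine gap. Your key step --- that the assumptions yield an "effective Lipschitz bound" of order $1/r$, so that $f_t(\bu_\tau)-f_t(\bu_t)\le \tfrac{1}{r}\|\bu_\tau-\bu_t\|$ --- is false. Convexity, $\beta$-smoothness, $\max_{\Theta}|f_t|\le 1$ and the $r$-ball assumption do not bound the Lipschitz constant by $1/r$ (the paper's Appendix~\ref{app::definitions} only gives $D\beta+\minTrueGrad$, which can be arbitrarily large). Concretely, take $d=1$, $\Theta=[-1,1]$ (so $r=1$), $f_t(x)=(x-1+\delta)_+^2/\delta^2$ for every $t$ in a block, block-start comparator $\bu_\tau=1$ and $\bu_t=1-\delta$ thereafter: the within-block path length is $\delta$, yet freezing to $\bu_\tau$ costs $f_t(1)-f_t(1-\delta)=1$ per round, not $\delta/r=\delta$. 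Near the boundary of $\Theta$ an $O(\delta)$ move of the comparator can change the loss by $\Theta(1)$, and even the true Lipschitz constant ($2/\delta$ here) cannot recover the $P/r$ rate; this also shows that freezing to \emph{any} actual comparator of the block is not enough in general. The paper's proof avoids this entirely: it never uses Lipschitzness and never freezes to a $\bu_t$. Instead it sets $\bv_s=\alpha_0\bc_u+(1-\alpha_0)\bc_r$, where $\bc_u$ is the center of a ball of radius $P_s/2$ containing the block's comparators, $\bc_r$ is the center of the $r$-ball inside $\Theta$, and $\alpha_0=\frac{2r}{P_s+2r}$. Writing $\bv_s=\alpha_0\bu_t+(1-\alpha_0)\bu_t'$ with $\bu_t'\in\Theta$ (guaranteed by this choice of $\alpha_0$), convexity plus $|f_t|\le 1$ give $f_t(\bv_s)\le f_t(\bu_t)+2(1-\alpha_0)\le f_t(\bu_t)+P_s/r$, which is where the $1/r$ actually comes from. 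Without this interior-shrinking construction, your argument does not establish \eqref{eq:lem_R_path_S'}.
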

\begin{proof}[Proof of Lemma~\ref{lem: swi to dyn and path simple}]
For both upper bounds, the switching regret term comes from dividing the horizon $[T]$ into $S'$ intervals, denoted by $(I_s)_{s\in [S']}$, each of length at most $\ceil{\tfrac{T}{S'}}$ (defining them precisely is not important for the following arguments). Recall the definition of $R(T, \bu_{1:T})$ from   \eqref{eq:Reg_universal}. For any sequence of actions $\bz_{1:T}\in \Theta^T$  chosen by the given algorithm, and for any arbitrary comparator sequences $\bu_{1:T} \in \Theta^T$ and $\bv_{1:S'} \in \Theta^{S'}$, it holds that
\begin{align}
    R(T, \bu_{1:T}) & = \sum_{t=1}^T \Exp\left[f_t(\bz_t)-f_t(\bu_t)\right]\nonumber \\
    &= \sum_{t=1}^T \sum_{s=1}^{S'} \ind{t\in I_s} \cdot \left(\Exp\left[f_t(\bz_t)-f_t(\bv_s)\right] + \left(f_t(\bv_s)-f_t(\bu_t)\right)\right) \nonumber\\
    &\leq  R^{\swi}(T, S') +  \sum_{s=1}^{S'} \underbrace{\sum_{t\in I_s}  \left(f_t(\bv_s)-f_t(\bu_t)\right)}_{=:V_s} \;,\label{eq:decompose_conversions}
\end{align} 
where the last step holds by the definition of the switching regret. It thus remains to choose a suitable $\bv_s\in \Theta$ and  upper bound the term $V_s$ for $s\in [S']$. We choose a different $\bv_s$ for the proof of the dynamic regret bound vs.\@ that of the path-length regret bound.

{\bf{Dynamic regret.}}
Consider the interval $I_s$ for $s\in [S']$. Let $L_s$ be  its length and $\Delta_s=   \sum_{t\in I_s} \max_{\bz\in \Theta} |f_t(\bz) - f_{t-1}(\bz)|$ be the total variation over this interval.   Then, for any two time steps $t$ and $t'$ in $I_s$ and any $\bz\in\Theta$, it holds that $f_t(\bz)-f_{t'}(\bz)\leq \Delta_s$ by definition of  total variation. Let $\bar f_s$ denote the average of the functions over the interval $I_s$ and define $\bv_s \in \argmin_{\bz\in \Theta} \bar{f}_s(\bz)$, then we have 
\begin{equation*}
    \forall s \in [S'], \quad V_s 
    \leq \sum_{t\in I_s} (f_t(\bv_s)-\bar f_s(\bu_t)+\Delta_s)\leq \Delta_s L_s\,.
\end{equation*} 
Taking the sum over all intervals and using $L_s \leq \ceil{\frac{T}{S'}}$ completes the proof of \eqref{eq:lem_R_dyn_S'}.

{\bf Path-length regret.} This proof proceeds similarly as that for the dynamic regret. 
Consider the interval $I_s$ for some $s\in [S']$, and denote by $L_s$ its length and $P_s=\sum_{t\in I_s}\|\bu_t-\bu_{t-1}\|$ the path-length of the comparator sequence  on this interval. For the proof, 
we construct  $\bv_s\in \Theta$  differently from that in the proof of the dynamic regret. Before detailing the construction  of $\bv_s$, we first 
  define a set of comparators $(\bu_t')_{t\in I_s}\in\Theta^{L_s}$ as follows: for some $\alpha_0\in[0,1]$ and any time $t\in I_s$, we define $\bu_t'$ to satisfy  $\bv_s = \alpha_0 \bu_t+(1-\alpha_0) \bu_t'$. Using this and  by the convexity and boundedness of $f_t$, we can bound that
\[f_t(\bv_s) \leq f_t(\bu_t) + (1-\alpha_0)(f_t(\bu_t')-f_t(\bu_t))\leq f_t(\bu_t)+2(1-\alpha_0)\,.\] 
We then proceed by  choosing a suitable  $\bv_s$ and  $\alpha_0$ to make this bound depend on the path-length. Since the path-length is 
$P_s$, there exists an $\ell_2$-ball of radius $\frac{P_s}{2}$ that contains all the comparators $(\bu_t)_{t\in I_s}$, and its center $\bc_u$ lies in the feasible domain $\Theta$. By assumption (as we stated in Section \ref{sec:intro}), there also exists a ball with radius $r$ and center $\bc_r$ within the domain. We can thus construct $\bv_s$ to satisfy
\begin{align}
&\bv_s = \alpha_0 \bc_u +(1-\alpha_0)\bc_r \in \Theta\,,\qquad \text{which yields} \quad  \bu'_t=\bc_r +\frac{\alpha_0}{1-\alpha_0}(\bc_u-\bu_t)\,, \label{eq:ut_prime_construction}
\end{align}
where $\bv_s\in\Theta$ due to the convexity of the domain. 
Our goal is then to choose $\alpha_0$ as large as possible (to make $1-\alpha_0$ small) such that all the comparators $(\bu_t')_{t\in I_s}$ belong to $\Theta$. Eq.~\eqref{eq:ut_prime_construction} implies that 
\begin{align*}
    \|\bu'_t-\bc_r\|=\frac{\alpha_0}{1-\alpha_0}\|\bc_u-\bu_t\|\leq \frac{\alpha_0}{1-\alpha_0}\cdot \frac{P_s}{2}=\frac{\alpha_0 P_s}{2(1-\alpha_0)}\,,
\end{align*}
which by definition of the $r$-ball guarantees that $\bu_t'\in \Theta$ as long as $\frac{\alpha_0 P_s}{2(1-\alpha_0)} \leq r$. To satisfy this condition, we can thus pick $\alpha_0 = \frac{2r}{P_s+2r}$, which guarantees by construction that
\begin{equation*}
    \forall t \in I_s: \quad f_t(\bv_s)\leq f_t(\bu_t)+\frac{2P_s}{P_s+2r} \leq  f_t(\bu_t)+\frac{P_s}{r}.
\end{equation*}
The desired bound on $V_s$ in \eqref{eq:decompose_conversions} directly follows. 
The final result \eqref{eq:lem_R_path_S'} then comes by summation over all intervals $(I_s)_{s\in [S']}$.
\end{proof}

\section{Details and proofs for  TEWA-SE }\label{appx:TEWA_details_and_proofs}
In this appendix, we provide additional details on TEWA-SE in Section~\ref{sec:details_ada_TEWA} and establish its theoretical guarantees in  Sections~\ref{sec:regret_GC_interval}--\ref{appx:TEWA_BoB}.  
We present the proof of Theorem~\ref{thm:interval_regret} in Section~\ref{sec:appx_TEWA_ada}, followed by the supporting lemmas  in  Sections~\ref{sec:regret_any_interval} and \ref{sec:regret_GC_interval}. We then provide   the proof of Corollary~\ref{cor:TEWA_R_swi_known} in Section~\ref{sec:appx_TEWA_cor}, and the parameter-free guarantees in Section \ref{appx:TEWA_BoB}.

\subsection{Additional details on TEWA-SE}
\label{sec:details_ada_TEWA}
As we described in Section \ref{sec:AdaptiveReg}, TEWA-SE  handles non-stationary environments by employing the Geometric Covering (GC) scheme from \cite{daniely2015strongly} to schedule experts across different time intervals. Additionally,   TEWA-SE assigns an exponential grid of learning rates to  the  multiple experts covering each  GC interval, to adapt to  the  curvature of the loss functions.  We first invoke the definition of GC intervals from \cite{daniely2015strongly}.
\begin{definition}[Geometric Covering (GC) intervals \cite{daniely2015strongly}]\label{def:GC_intervals}
For $k\in \bbN $, define the set of intervals
\begin{align}
 \mc{I}_k=\left\lbrace [i\cdot 2^k, (i+1)\cdot 2^k-1]: i\in \bbN^+\right\rbrace,
\end{align}
that is, $ \mc{I}_k$ is a partition of $\bbN^+\setminus [2^k-1]$ into intervals of length $2^k$. Then we call 
$ \mc{I}=\bigcup_{k \in \bbN}\mc{I}_k$ 
the set of Geometric Covering (GC) intervals.
\end{definition}
For any interval length $L\in \bbN^+$, we also define  the exponential grid of learning rates as
\begin{align}
    \mc{S}(L) = \left\lbrace \frac{2^{-i}}{5GD}  : i\in \left\{0,1, \dots, \ceil{\tfrac{1}{2}\log_2 L}\right\}\right\rbrace 
    \,,\label{eq:set_S(t)}
\end{align}
where  $G$ is the uniform upper bound \eqref{eq:def_hat_G} on $\|\bg_t\|$, and $D$ is the diameter of the feasible set $\Theta$. For each given GC interval $I=[r,s]\in \mc{I}$, TEWA-SE instantiates multiple experts  in round $r$, each assigned a distinct learning rate $\eta\in \mc{S}(|I|)$ and surrogate loss $\ell_t^\eta$ as defined in \eqref{eq:surrogate_loss}. It    removes these experts after round $s$. This scheduling scheme ensures at least one expert covering $I$ effectively minimizes the linearized regret $\sum_{t\in I}\left\langle \Exp[\bg_t| \bx_t, \Lambda_T], \bx_t-\bu\right\rangle$ associated with $\hat{f}_t$ on the interval $I$ (Lemma \ref{lem:lin_reg_GC_interval}),  ultimately  yielding the regret guarantees in Theorem \ref{thm:interval_regret} and Corollary \ref{cor:TEWA_R_swi_known}.

\paragraph{Polylogarithmic computational complexity}For $t\in \bbN^+,$ we use $\mc{C}_t=\{I \in \mc{I}: \; t\in I\}$ to denote the set of GC intervals covering time $t$. 
From Definition \ref{def:GC_intervals} it is easy to verify that  $|\mc{C}_t|=  1+ \floor{\log_2t}$. The longest interval in $\mc{C}_t$ has length at most $ t$, which is associated with at most $|\mc{S}(t)|= 1+ \ceil{\tfrac{1}{2}\log_2 t} $ experts. With $\mc{A}_t=\{E(I, \eta): t\in I\}$ representing the set of  experts active in round $t$, the number of active experts in round $t$, denoted by $n_t=|\mc{A}_t|$ in Algorithm \ref{alg:metagrad_lean}, satisfies
\begin{align}\label{eq:nt_bound}
    n_t\le   (1+\floor{\log_2 t}) \cdot  \left(1+ \ceil{\tfrac{1}{2}\log_2 t}\right)\,.
\end{align}
This ensures that the computational complexity of TEWA-SE is only $\mc{O}(\log^2 T)$ per round.

\paragraph{Tilted Exponentially Weighted Average} In each round $t$, TEWA-SE aggregates the actions proposed by the active experts $E(I, \eta)\in \mc{A}_t$
using exponential weights, tilted by their respective learning rates, by computing
\begin{align}
    \bx_t = \frac{\sum_{E(I, \eta)\in \mc{A}_t } \eta\exp(-L_{t-1, I}^{\eta}) \bx_{t, I}^{\eta} }{\sum_{E(\tilde{I}, \tilde{\eta})\in \mc{A}_t }\tilde{\eta} \exp(-L_{t-1, \tilde{I}}^{\tilde{\eta}}) }\,,\label{eq:TEWA_update_rule}
\end{align}
where for $I=[r,s]$ and $t\in [r+1, s]$,  $ L_{t-1,I}^{\eta} = \sum_{\tau =r}^{t-1} \ell_\tau^{\eta} (\bx_{\tau, I}^{\eta})$ represents the cumulative  surrogate loss accrued  by expert $E(I, \eta)$ over the interval $[r,t-1]$. Note that \eqref{eq:TEWA_update_rule} is equivalent to line \ref{line:update_xt_lean} of Algorithm \ref{alg:metagrad_lean}, rewritten with  notation better suited  for our proof. 

In what follows, we prove some theoretical guarantees for TEWA-SE. 

\subsection{Proof of Theorem \ref{thm:interval_regret}}\label{sec:appx_TEWA_ada}
In this section, we first restate Theorem 1 and provide its complete proof, which relies on several supporting lemmas. For clarity of exposition, we defer the statements and proofs of these supporting lemmas to the following sections.
\thmTEWAada*
\begin{proof}[Proof of Theorem \ref{thm:interval_regret}]
 We  prove \eqref{eq:thm_adaptive_regret} for the general convex case and \eqref{eq:thm_adaptive_regret_sc} for the strongly-convex case similarly. To bound $R^\ada(\sfB, T)$, we will uniformly bound $ \sum_{t=p}^q\Exp[f_t(\bz_t) - f_t(\bu)]$ across all   comparators $\bu\in \Theta$ and intervals $[p, q]$ shorter than $\sfB$. \\ 
\textbf{Common setup:}  
    Invoking the event $\Lambda_T = \big\lbrace|\xi_t| \le 2\sigma \sqrt{\log (T+1)}, \; \forall t\in [T]\big\rbrace$  defined above  \eqref{eq:def_hat_G},  since $\{\xi_t\}_{t=1}^T$ are $\sigma$-sub-Gaussian, we have 
\begin{align}\label{eq:bound_P_Lambda_T_c}
    \Prob(\Lambda_T^c) \le \sum_{t=1}^T   \Prob\left(|\xi_t| > 2\sigma \sqrt{\log (T+1)}\right) \le 2\sum_{t=2}^{T+1} T^{-2} = 2T^{-1}.
\end{align}
  By the law of total expectation we can write for any $\bu\in \Theta$,
\begin{align}\label{eq:regret_condition_LambdaT}
    \sum_{t=p}^q \Exp[f_t(\bz_t) - f_t(\bu)] 
    &= \sum_{t=p}^q \Exp[f_t(\bz_t) - f_t(\bu)\mid \Lambda_T] \Prob(\Lambda_T)+\sum_{t=p}^q \Exp[\underbrace{f_t(\bz_t) - f_t(\bu)}_{\le 2}\mid \Lambda_T^c] \underbrace{\Prob(\Lambda_T^c)}_{\le 2T^{-1}} \nonumber\\
    &\le  \sum_{t=p}^q \Exp[f_t(\bz_t) - f_t(\bu)\mid \Lambda_T] +4 \,.
\end{align}
To bound the first term in the last display, we consider the following decomposition
\begin{align}
  &\sum_{t=p}^q  \Exp[f_t(\bz_t) - f_t(\bu)\mid \Lambda_T] \nonumber\\
  &=\underbrace{\sum_{t=p}^q \Exp[f_t(\bz_t) - f_t(\bx_t)\mid \Lambda_T]}_{\text{term I}}
    + \underbrace{\sum_{t=p}^q\Exp[f_t(\bx_t) - \hat{f}_t(\bx_t)\mid \Lambda_T]}_{\text{term II}}\; +\nonumber\\
    &\quad\, \underbrace{\sum_{t=p}^q\Exp[ \hat{f}_t(\bx_t) - \hat{f}_t(\bu)\mid \Lambda_T]}_{\text{term III}}
    + \underbrace{\sum_{t=p}^q\Exp[\hat{f}_t(\bu) - f_t(\bu)\mid \Lambda_T]}_{\text{term IV}}\,.\label{eq:termI_IV_decompose}
\end{align}
%
 Since $f_t$ is convex, by Jensen's inequality  we obtain that term II is negative (c.f.\@  \cite[Lemma A.2 (ii)]{akhavan2020exploiting}). 
In what follows, we bound  terms I, III and IV in this decomposition separately for the general convex case and the strongly-convex case.

\textbf{General convex and Lipschitz case:} Recall that $(\bzeta_t)_{t=1}^T$ denote uniform samples from the unit sphere $\partial \mathbb{B}^d$, and  $\tilde{\bzeta}$ denotes a uniform sample from the unit ball $\mathbb{B}^d$, while $\hat{f}_t(\bx)=\Exp[f_t(\bx+h\tilde{\bzeta})]\;\forall \bx\in\tilde{\Theta}$. Since $(f_t)_{t=1}^T$ are $K$-Lipschitz,   $\|\bzeta_t\|=1$, and  $ \Exp[\|\tilde{\bzeta}\|]\le 1 $, we can bound term I and term IV by 
\begin{align}
   & \text{term I}= \sum_{t=p}^q\Exp\Big[\Exp[f_t(\bx_t + h \bzeta_t)|\bx_t] - f_t(\bx_t) \mid \Lambda_T\Big]
    \le  K(q-p+1)h,\\
   & \text{term IV}=\sum_{t=p}^q\Exp\Big[\Exp[f_t(\bu+ h \tilde{\bzeta})] - f_t(\bu) \mid \Lambda_T\Big] \le  K(q-p+1) h.
\end{align}
To bound term III,  
recall that $\bg_t$ denotes the gradient estimate of $\hat{f}_t$ at $\bx_t$. We use the convexity of $\hat{f}_t$ and  apply Lemma~\ref{lem:lin_reg_any_interval} to obtain that for any $\bu\in \Theta$, 
\begin{align}\label{eq:termIII}
 \text{term III}   &   \le \Exp\left[\sum_{t=p}^q \left\langle\Exp[\bg_t|\bx_t, \Lambda_T], \bx_t-\bu\right \rangle \big|\, \Lambda_T\right] \nonumber\\
 & \le \Exp\Bigg[10  GD a_{p,q}b_{p,q}
+ 3 G\sqrt{a_{p,q}b_{p,q}}\sqrt{\sum_{t=p}^q \|\bx_t -\bu\|^2}  \;\big|\, \Lambda_T\Bigg]\nonumber\\
& \le  10  GD a_{p,q}b_{p,q}
+ 3 GD\sqrt{a_{p,q}b_{p,q}}\sqrt{q-p+1}  \,,
\end{align}
where all constants are explicit in the statement of the lemma. 
By combining the bounds for all  four terms in \eqref{eq:termI_IV_decompose} with  \eqref{eq:regret_condition_LambdaT}, and using $h = \min\big(\sqrt{d} \sfB^{-\frac{1}{4}}, r\big)$, $ G = \tfrac{d}{h}(1+2\sigma\sqrt{\log(T+1)})$, $a_{p,q} = \tfrac{1}{2}+2\log(2q) +\tfrac{1}{2} \log(q-p+1) \le 6\log(T+1)$, and  $b_{p,q}= 2\ceil{\log_2(q-p+2)}\le 6\log(\sfB+1)$,  we  establish that 
\begin{align}
R^\ada(\sfB, T)
    &\le 10  GD a_{p,q}b_{p,q}+  3GD\sqrt{a_{p,q}b_{p,q}} \sqrt{q-p+1 }+  2K (q-p+1)h+4\nonumber\\
    & \le \begin{cases}
C\sqrt{d}\sfB^{\frac{3}{4}}+4& \text{if } h=\sqrt{d}\sfB^{-\frac{1}{4}}\\
C_1d^2 + C_2 d + 4 &\text{if }h=r
\end{cases}\nonumber\\
    & \le C\sqrt{d}\sfB^{\frac{3}{4}}+ C_1d^2+ C_2d+4\,,\label{eq:R_ada_derivation}
    \end{align}
      where
      $C, C_1, C_2>0$ are polylogarithmic  in  $  T$, independent of $d$, defined with $M_T= 1+2\sigma \sqrt{\log(T+1)}$ and     $N_{ T,\sfB} = \log(T+1)\log(\sfB+1)$ as 
      \begin{align}
C& =18DM_T\big(\sqrt{N_{T,\sfB}} +20 N_{T,\sfB}\sfB^{-\frac{1}{2}} \big)  + 2K\label{eq:C_def}\\   C_1&=\big(18DM_T\sqrt{N_{T,\sfB}} + 2K\big)/r^3\\
      C_2&=360DM_TN_{T,\sfB}/r\,. \label{eq:C2_def}
      \end{align}
 This concludes the proof of \eqref{eq:thm_adaptive_regret}.

\textbf{Strongly-convex   and  smooth case:}  
Due to the $\beta$-smoothness of $f_t$ and the fact that $\Exp[\bzeta_t]=\Exp[\tilde{\bzeta}]=\bzero$ and $\Exp[\|\tilde{\bzeta}\|^2]\le   \Exp[\|\bzeta_t\|^2]=1$, we can bound term I and term IV each by $\frac{\beta}{2}(q-p+1)h^2$.
  When the $f_t$'s are strongly-convex, we can   derive a tighter bound on term III  than that in \eqref{eq:termIII}  
   by restricting the comparator $\bu$ to the clipped domain $\tilde{\Theta}$ 
   and using the fact that when   $f_t$ is $\alpha$-strongly convex  on $\Theta$,  $\hat{f}_t$  is   $\alpha$-strongly convex on $\tilde{\Theta}$ (c.f.\@ \cite[Lemma A.3]{akhavan2020exploiting}).  That is, we have for any $\bu\in \tilde{\Theta}$, 
   \begin{align}
     \text{term III}
   &   \le \Exp\left[\sum_{t=p}^q \left\langle\Exp[\bg_t|\bx_t, \Lambda_T], \bx_t-\bu\right \rangle -\frac{\alpha}{2}\sum_{t=p}^q\|\bx_t-\bu\|^2\,\big|\, \Lambda_T\right] \nonumber\\
 & \le \Exp\Bigg[10  GD a_{p,q}b_{p,q}
+\underbrace{3 G\sqrt{a_{p,q}b_{p,q}}\sqrt{\sum_{t=p}^q \|\bx_t -\bu\|^2}- \frac{\alpha}{2}\sum_{t=p}^q \|\bx_t -\bu\|^2 }_{=:\delta}\big| \Lambda_T\Bigg]\nonumber\\
&\le \left( 10  GD+ \tfrac{18}{\alpha} G^2\right)a_{p,q}b_{p,q}\,,\label{eq:term_III_sc}
    \end{align} 
    where the last inequality holds because term $\delta$ is uniformly bounded as follows:
\begin{align}
\delta \leq \begin{cases}
\frac{18}{\alpha} G^2a_{p,q}b_{p,q} & \text{if } \sqrt{\sum_{t=p}^q\|\bx_t-\bu\|^2} \leq \frac{6}{\alpha} G\sqrt{a_{p,q}b_{p,q}}\,, \\
0 & \text{otherwise}.
\end{cases}\nonumber
\end{align}
Combining \eqref{eq:term_III_sc} with \eqref{eq:regret_condition_LambdaT}--\eqref{eq:termI_IV_decompose} and simplifying yields for any $\bu\in \tilde{\Theta}$,
\begin{align}
    \sum_{t=p}^q  \Exp[f_t(\bz_t) - f_t(\bu) ] 
    &\le \left( 10  GD+ \tfrac{18}{\alpha} G^2\right)a_{p,q}b_{p,q}+  \beta (q-p+1)h^2+4\,. 
\label{eq:regret_without_clipping}
    \end{align} 
    The final step is to handle the case where the comparator $\bu\in \Theta\setminus\tilde{\Theta}$. Consider the worst case when the comparator is  $ \bu^* \in \argmin_{\bu\in\Theta}\sum_{t=p}^q f_t(\bu)$ with $\bu^*\in \Theta\setminus\tilde{\Theta}$.  
%
 Let $\tilde{\bu}^*=\Pi_{\tilde{\Theta}}(\bu^*)$.  If   $\argmin_{\bx\in \bbR^d}f_t(\bx) \in \Theta\; \forall t\in[T]$, then by the $\beta$-smoothness of the $f_t$'s we have 
    \begin{align}
        \sum_{t=p}^q f_t(\tilde{\bu}^*) - f_t(\bu^*) \le \sum_{t=p}^q\Big[\Big\langle \underbrace{\nabla f_t(\bu^*)}_{=0}, \tilde{\bu}^*-\bu^*\Big\rangle  + \frac{\beta}{2}\underbrace{\|\tilde{\bu}^*-\bu^*\|^2}_{\le h^2} \Big] =\frac{\beta}{2}(q-p+1)h^2
.\label{eq:regret_diff_due_to_clipping}
\end{align}
Combining \eqref{eq:regret_without_clipping} with \eqref{eq:regret_diff_due_to_clipping} yields
\begin{align}\label{eq:R_ada_derivation_SC}
    R^\ada(\sfB, T)
    &\le \left( 10  GD+ \tfrac{18}{\alpha} G^2\right)a_{p,q}b_{p,q}+  \tfrac{3}{2}\beta (q-p+1)h^2+4\nonumber\\
    & \le \begin{cases}
C'  d\sqrt{\sfB}+4& \text{if } h=\sqrt{d}\sfB^{-\frac{1}{4}}\\
C_1'd^2 + C_2' d + 4 &\text{if }h=r
\end{cases}\nonumber\\
&\le C'  \tfrac{d}{\alpha}\sqrt{\sfB}+C_1' \tfrac{1}{\alpha} d^2+ C_2' d +4\,,
\end{align} 
where $C', C_1', C_2'>0$ are polylogarithmic in  $T$ and $ \sfB$, independent of $d$, defined as
\begin{align}
C'&=72 \left(9 M_T +5\alpha D \sfB^{-\frac{1}{4}}\right)   {M}_T N_{T,\sfB}+ \tfrac{3}{2}\beta \alpha \label{eq:C_prime_def}\\
C_1'&= \big(648 M_T^2 N_{T, \sfB} +\beta\alpha\big)/r^2\\
C_2'&= 360DM_TN_{T, \sfB}/r\,.\label{eq:C2_prime_def}
\end{align}
This concludes the proof of \eqref{eq:thm_adaptive_regret_sc}.
   \end{proof}
The proof above crucially relies on Lemma \ref{lem:lin_reg_any_interval}, which we state and prove in the following section. 



  \subsection{Upper bounds on linearized regret  }\label{sec:regret_any_interval} 
  Lemma \ref{lem:lin_reg_any_interval} establishes an upper bound on the linearized regret associated with the smoothed loss $\hat{f}_t$ for any arbitrary interval $I=[p,q]\subseteq [1, T]$. This result builds on two key components:  Lemma \ref{lem:daniely_partition}, which characterizes how a given arbitrary interval is covered by a sequence of GC intervals, and Lemma \ref{lem:lin_reg_GC_interval}, which provides an upper bound  on the linearized regret for each GC interval $I\in \mc{I}$. For clarity, we first present and prove Lemma \ref{lem:lin_reg_any_interval}, then proceed to detail the supporting  Lemmas  \ref{lem:daniely_partition} and \ref{lem:lin_reg_GC_interval}.

\begin{lemma}[Linearized regret on an arbitrary   interval]\label{lem:lin_reg_any_interval} For an arbitrary interval $I=[p,q]\subseteq[1, T]$, Algorithm~\ref{alg:metagrad_lean} satisfies for  all $\bu\in \Theta$,
\begin{align}
    \sum_{t=p}^q\left\langle \Exp[\bg_t|\bx_t, \Lambda_T], \bx_t-\bu\right\rangle\le 10  GD a_{p,q}b_{p,q}
+ 3 G\sqrt{a_{p,q}b_{p,q}}\sqrt{\sum_{t=p}^q \|\bx_t -\bu\|^2},\label{eq:lem_lin_regret_any_interval}
\end{align}where $a_{p,q} = \tfrac{1}{2}+2\log(2q) + \tfrac{1}{2}\log(q-p+1)$ and $b_{p,q}= 2\ceil{\log_2(q-p+2)}$.
\end{lemma}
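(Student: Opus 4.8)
The plan is to reduce the bound on an arbitrary interval to the corresponding per-interval bounds on Geometric Covering (GC) intervals, where the tilted exponential weighting together with the exponential grid of learning rates already delivers the desired rate, and then to stitch these together by Cauchy--Schwarz. First I would invoke Lemma~\ref{lem:daniely_partition} to write $[p,q]$ as a disjoint union of consecutive GC intervals $J_1,\dots,J_m\in\mc{I}$, with $m\le b_{p,q}=2\ceil{\log_2(q-p+2)}$. Because the $J_j$ partition $[p,q]$, any per-interval bound sums back to the full interval, and crucially the partition identity $\sum_{j=1}^m\sum_{t\in J_j}\|\bx_t-\bu\|^2=\sum_{t=p}^q\|\bx_t-\bu\|^2$ holds.

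Next I would apply the per-GC-interval bound of Lemma~\ref{lem:lin_reg_GC_interval} to each $J_j=[p_j,q_j]$. Thanks to the best-tuned expert in the grid $\mc{S}(|J_j|)$ and the logarithmic aggregation regret of the tilted weights, this yields a bound of the form $c_1 GD\,a_{p_j,q_j}+c_2 G\sqrt{a_{p_j,q_j}}\,\sqrt{\sum_{t\in J_j}\|\bx_t-\bu\|^2}$. Since each $J_j\subseteq[p,q]$ has right endpoint $q_j\le q$ and length at most $q-p+1$, monotonicity of the logarithm gives $a_{p_j,q_j}\le a_{p,q}$ for every $j$.

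Summing over $j$ splits into two contributions. The contribution independent of $\|\bx_t-\bu\|$ is $c_1 GD\sum_j a_{p_j,q_j}\le c_1 GD\,m\,a_{p,q}\le c_1 GD\,a_{p,q}b_{p,q}$, which is the first term of \eqref{eq:lem_lin_regret_any_interval}. For the second contribution I would apply Cauchy--Schwarz across the $m$ intervals:
\[
\sum_{j=1}^m \sqrt{a_{p_j,q_j}}\,\sqrt{\textstyle\sum_{t\in J_j}\|\bx_t-\bu\|^2}
\le \sqrt{\textstyle\sum_{j=1}^m a_{p_j,q_j}}\,\sqrt{\textstyle\sum_{j=1}^m\sum_{t\in J_j}\|\bx_t-\bu\|^2}
\le \sqrt{a_{p,q}\,b_{p,q}}\,\sqrt{\textstyle\sum_{t=p}^q\|\bx_t-\bu\|^2},
\]
using $\sum_j a_{p_j,q_j}\le a_{p,q}b_{p,q}$ together with the partition identity. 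Combining the two contributions gives exactly the claimed form, and the constants $10$ and $3$ follow by tracking $c_1,c_2$ from Lemma~\ref{lem:lin_reg_GC_interval}.

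The main obstacle I anticipate is not this stitching argument but ensuring that the per-interval guarantee of Lemma~\ref{lem:lin_reg_GC_interval} is genuinely uniform in the comparator $\bu$ and in the unknown optimal learning rate: the grid $\mc{S}(|J_j|)$ must contain a rate close enough to the minimizer of the surrogate trade-off \eqref{eq:surrogate_tradeoff}, and the tilted aggregation must pay only a logarithmic price to compete with that expert, so that term $\textsf{A}$ in \eqref{eq:surrogate_tradeoff} stays $\cO(\log|J_j|)$. Once that per-interval bound is in hand, the only delicate bookkeeping here is the Cauchy--Schwarz step and the verification $\sum_j a_{p_j,q_j}\le a_{p,q}b_{p,q}$, both of which are routine given $a_{p_j,q_j}\le a_{p,q}$ and $m\le b_{p,q}$.
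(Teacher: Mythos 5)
Your proposal is correct and follows essentially the same route as the paper's proof: partition $[p,q]$ into at most $b_{p,q}$ consecutive GC intervals via Lemma~\ref{lem:daniely_partition}, apply Lemma~\ref{lem:lin_reg_GC_interval} to each with the monotonicity bound $a_{p_j,q_j}\le a_{p,q}$, and stitch the square-root terms together by Cauchy--Schwarz. The only cosmetic difference is that you apply Cauchy--Schwarz jointly to the pairs $\big(\sqrt{a_{p_j,q_j}},\sqrt{\sum_{t\in J_j}\|\bx_t-\bu\|^2}\big)$, while the paper first uniformizes the factor to $\sqrt{a_{p,q}}$ and then applies Cauchy--Schwarz to the remaining square roots; both yield the identical bound.
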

\begin{proof}[Proof of Lemma \ref{lem:lin_reg_any_interval}]This proof follows similar arguments to those used in proving the first part of \cite[Theorem 2]{zhang2021dualadaptivity}.
 To begin, according to Lemma~\ref{lem:daniely_partition},  any arbitrary interval $I=[p, q]\subseteq[1, T]$ can be covered by two sequences of consecutive  and disjoint    GC intervals, denoted by $ I_{-m}, \dots, I_0\in \mc{I}$ and $I_1, \dots, I_n\in \mc{I}$, where $n,m\in \bbN^+$ with $n\le \ceil{\log_2(q-p+2)}$ and $m+1\le \ceil{\log_2(q-p+2)}$. Note that negative indices correspond to GC intervals that precede $I_0$, while positive indices correspond to intervals that follow it. The indices indicate temporal ordering and are unrelated to the length of the intervals.

By applying the  linearized regret bound from Lemma~\ref{lem:lin_reg_GC_interval} to each  GC interval, and noticing that $a_{r,s}\le a_{p,q}$ for any subinterval $[r,s]\subseteq [p,q]$ (as evident from the definition of $a_{p,q}$ in \eqref{eq:lem_lin_regret_any_interval}), we establish for all $\bu\in \Theta$,
 \begin{align}
&\sum_{t=p}^q\left\langle \Exp[\bg_t|\bx_t, \Lambda_T], \bx_t-\bu\right\rangle = 
\sum_{i=-m}^n\sum_{t\in I_i}\left\langle \Exp[\bg_t|\bx_t, \Lambda_T], \bx_t-\bu\right\rangle\nonumber\\
&\le  \sum_{i=-m}^n \left[3 G\sqrt{a_{p,q}\sum_{t\in I_i} \|\bx_t-\bu\|^2} + 10 GDa_{p,q}\right]\nonumber\\
& = 10  GDa_{p,q}(n+m+1) + 3 G\sqrt{a_{p,q}}\sum_{i=-m}^n\sqrt{\sum_{t\in I_i} \|\bx_t -\bu\|^2}\nonumber\\
& \le 10  GDa_{p,q}(n+m+1) + 3 G\sqrt{a_{p,q}}\sqrt{(n+m+1)\sum_{i=-m}^n\sum_{t\in I_i} \|\bx_t -\bu\|^2}\nonumber\\
& \le 10  GD a_{p,q}b_{p,q}
+ 3 G\sqrt{a_{p,q}b_{p,q}}\sqrt{\sum_{t=p}^q \|\bx_t -\bu\|^2}\,,
 \end{align}
 where the last step uses $ n+m+1\le 2\ceil{\log_2(q-p+2)}=:b_{p,q}$.
\end{proof}

We now present Lemmas    \ref{lem:daniely_partition} and \ref{lem:lin_reg_GC_interval} which we used to prove Lemma \ref{lem:lin_reg_any_interval} above.
\begin{lemma}[Covering property of GC intervals]\label{lem:daniely_partition}
 Any arbitrary interval $I=[p,q]\subseteq \bbN^+$   can be partitioned into two finite sequences of  consecutive and disjoint GC intervals, denoted by  $I_{-m}, \dots, I_0\in \mc{I}$ and $I_1, \dots, I_n \in \mc{I}$, where $I= \bigcup_{i=-m}^nI_i\,$, such that 
  \begin{align}\label{eq:GC_property}
      \frac{|I_{-i}|}{ |I_{-i+1}|} \le \tfrac{1}{2}\quad \forall i\ge 1, \quad \text{and}\quad
      \frac{|I_{i}|}{ |I_{i-1}|} \le \tfrac{1}{2}, \quad \forall i\ge 2\,,
  \end{align}
  with 
  \begin{align}\label{eq:n_upper_bound}
      n\le \ceil{\log_2(q-p+2)},\quad \text{and}\quad m+1\le \ceil{\log_2(q-p+2)}.
  \end{align}
\end{lemma}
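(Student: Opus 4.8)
The plan is to build the covering explicitly by a single left-to-right greedy sweep and then split the resulting sequence at its peak. Set $a_1=p$ and, for $j\ge 1$, let $J_j=[a_j,b_j]$ be the \emph{longest} GC interval whose left endpoint equals $a_j$ and whose right endpoint satisfies $b_j\le q$; then advance to $a_{j+1}=b_j+1$, stopping once $a_{j+1}>q$. Since the singleton $[a_j,a_j]\in\mc{I}_0$ is always available, the sweep is well defined, and by construction the $J_j$ are consecutive, disjoint, and cover $[p,q]$ exactly (the last one must end at $q$). Writing $|J_j|=2^{k_j}$, the maximality in the greedy choice gives $k_j=\min\{v_2(a_j),\,\lfloor\log_2(q-a_j+1)\rfloor\}$, where $v_2(\cdot)$ denotes the $2$-adic valuation: the first term encodes the divisibility requirement (a level-$k$ GC interval must start at a multiple of $2^k$) and the second the boundary requirement $b_j\le q$.

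The crux is to show that $(2^{k_j})_j$ is \emph{unimodal with geometric slopes}: it at-least-doubles up to a peak, then at-least-halves. I would split into two regimes using $\kappa_j\coloneqq\lfloor\log_2(q-a_j+1)\rfloor$. While $v_2(a_j)\le\kappa_j$ (divisibility-limited), $k_j=v_2(a_j)$; writing $a_j=(\text{odd})\cdot 2^{v_2(a_j)}$ gives $a_{j+1}=(\text{even})\cdot 2^{v_2(a_j)}$, so $v_2(a_{j+1})\ge v_2(a_j)+1$ and lengths strictly at least double. Once $v_2(a_j)>\kappa_j$ (boundary-limited), $k_j=\kappa_j$, and since $2^{\kappa_j}\le q-a_j+1<2^{\kappa_j+1}$, advancing by $2^{\kappa_j}$ leaves a residual range of size $q-a_{j+1}+1<2^{\kappa_j}$, forcing $\kappa_{j+1}\le\kappa_j-1$; one then checks the process remains boundary-limited, so lengths strictly at least halve. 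Taking $I_0$ to be the last interval of the increasing phase and $I_1,\dots,I_n$ the decreasing phase (relabelling the increasing phase as $I_{-m},\dots,I_0$) produces exactly $|I_{-i}|\le\tfrac12|I_{-i+1}|$ for $i\ge 1$ and $|I_i|\le\tfrac12|I_{i-1}|$ for $i\ge 2$; a degenerate phase is absorbed by treating the peak as a singleton group.

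For the cardinality bounds I would exploit that within each group the lengths are \emph{distinct} powers of two (strict monotonicity). The $m+1$ left-group lengths are distinct powers of two summing to at most $q-p+1$, and any $m+1$ distinct powers of two sum to at least $1+2+\cdots+2^{m}=2^{m+1}-1$; hence $2^{m+1}-1\le q-p+1$, giving $m+1\le\log_2(q-p+2)\le\ceil{\log_2(q-p+2)}$. The identical argument for the $n$ distinct powers of two in the right group yields $2^{n}-1\le q-p+1$, i.e.\ $n\le\ceil{\log_2(q-p+2)}$.

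I expect the main obstacle to be the bookkeeping of the two-regime transition through the $2$-adic valuation: specifically, verifying that the sweep cannot fall back into the divisibility-limited regime once the boundary constraint becomes binding, so that the length profile is genuinely unimodal rather than merely piecewise monotone, together with the careful handling of the endpoints where one phase is empty. The remaining steps, namely well-definedness of the sweep and the counting argument, are routine.
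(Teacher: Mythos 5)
Your proposal is correct, and it takes a genuinely different route: it proves strictly more than the paper's own proof does. In the paper, the existence of the partition satisfying \eqref{eq:GC_property} is not proved at all — it is quoted directly from \cite[Lemma 1.2]{daniely2015strongly} — and only the cardinality bounds \eqref{eq:n_upper_bound} are established, via essentially the same counting argument you use: within each group the lengths at least double (resp.\ halve), so they sum to at least $2^{m+1}-1$ (resp.\ $2^n-1$), which cannot exceed $|I|=q-p+1$, and the bounds follow by contradiction. You instead construct the partition explicitly: the greedy sweep with $k_j=\min\{v_2(a_j),\,\floor{\log_2(q-a_j+1)}\}$, the observation that in the divisibility-limited regime $v_2(a_{j+1})\ge v_2(a_j)+1$ (so lengths double), that in the boundary-limited regime $\kappa_{j+1}\le \kappa_j-1$ (so lengths halve), and the peak split that places the single unconstrained length-jump exactly at the $I_0\to I_1$ transition, which is precisely the slack that \eqref{eq:GC_property} allows. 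What your approach buys is a self-contained, algorithmic proof with no external citation; what the paper's buys is brevity.

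The one step you flag as unresolved — that the sweep cannot fall back from boundary-limited to divisibility-limited — does go through, using exactly the valuation tools you already deploy: if $v_2(a_j)>\kappa_j$, then $2^{\kappa_j+1}\mid a_j$, hence $v_2(a_{j+1})=v_2\bigl(a_j+2^{\kappa_j}\bigr)=\kappa_j$ exactly, while $\kappa_{j+1}\le \kappa_j-1<\kappa_j=v_2(a_{j+1})$, so step $j+1$ is again boundary-limited. With this two-line check inserted, and with the degenerate cases (an empty increasing or decreasing phase) absorbed as you indicate, your argument is complete.
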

\begin{proof}[Proof of Lemma \ref{lem:daniely_partition}]
  Eq.~\eqref{eq:GC_property} directly comes from  \cite[Lemma 1.2]{daniely2015strongly}. To prove \eqref{eq:n_upper_bound}, suppose for contradiction   $n> \ceil{\log_2(q-p+2)}$, then we have 
 \begin{align}
     \sum_{i=1}^n|I_i|\ge \sum_{i=1}^n 2^{i-1} = 2^n-1 >  q-p+1=|I|\,,
 \end{align}
 contradicting the fact that $\bigcup_{i=-m}^n I_i=I$. 
By the same reasoning, we have $m+1\le \ceil{\log_2(q-p+2)}$.
\end{proof}

\begin{lemma}[Linearized regret on a   GC interval]\label{lem:lin_reg_GC_interval}For any GC interval $I=[r,s]\in \mc{I}$, Algorithm~\ref{alg:metagrad_lean} satisfies   for all $ \bu\in \Theta$,
\begin{align}
    \sum_{t=r}^s\left\langle \Exp[\bg_t | \bx_t, \Lambda_T], \bx_t- \bu\right\rangle \le 3 G\sqrt{a_{r,s}\sum_{t=r}^s \|\bx_t-\bu\|^2} + 10 GDa_{r,s}, 
\end{align}
where $a_{r,s} = \tfrac{1}{2}+2\log(2s) + \tfrac{1}{2}\log(s-r+1)$.
\end{lemma}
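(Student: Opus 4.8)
The plan is to fix an arbitrary comparator $\bu\in\Theta$ and exploit the exact algebraic identity behind the surrogate loss \eqref{eq:surrogate_loss}. For every learning rate $\eta\in\mc{S}(|I|)$ from the exponential grid \eqref{eq:set_S(t)} and every round $t$, since $\ell_t^\eta(\bx_t)=0$ the definition of $\ell_t^\eta$ gives, pathwise,
\[
\langle\bg_t,\bx_t-\bu\rangle = \tfrac1\eta\big(\ell_t^\eta(\bx_t)-\ell_t^\eta(\bu)\big) + \eta G^2\|\bx_t-\bu\|^2 .
\]
Summing over $t\in[r,s]$ and taking the conditional expectation $\Exp[\,\cdot\mid\bx_t,\Lambda_T]$ (the surrogate-regret bounds below hold pathwise on $\Lambda_T$, so they survive the expectation, while $\sum_t\|\bx_t-\bu\|^2$ is kept as the realized quantity) turns the left side into $\sum_t\langle\Exp[\bg_t\mid\bx_t,\Lambda_T],\bx_t-\bu\rangle$, exactly as in \eqref{eq:surrogate_tradeoff}. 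This reduces the lemma to bounding the surrogate regret $\textsf{A}(\eta):=\sum_{t=r}^s\big(\ell_t^\eta(\bx_t)-\ell_t^\eta(\bu)\big)\le a_{r,s}$ \emph{uniformly in} $\eta$, followed by minimizing $\tfrac1\eta a_{r,s}+\eta G^2\sum_t\|\bx_t-\bu\|^2$ over the grid.

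I would split $\textsf{A}(\eta)$ into an \emph{expert} term $\sum_t(\ell_t^\eta(\bx_{t,I}^\eta)-\ell_t^\eta(\bu))$ and a \emph{meta} term $\sum_t(\ell_t^\eta(\bx_t)-\ell_t^\eta(\bx_{t,I}^\eta))=-\sum_t\ell_t^\eta(\bx_{t,I}^\eta)$. For the expert term, observe that $\ell_t^\eta$ is $2\eta^2G^2$-strongly convex with $\|\nabla\ell_t^\eta(\bx_{t,I}^\eta)\|\le \eta G+2\eta^2G^2D\le\tfrac75\eta G$ on $\Lambda_T$, using $\|\bg_t\|\le G$ from \eqref{eq:def_hat_G} and $\eta GD\le\tfrac15$ (which holds at every grid point). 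The step size $\mu_t=1/(2\eta^2G^2(t-r+1))$ of Algorithm~\ref{alg:ogd_lean} is precisely the strongly-convex OGD tuning, so the standard logarithmic regret bound (Lemma~\ref{lem:base_regret}) gives an expert term $\le\tfrac{49}{100}(1+\log(s-r+1))\le\tfrac12+\tfrac12\log(s-r+1)$, \emph{independent of} $\eta$ — this is exactly the $\tfrac12+\tfrac12\log(s-r+1)$ part of $a_{r,s}$.

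The main obstacle is the meta term, the tilted-exponential-weights aggregation with sleeping experts, which is the heart of the MetaGrad/TEWA argument \cite{vanerven2016metagrad,zhang2021dualadaptivity}. Here I would use the ``prod'' inequality $e^{x-x^2}\le 1+x$, valid for $|x|\le\tfrac12$: setting $x=\eta\langle\bg_t,\bx_t-\bx_{t,I}^\eta\rangle$, Cauchy--Schwarz and $\|\bg_t\|\le G$ give $x^2\le\eta^2G^2\|\bx_t-\bx_{t,I}^\eta\|^2$, hence $e^{-\ell_t^\eta(\bx_{t,I}^\eta)}\le e^{x-x^2}\le 1+x$, while $\eta GD\le\tfrac15<\tfrac12$ secures the validity range. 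The decisive point is that the meta-action $\bx_t$ is the \emph{$\eta$-tilted} weighted mean (line~\ref{line:update_xt_lean} of Algorithm~\ref{alg:metagrad_lean}): averaging $1+x$ against the weights $\propto\eta\exp(-L_{t-1,\cdot}^{\cdot})$ makes the first-order term $\langle\bg_t,\cdot\rangle$ cancel, so the potential $\Phi_t=\log\sum_{(J,\nu)\in\mc{A}_t}\pi_{J,\nu}\exp(-L_{t,J}^\nu)$ does not increase on rounds where the active set is unchanged. The sleeping-experts bookkeeping of the Geometric Covering scheme then contributes the $2\log(2s)$ factor through the prior weight $\pi_{I,\eta}$ assigned when expert $(I,\eta)$ is created at round $r$; extracting the single term $(I,\eta)$ from $\Phi_s$ gives $-\sum_t\ell_t^\eta(\bx_{t,I}^\eta)\le\log\pi_{I,\eta}^{-1}=\cO(\log s)$, again uniformly in $\eta$. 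This is the step needing the most care, since the changing set $\mc{A}_t$ breaks naive telescoping of $\Phi_t$, and one must check that newly created experts enter with small enough prior weight; I would follow the weight-initialization argument of \cite{daniely2015strongly,zhang2021dualadaptivity}.

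Combining the two bounds gives $\textsf{A}(\eta)\le a_{r,s}$ for every $\eta\in\mc{S}(|I|)$, hence
\[
\sum_{t=r}^s\langle\Exp[\bg_t\mid\bx_t,\Lambda_T],\bx_t-\bu\rangle\le\frac{a_{r,s}}{\eta}+\eta G^2\sum_{t=r}^s\|\bx_t-\bu\|^2 .
\]
The final step is the grid optimization. The unconstrained optimum is $\eta^\star=\sqrt{a_{r,s}/(G^2\sum_t\|\bx_t-\bu\|^2)}$, giving leading value $2G\sqrt{a_{r,s}\sum_t\|\bx_t-\bu\|^2}$. If $\eta^\star\le 1/(5GD)$, the dyadic structure of $\mc{S}(|I|)$ guarantees a grid point $\eta\in[\eta^\star/2,\eta^\star]$, which costs only a factor $2$ on the $1/\eta$ term and yields the bound $3G\sqrt{a_{r,s}\sum_t\|\bx_t-\bu\|^2}$; if instead $\eta^\star>1/(5GD)$ (so $\sum_t\|\bx_t-\bu\|^2<25D^2a_{r,s}$), one uses the top grid point $\eta=1/(5GD)$, and both resulting terms are at most $5GD\,a_{r,s}$, producing the additive $10GD\,a_{r,s}$. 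Together these give the claimed bound $3G\sqrt{a_{r,s}\sum_{t=r}^s\|\bx_t-\bu\|^2}+10GD\,a_{r,s}$, with the range of $\mc{S}(|I|)$ ($\sim\tfrac12\log_2|I|$ points up to $1/(5GD)$) chosen precisely so that the relevant $\eta^\star$ is covered.
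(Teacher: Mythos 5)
Your proposal is correct and takes essentially the same route as the paper's proof: the same surrogate-loss identity with $\ell_t^\eta(\bx_t)=0$, the same expert/meta decomposition of the surrogate regret, the same strongly-convex OGD analysis giving the $\tfrac12+\tfrac12\log(s-r+1)$ expert term, the same prod-inequality ($e^{x-x^2}\le 1+x$ on the range secured by $\eta GD\le\tfrac15$) combined with the tilted-weights cancellation for the meta term, and the identical two-case grid optimization producing the constants $3$ and $10$. The only cosmetic difference is in the meta-term bookkeeping: you invoke prior weights $\pi_{I,\eta}$ for newly created experts, whereas the paper lets each expert enter with unnormalized weight $1$ and bounds the potential by the total number of activations $\sum_{t\le s}|\mc{A}_t|\le 4s^2$, which yields the same $2\log(2s)$.
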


\begin{proof}[Proof of Lemma \ref{lem:lin_reg_GC_interval}]
This proof is similar to that of \cite[Lemma 12]{zhang2021dualadaptivity}.    For any GC interval  $ I=[r, s]\in \mc{I}$ and learning rate $\eta\in \mc{S}(s-r+1)$, we can  apply the  definition of  surrogate loss $\ell_t^\eta$ from \eqref{eq:surrogate_loss}, noticing that $\ell_t^\eta(\bx_t)=0$, to obtain for all  $\bu\in \Theta$,
\begin{align}
&\sum_{t=r}^s\eta\left\langle \Exp[\bg_t| \bx_t,\Lambda_T], \bx_t-\bu\right\rangle -\sum_{t=r}^s\eta^2  G^2  \|\bx_t-\bu\|^2\nonumber\\
&=
\sum_{t=r}^s  \Exp[ -\ell_t^\eta (\bu)\mid \bx_t, \Lambda_T] \nonumber\\
&=\underbrace{\sum_{t=r}^s\Exp\left[\ell_t^\eta (\bx_t) - \ell_t^\eta (\bx_{t, I}^\eta)\mid \bx_t, \Lambda_T\right]}_{\text{meta-regret}\le 2\log(2s)}
+\underbrace{\sum_{t=r}^s\Exp\left[\ell_t^\eta (\bx_{t,I}^\eta) - \ell_t^\eta (\bu) \mid \bx_t, \Lambda_T\right]}_{\text{expert-regret}\le \frac{1}{2}+ \frac{1}{2}\log(s-r+1)}\nonumber\\
& \le 2\log(2s)+ \tfrac{1}{2}+ \tfrac{1}{2}\log(s-r+1)=: a_{r,s}
\,,\label{eq:TEWA_regret_decompose}
\end{align}
where the last step applies   the  upper bound on  the expert-regret established in Lemma~\ref{lem:base_regret} and  the upper bound on the meta-regret  in Lemma~\ref{lem:meta_regret}, both of which we defer to Section \ref{sec:regret_GC_interval}. 
Eq.~\eqref{eq:TEWA_regret_decompose} can be rearranged into
\begin{align}
    \sum_{t=r}^s \left\langle \Exp[\bg_t|\bx_t, \Lambda_T], \bx_t-\bu\right\rangle 
    &\le \frac{a_{r,s}}{\eta} +\eta G^2\sum_{t=r}^s \|\bx_t-\bu\|^2\,.\label{eq:interval_linear_regret}
\end{align} 
The optimal value of $\eta$ that minimizes the RHS of \eqref{eq:interval_linear_regret} is 
\begin{align}
\eta^*=\sqrt{\frac{a_{r,s}}{ G^2\sum_{t=r}^s\|\bx_t-\bu\|^2} }.
\end{align}
Note that since $a_{r,s}\ge \tfrac{1}{2} $, $ \eta^*\ge \frac{1}{ GD\sqrt{2(s-r+1)}}$ for all   $\bx\in \Theta$.  
The next step is to  select a value  $\eta$ from the set $\mc{S}(s-r+1) = \left\lbrace \frac{2^{-i}}{5 GD}  : i\in \big\{0,1, \dots, \ceil{\frac{1}{2}\log_2 (s-r+1)}\big\}\right\rbrace$ that best approximates $\eta^*$. Two cases arises:
\begin{enumerate}[label=\roman*)]
    \item If $\eta^*\le \frac{1}{5 GD}$, there must exist an $\eta\in \mc{S}(s-r+1)$ such that 
    $ \frac{\eta^*}{2}\le\eta\le \eta^*$. Substituting this choice of $\eta$ into \eqref{eq:interval_linear_regret} gives
    \begin{align}
        \sum_{t=r}^s \left\langle \Exp[\bg_t|\bx_t, \Lambda_T], \bx_t-\bu\right\rangle 
    &\le \frac{2a_{r,s}}{\eta^*} +\eta^*  G^2\sum_{t=r}^s \|\bx_t-\bu\|^2=3  G
\sqrt{a_{r,s} \sum_{t=r}^s \|\bx_t-\bu\|^2}.    \label{eq:UB1} \end{align}
    \item If $\eta^*>  \frac{1}{5 GD}$, then the best choice of $\eta\in \mc{S}(s-r+1)$ is $\eta=\frac{1}{5 GD}$, which leads to 
    \begin{align}
         \sum_{t=r}^s \left\langle \Exp[\bg_t|\bx_t, \Lambda_T], \bx_t-\bu\right\rangle 
    &\le  a_{r,s}\cdot 5 GD \cdot 2 = 10 GDa_{r,s}\,.\label{eq:UB2}
    \end{align}
\end{enumerate}
Combining \eqref{eq:UB1}--\eqref{eq:UB2} concludes the proof.\end{proof}

The proof of Lemma \ref{lem:lin_reg_GC_interval} above relied on the upper bounds on the expert-regret and meta-regret  from Lemmas~\ref{lem:base_regret} and \ref{lem:meta_regret}. We present and prove these lemmas in the following section.


\subsection{Upper bounds on expert-regret and meta-regret}\label{sec:regret_GC_interval}


\begin{lemma}[Expert-regret]\label{lem:base_regret}
 For any GC interval $I=[r,s]\in \mc{I}$ and learning rate $\eta\in \mc{S}(s-r+1)$, Algorithm~\ref{alg:metagrad_lean} satisfies for all $\bu\in\tilde{\Theta}$,
 \begin{align}
     \sum_{t=r}^s\Exp\left[ \ell_t^\eta(\bx_{t, I}^\eta)-   \ell_t^\eta(\bu)\mid  \bx_t,\bx_{t,I}^{\eta},\Lambda_T \right]\le \tfrac{1}{2}+\tfrac{1}{2}\log(s-r+1)\,.
 \end{align}
 \end{lemma}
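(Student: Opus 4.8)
The plan is to recognize Lemma~\ref{lem:base_regret} as the classical logarithmic-regret guarantee for projected OGD on strongly convex losses (in the spirit of \cite{hazan2007logarithmic}), specialized to the surrogate losses $\ell_t^\eta$. Two structural facts drive everything. First, from \eqref{eq:surrogate_loss} the map $\bx \mapsto \ell_t^\eta(\bx)$ is $\lambda$-strongly convex with $\lambda := 2\eta^2 G^2$, since only the quadratic term $\eta^2 G^2\|\bx_t-\bx\|^2$ contributes curvature. Second, its gradient $\nabla\ell_t^\eta(\bx) = \eta\bg_t + 2\eta^2 G^2(\bx-\bx_t)$ is uniformly bounded at the iterates: under $\Lambda_T$ we have $\|\bg_t\|\le G$ by \eqref{eq:def_hat_G}, while $\bx_{t,I}^\eta,\bx_t\in\tilde{\Theta}$ gives $\|\bx_{t,I}^\eta-\bx_t\|\le D$, so that
\[
\|\nabla\ell_t^\eta(\bx_{t,I}^\eta)\| \le \eta G + 2\eta^2 G^2 D = \eta G(1+2\eta G D) \le \tfrac{7}{5}\eta G,
\]
where the last step uses $\eta\le 1/(5GD)$, valid because $\eta\in\mc{S}(s-r+1)$ and the grid \eqref{eq:set_S(t)} has maximal element $1/(5GD)$.

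With these two facts I would run the textbook argument pathwise on the event $\Lambda_T$, so that the stated conditional expectation follows trivially. Writing $\nabla_t := \nabla\ell_t^\eta(\bx_{t,I}^\eta)$ and fixing $\bu\in\tilde{\Theta}$, $\lambda$-strong convexity gives $\ell_t^\eta(\bx_{t,I}^\eta)-\ell_t^\eta(\bu) \le \nabla_t^\top(\bx_{t,I}^\eta-\bu) - \tfrac{\lambda}{2}\|\bx_{t,I}^\eta-\bu\|^2$. Expanding the OGD update $\bx_{t+1,I}^\eta = \Pi_{\tilde{\Theta}}(\bx_{t,I}^\eta-\mu_t\nabla_t)$ and using the nonexpansiveness of $\Pi_{\tilde{\Theta}}$ (valid since $\bu\in\tilde{\Theta}$) yields the familiar inequality
\[
\nabla_t^\top(\bx_{t,I}^\eta-\bu) \le \tfrac{1}{2\mu_t}\big(\|\bx_{t,I}^\eta-\bu\|^2 - \|\bx_{t+1,I}^\eta-\bu\|^2\big) + \tfrac{\mu_t}{2}\|\nabla_t\|^2.
\]
Substituting $\mu_t = 1/(\lambda(t-r+1))$, i.e.\ $1/(2\mu_t) = \tfrac{\lambda}{2}(t-r+1)$, and summing over $t\in[r,s]$, I would collect the distance terms and telescope.

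The remaining steps are pure bookkeeping. First, an Abel/telescoping summation shows that the coefficient of $\|\bx_{t,I}^\eta-\bu\|^2$ coming from the telescoped $1/(2\mu_t)$ terms cancels exactly against the $-\tfrac{\lambda}{2}$ from strong convexity, leaving only the nonpositive boundary contribution $-\tfrac{\lambda(s-r+1)}{2}\|\bx_{s+1,I}^\eta-\bu\|^2 \le 0$; this is precisely where the step-size choice $\mu_t\propto 1/(t-r+1)$ is engineered to match the strong-convexity coefficient. Second, the residual gradient term satisfies
\[
\sum_{t=r}^s \tfrac{\mu_t}{2}\|\nabla_t\|^2 \le \sum_{t=r}^s \tfrac{1}{2\lambda(t-r+1)}\cdot\tfrac{49}{25}\eta^2 G^2 = \tfrac{49}{100}\sum_{k=1}^{s-r+1}\tfrac{1}{k} \le \tfrac{49}{100}\big(1+\log(s-r+1)\big),
\]
and since $\tfrac{49}{100}\le\tfrac12$ this is at most $\tfrac12 + \tfrac12\log(s-r+1)$, as claimed. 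I expect the only genuinely delicate point to be the gradient bound in the first paragraph: one must remember that the quadratic surrogate contributes the extra term $2\eta^2 G^2(\bx_{t,I}^\eta-\bx_t)$ to the gradient and control it via $\eta GD\le 1/5$; everything downstream is standard.
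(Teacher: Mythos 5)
Your proof is correct and follows essentially the same route as the paper's: the classical logarithmic-regret analysis of projected OGD for strongly convex losses, using the $\lambda=2\eta^2G^2$ strong convexity of $\ell_t^\eta$, the gradient bound $\|\nabla\ell_t^\eta(\bx_{t,I}^\eta)\|\le \eta G+2\eta^2G^2D$ controlled via $\eta\le 1/(5GD)$, exact telescoping cancellation from $\mu_t=1/(\lambda(t-r+1))$, and the harmonic-sum bound. The only cosmetic differences are that you argue pathwise on $\Lambda_T$ and take the conditional expectation at the end (the paper carries conditional expectations throughout), and your constant $49/100$ versus the paper's bound $(G')^2\le\lambda$; both yield the same final result.
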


\begin{proof}[Proof of Lemma \ref{lem:base_regret}] 
The proof follows standard convergence analysis  of  projected online gradient descent   for strongly convex objective functions, see e.g., \cite[Theorem 1]{hazan2007logarithmic}. For any time step $t\in I$,  the surrogate loss $\ell_t^\eta$  associated with the expert with learning rate $\eta$ and lifetime $I=[r, s]$ serves as our strongly-convex objective function. 
   By applying the definition of $\ell_t^\eta$, we have for all $\bx\in\com$,
    \begin{align}
       \Exp\left[\|\nabla \ell_t^\eta(\bx)\|^2 \mid \bx_t, \Lambda_T\right]
       &= 
       \Exp\left[ \|\eta \bg_t + 2\eta^2 G^2 (\bx-\bx_t)\|^2\mid \bx_t, \Lambda_T\right]\nonumber\\
       & \le \Exp\left[ \left(\|\eta \bg_t\| + \|2\eta^2 G^2 (\bx-\bx_t)\|\right)^2\mid \bx_t, \Lambda_T\right]
       &
       \le
       (G')^2\,,\label{eq:UB_norm_grad_ell}
    \end{align}
     where we introduced $G' = \eta G +2\eta^2 G^2D$. By the update rule of our projected online gradient descent with step size $\mu_t$ (line~\ref{line:base_update_lean} of Algorithm~\ref{alg:ogd_lean}), we have for all $\bu\in \tilde{\Theta}$, 
    \begin{align*}
        \|\bx_{t+1, I}^{\eta}-\bu\|^2& =\big \| \Pi_{\tilde{\Theta} }\big(\bx_{t, I}^\eta - \mu_t \nabla \ell_t^\eta (\bx_{t, I}^\eta)\big) - \bu\big\|^2\nonumber\\
        &\le \big\| \bx_{t, I}^\eta - \mu_t \nabla \ell_t^\eta (\bx_{t, I}^\eta) - \bu\big\|^2\nonumber\\
        &=\|\bx_{t, I}^\eta -\bu\|^2
        +\mu_t^2\|\nabla \ell_{t}^\eta (\bx_{t, I}^\eta )\|^2 - 2\mu_t(\bx_{t, I}^\eta -\bu)^\top \nabla\ell_t^\eta (\bx_{t,I}^{\eta})\,,
    \end{align*}
    which can be rearranged into
    \begin{align}
        2(\bx_{t,I}^\eta -\bu)^\top \nabla \ell_t^\eta (\bx_{t, I}^\eta) \le \frac{\|\bx_{t, I}^\eta -\bu\|^2-\|\bx_{t+1, I}^{\eta}-\bu\|^2
        }{\mu_t}+\mu_t\|\nabla \ell_{t}^\eta (\bx_{t, I}^\eta )\|^2\,. \label{eq:cross_term}
    \end{align}
Define shorthand $\lambda\equiv 2\eta^2G^2$ and recall $\mu_t= 1/(\lambda (t-r+1))$, then 
Eq.~\eqref{eq:cross_term} implies that
\begin{align}
    &2\sum_{t=r}^s(\bx_{t, I}^\eta -\bu )^\top \nabla \ell_t^\eta(\bx_{t, I}^\eta) -\lambda\sum_{t=r}^s \|\bx_{t, I}^\eta - \bu\|^2\nonumber\\
    &\le \sum_{t=r}^s\frac{\|\bx_{t, I}^\eta -\bu\|^2-\|\bx_{t+1, I}^{\eta}-\bu\|^2
        }{\mu_t}+\sum_{t=r}^s\mu_t\|\nabla \ell_{t}^\eta (\bx_{t, I}^\eta )\|^2-\lambda\sum_{t=r}^s \|\bx_{t, I}^\eta - \bu\|^2\nonumber\\
        &=\sum_{t=r+1}^{s} \|\bx_{t,I}^\eta-\bu\|^2\underbrace{\left(\frac{1}{\mu_t}-\frac{1}{\mu_{t-1}}-\lambda \right)}_{=0} + \|\bx_{r,I}^\eta -\bu\|^2\underbrace{\left(\frac{1}{\mu_r} -\lambda \right)}_{=0}  -\underbrace{ \frac{\|\bx_{s+1,I}^\eta-\bu\|^2}{\mu_s}}_{\ge 0}\;  \nonumber\\
        & \quad\; +  \sum_{t=r}^s\mu_t\|\nabla\ell_t^\eta (\bx_{t, I}^\eta)\|^2
        \le \sum_{t=r}^s\mu_t\|\nabla\ell_t^\eta (\bx_{t, I}^\eta)\|^2\,.\label{eq:surrogate_SC_upper_bound}
    \end{align}
   Noticing that with any given $\bx_t \in \bbR^d$, $\ell_t^\eta$ 
   is $\lambda$-strongly-convex,  we apply \eqref{eq:surrogate_SC_upper_bound} to obtain that for all $\bu\in \tilde{\Theta}$, 
    \begin{align}
&2\sum_{t=r}^s\Exp\Big[\ell_t^\eta(\bx_{t, I}^\eta) -\ell_t^\eta(\bu) \mid \bx_{t}, \bx_{t, I}^\eta, \Lambda_T      \Big]\nonumber\\
& \le \Exp\Big[ 2\sum_{t=r}^s (\bx_{t, I}^\eta - \bu)^\top \nabla \ell_t^\eta(\bx_{t, I}^\eta)-\lambda \sum_{t=r}^s\|\bx_{t, I}^\eta -\bu\|^2\,\Big|\, \{\bx_{t}, \bx_{t, I}^\eta\}_{t=r}^s, \Lambda_T\Big] \nonumber\\
& \le \sum_{t=r}^s\mu_t\Exp\Big[\|\nabla\ell_t^\eta (\bx_{t, I}^\eta)\|^2\mid \bx_{t}, \bx_{t, I}^\eta, \Lambda_T\Big] \nonumber\\
&\stackrel{\text{(i)}}{\le} (G')^2\sum_{t=r}^s \mu_t \stackrel{\text{(ii)}}{\le} 
        \frac{(G')^2}{\lambda}\left( 1+\log (s-r+1)\right)\stackrel{\text{(iii)}}{\le}1+\log(s-r+1)\,,
    \end{align}
where (i)  is a result of \eqref{eq:UB_norm_grad_ell}, (ii) uses the bound $\sum_{k=1}^n\frac{1}{k}\le 1+ \log n$ for any $n\in \bbN^+$, 
 and (iii) uses the fact that given  $\eta\le \frac{1}{5GD}$ it holds that  
\begin{align}
    (G')^2= \left(\eta G +2\eta^2G^2 D\right)^2 = \eta^2G^2 + 4\eta^3 G^3D + 4\eta^4G^4D^2 \le \eta^2G^2 + \tfrac{4}{5}\eta^2  G^2  + \tfrac{4}{25}\eta^2  G^2\le \lambda.\nonumber
    \end{align}    
\end{proof}

\begin{lemma}[Meta-regret]\label{lem:meta_regret}
    For any GC interval $I=[r,s]\in \mc{I}$ and learning rate $\eta\in \mc{S}(s-r+1)$,  Algorithm~\ref{alg:metagrad_lean} satisfies
    \begin{align}
       \sum_{t=r}^s  \Exp\left[\ell_t^\eta(\bx_t)
        - \ell_t^\eta(\bx_{t,I}^\eta)\mid \bx_t,\bx_{t,I}^{\eta},\Lambda_T\right] = -\sum_{t=r}^s \Exp\left[\ell_t^\eta(\bx_{t,I}^\eta)\mid \bx_t,\bx_{t,I}^{\eta},\Lambda_T\right]\le 2\log(2s).
    \end{align}
\end{lemma}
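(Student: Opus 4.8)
The plan is to recognise that this is the regret of a (tilted) exponential-weights aggregator over its sleeping experts, measured on the surrogate losses. Since $\ell_t^\eta(\bx_t)=0$ by the definition \eqref{eq:surrogate_loss}, the meta-regret collapses to $-\sum_{t=r}^s\Exp[\ell_t^\eta(\bx_{t,I}^\eta)\mid\cdots]$, exactly the equality asserted. Because the target $2\log(2s)$ is deterministic, I would bound $-\sum_{t=r}^s\ell_t^\eta(\bx_{t,I}^\eta)$ pathwise on the event $\Lambda_T$ (where $\|\bg_t\|\le G$ holds by \eqref{eq:def_hat_G}) and then take conditional expectations. The engine is a potential/telescoping argument adapted to the Geometric Covering schedule.

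The first and most important step is a per-round exp-concavity inequality for the surrogate loss. For an active expert $E(I_j,\eta_j)\in\mc{A}_t$, write $x_j\coloneqq\eta_j\bg_t^\top(\bx_t-\bx_{t,I_j}^{\eta_j})$. Then $-\ell_t^{\eta_j}(\bx_{t,I_j}^{\eta_j})=x_j-\eta_j^2G^2\|\bx_t-\bx_{t,I_j}^{\eta_j}\|^2$, and since the quadratic term dominates $x_j^2$ (by Cauchy--Schwarz and $\|\bg_t\|\le G$), I would use the elementary bound $e^{x-x^2}\le 1+x$, valid for $|x|\le\tfrac15$, to get $\exp(-\ell_t^{\eta_j}(\bx_{t,I_j}^{\eta_j}))\le 1+\eta_j\bg_t^\top(\bx_t-\bx_{t,I_j}^{\eta_j})$. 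The range condition holds because $\|\bx_t-\bx_{t,I_j}^{\eta_j}\|\le D$ and every learning rate in $\mc{S}(\cdot)$ satisfies $\eta_j\le\tfrac{1}{5GD}$, so $|x_j|\le\eta_jGD\le\tfrac15$.

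Next I would introduce the Bayesian weights $w_{t,j}=\exp(-L_{t-1,I_j}^{\eta_j})$ and the potential $\Psi_t=\sum_{j\in\mc{A}_t}w_{t,j}$, where a newly woken expert enters with weight $1$ (matching the $L=0$ initialisation in line~\ref{line:init_tewa}). Multiplying the per-round inequality by $w_{t,j}$ and summing over $\mc{A}_t$, the residual linear term is $\bg_t^\top\big(\sum_j\eta_jw_{t,j}\bx_t-\sum_j\eta_jw_{t,j}\bx_{t,I_j}^{\eta_j}\big)$, which vanishes precisely because $\bx_t$ is the $\eta_j$-tilted weighted average of line~\ref{line:update_xt_lean} (its weights are $\eta_jw_{t,j}=\eta_j\exp(-L_{t-1,I_j}^{\eta_j})$). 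Hence $\sum_{j\in\mc{A}_t}w_{t,j}\exp(-\ell_t^{\eta_j}(\bx_{t,I_j}^{\eta_j}))\le\Psi_t$, so the potential can only grow through the prior mass of experts waking next round, $\Psi_{t+1}\le\Psi_t+\#\{j:\min I_j=t+1\}$. Telescoping over $[1,s]$ bounds $\Psi_s$ by the total number of experts activated by time $s$. Since $\exp(-L_{s,I}^\eta)=w_{s,(I,\eta)}\exp(-\ell_s^\eta(\cdot))$ is one non-negative summand of $\sum_{j\in\mc{A}_s}w_{s,j}\exp(-\ell_s^{\eta_j})\le\Psi_s$, I obtain $-L_{s,I}^\eta\le\log\Psi_s$; a crude count — at most $s/2^k$ GC intervals of each scale $k$, each carrying $|\mc{S}(2^k)|=O(k)$ learning rates, summing to $O(s)\le(2s)^2$ — gives $\log\Psi_s\le 2\log(2s)$.

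The main obstacle is the exact cancellation in the third step: it is the sole purpose of the $\eta$-tilting in the master update and succeeds only because the surrogate \eqref{eq:surrogate_loss} carries the matching $\eta$-factor on its linear part and an $\eta^2G^2$-factor on its quadratic part, the latter simultaneously powering the exp-concavity inequality (by dominating $x_j^2$) and being left over harmlessly after cancellation. A secondary technical point is the sleeping-experts bookkeeping: $\Psi_t$ is not globally monotone but increases only by the prior mass of newly woken experts, and one must check that the grid cap $\eta\le\tfrac{1}{5GD}$ keeps $|x_j|$ inside the validity range of $e^{x-x^2}\le 1+x$.
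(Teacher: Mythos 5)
Your proposal is correct and follows essentially the same route as the paper's proof: the same per-round exp-concavity inequality $e^{z-z^2}\le 1+z$ (enabled by Cauchy--Schwarz domination of the squared linear term by the quadratic term and the grid cap $\eta\le\tfrac{1}{5GD}$), the same exact cancellation of the linear term coming from the $\eta$-tilted weighted average, and the same sleeping-experts potential/telescoping argument capped by a count of all experts ever activated by time $s$ (the paper bounds this count by $\sum_{t\le s}|\mc{A}_t|\le 4s^2$, which yields exactly $2\log(2s)$). The only differences are cosmetic: you run the argument pathwise on $\Lambda_T$ and take conditional expectations at the end, whereas the paper carries $\Exp[\bg_t\mid \bx_t,\Lambda_T]$ (whose norm is also at most $G$ by Jensen) through every step of the same chain, and your per-scale count of activated experts is slightly sharper than the paper's crude $4s^2$ bound.
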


\begin{proof}[Proof of Lemma \ref{lem:meta_regret}]The proof is similar to that of \cite[Lemma 6]{zhang2021dualadaptivity}.
By Jensen's inequality and the convexity of norms, we have for all $\bx\in \Theta,$
    \begin{align}
       | \left\langle \Exp[\bg_t|\bx_t, \Lambda_T], \bx_t- \bx\right\rangle| &\le \|\Exp[\bg_t|\bx_t, \Lambda_T]\| \|\bx_t-\bx\|\nonumber\\
       &\le \Exp\left[\|\bg_t\|\mid  \bx_t, \Lambda_T\right] \|\bx_t-\bx\| 
       \le GD\,,\label{eq:bound_gt_meta_regret}
    \end{align}
    which, given $\eta\le \frac{1}{5GD}$, implies that 
    \begin{align}
        \eta \left\langle \Exp[\bg_t|\bx_t, \Lambda_T], \bx_t- \bx\right\rangle \ge -\eta GD \ge -\tfrac{1}{5}\,.\label{eq:1_5_bound}
    \end{align}
    Using \eqref{eq:bound_gt_meta_regret}--\eqref{eq:1_5_bound} and applying the inequality $\ln(1+z) \ge z-z^2 $ for any $z\ge -\frac{2}{3}$ with $z=\eta\left\langle \Exp[\bg_t|\bx_t, \Lambda_T], \bx_t- \bx\right\rangle $, we obtain  for all $\bx\in\Theta$,
    \begin{align}
    \exp\left(-\Exp[\ell_t^\eta(\bx)\mid \bx_t, \Lambda_T]\right)
        &= \exp\left(\eta\left\langle \Exp[\bg_t|\bx_t, \Lambda_T], \bx_t-\bx\right\rangle - \eta^2G^2 \|\bx_t-\bx\|^2\right)\nonumber\\
        & \le  \exp\left(\eta\left\langle \Exp[\bg_t|\bx_t, \Lambda_T], \bx_t-\bx\right\rangle -\eta^2 \left\langle \Exp[\bg_t|\bx_t, \Lambda_T], \bx_t-\bx\right\rangle^2\right)\nonumber\\
        &\le 1+ \eta \left\langle \Exp[\bg_t|\bx_t, \Lambda_T], \bx_t- \bx\right\rangle.\label{eq:applied_ln(1+z)_lower_bound}
    \end{align}
Define shorthand  $\mathbb{F}_{t,I}^\eta = \{\bx_t,\bx_{t,I}^{\eta},\Lambda_T\}$, and $\mathbb{H}_{t,I}^\eta = \cup_{\tau\in[t]}\mathbb{F}_{\tau,I}^\eta$ for $t\in[T]$. Using \eqref{eq:applied_ln(1+z)_lower_bound},   we can write for every $t\in [T]$,
    \begin{align}
\sum_{E(I, \eta) \in \mc{A}_t}&\exp\left(-\Exp[ L_{t, I}^\eta\mid \mathbb{H}_{t,I}^\eta]\right) \nonumber\\
&=
         \sum_{E(I, \eta) \in \mc{A}_t}\exp\left(-\Exp [L_{t-1, I}^\eta \mid \mathbb{H}_{t-1,I}^\eta]\right)\exp\left(-\Exp[\ell_t^\eta (\bx_{t,I}^\eta)\mid \mathbb{F}_{t,I}^\eta]\right)\nonumber\\
         &
\leq  \sum_{E(I, \eta) \in \mc{A}_t}\exp\left(-\Exp [L_{t-1, I}^\eta \mid \mathbb{H}_{t-1,I}^\eta]\right)\left[1+ \eta \left\langle \Exp[\bg_t| \bx_t, \Lambda_T], \bx_t- \bx_{t,I}^\eta\right\rangle\right].\label{eq:Lt_less_than_Lt_1}
\end{align}
The second term on the RHS can be bounded as follows:
\begin{align}
     \sum_{E(I,\eta) \in \mc{A}_t}&\exp(-\Exp [L_{t-1, I}^\eta \nonumber\mid \mathbb{H}_{t-1, I}^\eta]) \left[\eta \left\langle \Exp[\bg_t|\bx_t, \Lambda_T], \bx_t- \bx_{t,I}^\eta\right\rangle\right] \\& =\Big\langle \Exp[\bg_t|\bx_t, \Lambda_T],\sum_{E(I,\eta) \in \mc{A}_t} \eta \exp(-\Exp [L_{t-1, I}^\eta \mid \mathbb{H}_{t-1,I}^\eta])(\bx_t-\bx_{t, I}^\eta) \Big\rangle \nonumber\\
&\stackrel{\text{(i)}}{\leq}
      \Big\langle \Exp[\bg_t|\bx_t, \Lambda_T], \sum_{E(I,\eta) \in \mc{A}_t}\eta\Exp \left[\exp(-L_{t-1, I}^\eta) \mid \mathbb{H}_{t-1,I}^\eta\right](\bx_t-\bx_{t, I}^\eta) \Big\rangle \nonumber\\
         &=
    \Big\langle \Exp[\bg_t|\bx_t, \Lambda_T], \Exp \Big[\underbrace{\sum_{E(I,\eta) \in \mc{A}_t}\eta\exp(-L_{t-1, I}^\eta) (\bx_t-\bx_{t, I}^\eta)}_{=0}\Big| \mathbb{H}_{t-1,I}^\eta\Big] \Big\rangle \stackrel{\text{(ii)}}{=}0\,, \label{eq:second_term_rhs}
    \end{align}
    where (i) applies  Jensen's inequality, and (ii) is due to the update rule of $\bx_t$ in \eqref{eq:TEWA_update_rule}. 
    Combining 
\eqref{eq:Lt_less_than_Lt_1}--\eqref{eq:second_term_rhs}  yields
    \begin{align}
         \sum_{E(I,\eta)\in \mc{A}_t}\exp\left(-\Exp [L_{t, I}^\eta \mid \mathbb{H}_{t,I}^\eta]\right) 
         \le   \sum_{E(I,\eta)\in \mc{A}_t} \exp\left(-\Exp [L_{t-1, I}^\eta \mid \mathbb{H}_{t-1,I}^\eta]\right) .\label{eq:Lt_less_than_Lt_1_simplified}
    \end{align}
   By summing both sides of \eqref{eq:Lt_less_than_Lt_1_simplified} over $t=1, \dots, s$ and rewriting, we obtain
    \begin{align}
      &   \sum_{E(I,\eta)\in \mc{A}_s}\exp\left(-\Exp [L_{s, I}^\eta \mid \mathbb{H}_{s,I}^\eta]\right) +\sum_{t=1}^{s-1}  \sum_{E(I,\eta)\in \mc{A}_t\setminus \mc{A}_{t+1}}\exp\left(-\Exp [L_{t, I}^\eta \mid \mathbb{H}_{t,I}^\eta]\right) +\nonumber\\
      & \;\;  \sum_{t=1}^{s-1}\sum_{E(I,\eta)\in \mc{A}_t \cap \mc{A}_{t+1} }\exp\left(-\Exp [L_{t, I}^\eta \mid \mathbb{H}_{t,I}^\eta]\right) \nonumber\\
      &\le  \sum_{E(I,\eta)\in \mc{A}_1}\exp\Big(-\Exp [ L_{0, I}^\eta ]\Big)  + \sum_{t=2}^s\sum_{E(I,\eta)\in \mc{A}_t\setminus \mc{A}_{t-1}} \exp\left(-\Exp [L_{t-1, I}^\eta \mid \mathbb{H}_{t-1,I}^\eta]\right)+\nonumber\\
      &\quad\; \sum_{t=2}^s
      \sum_{E(I,\eta)\in \mc{A}_t\cap \mc{A}_{t-1}} \exp\left(-\Exp [L_{t-1, I}^\eta \mid \mathbb{H}_{t-1,I}^\eta]\right) .\label{eq:sum_Lt_vs_Lt_1}
    \end{align}
Canceling  the equivalent last terms on both sides of \eqref{eq:sum_Lt_vs_Lt_1} and noting that 
 $L_{\tau, I}^\eta =0$ for $\tau= \min\{t: t\in I\}-1$  by construction (see line \ref{line:init_tewa} of Algorithm \ref{alg:metagrad_lean}), we obtain  for $s\ge 1$,
\begin{align}
     &   \sum_{E(I,\eta)\in \mc{A}_s}\exp\left(-\Exp [L_{s, I}^\eta \mid \mathbb{H}_{s,I}^\eta]\right)  + \sum_{t=1}^{s-1} \bigg[\sum_{E(I,\eta)\in \mc{A}_t\setminus \mc{A}_{t+1}}\exp\left(-\Exp [L_{t, I}^\eta \mid\mathbb{H}_{t,I}^\eta]\right) \bigg]\nonumber\\
      &\le  \sum_{E(I,\eta)\in \mc{A}_1}\exp\Big(\Exp[\underbrace{L_{0, I}^\eta}_{=0}]\Big)  + \sum_{t=2}^s\bigg[ \sum_{E(I,\eta)\in \mc{A}_t\setminus \mc{A}_{t-1}} \exp\Big(-\Exp [\underbrace{L_{t-1, I}^\eta}_{=0} \mid \mathbb{H}_{t-1,I}^\eta]\Big) \bigg] \nonumber\\
      &= \sum_{E(I,\eta)\in \mc{A}_1}\exp(0) + \sum_{t=2}^s\sum_{E(I,\eta)\in \mc{A}_t\setminus \mc{A}_{t-1}}\exp(0)\nonumber\\
      & = |\mc{A}_1| + \sum_{t=2}^s|\mc{A}_t\setminus \mc{A}_{t-1}|\le  \sum_{t=1}^s\big|\mc{A}_t\big| \nonumber\\
      &\stackrel{\text{(i)}}{\le} \sum_{t=1}^s (1+ \floor{\log_2 t})\cdot \left(1+\ceil{\tfrac{1}{2} \log_2 t}\right) \le \sum_{t=1}^s (1+  \log_2 t)^2
    \stackrel{\text{(ii)}}{\le}   4s^2 \label{eq:inequality_4s^2}\,,
\end{align}
where (i) applies \eqref{eq:nt_bound}, and (ii) is due to $1+\log_2 t\le 2\sqrt{t}\; \forall t\ge1$. 
 Since $\exp(x)> 0$ for $x\in \bbR$, Eq.~\eqref{eq:inequality_4s^2} implies that for  any GC interval $I=[r,s]\in\mc{I}$ and learning rate $\eta\in \mc{S}(|I|)$, 
\begin{align}
&\exp\left(-\Exp [L_{s, I}^\eta \mid \mathbb{H}_{s,I}^\eta]\right)=
     \exp\left(-\sum_{t=r}^s\Exp [\ell_t^\eta(\bx_{t, I}^\eta) \mid \mathbb{F}_{t, I}^\eta]\right)\le 4s^2 .
     \end{align}
   Taking the logarithm of both sides completes the proof.
\end{proof}

\subsection{Proof of Corollary \ref{cor:TEWA_R_swi_known}}\label{sec:appx_TEWA_cor}
We first restate Corollary \ref{cor:TEWA_R_swi_known} and then provide the proof. Recall that for clarity we drop the $\ceil{\cdot}$ operators from the expressions for $\sfB$ and assume without loss of generality the expressions take integer values.
\corTEWAswi*
    \begin{proof}[Proof of Corollary \ref{cor:TEWA_R_swi_known}]
 We begin by applying the first result in Proposition~\ref{prop:conversions} with the adaptive regret guarantees in Theorem~\ref{thm:interval_regret} to obtain  switching regret guarantees. For known $S$, Algorithm~\ref{alg:metagrad_lean} with parameter $\sfB= \frac{T}{S}$ achieves in the general convex case,
 \begin{align}\label{eq:R_swi_TEWA}
        R^{\swi} (T, S) \leq  2C\sqrt{d}S^{\frac{1}{4}}T^{\frac{3}{4}} + 2(C_1  +  \tfrac{C_2}{d} +\tfrac{4}{d^2})d^2S\,,
    \end{align}
    and in the case where $f_t$ is $\alpha$-strongly-convex and $\argmin_{\bx\in \bbR^d} f_t(\bx)\in \Theta$   for all $t\in [T]$, 
\begin{align}\label{eq:R_swi_TEWA_sc}
        R^{\swi} (T, S) \leq  2C' d\sqrt{ST} + 2(C_1'  + \tfrac{C_2'}{d} + \tfrac{4}{d^2})d^2S\,,
    \end{align}
where $C, C_1, C_2, C', C_1', C_2'>0$ are the  terms defined in \eqref{eq:C_def}--\eqref{eq:C2_def} and \eqref{eq:C_prime_def}--\eqref{eq:C2_prime_def} which are polylogarithmic in $T$ and $\sfB$.  
  When  $S$ and $\Delta$ are both known, we use  \eqref{eq:R_swi_TEWA}--\eqref{eq:R_swi_TEWA_sc} and apply  the second result in Proposition~\ref{prop:conversions} to bound $R^{\dyn}(T, \Delta, S)$.  Specifically, for general convex losses,   Algorithm~\ref{alg:metagrad_lean} with  
    $\sfB= \frac{T}{S} \vee \big(\frac{\sqrt{d}T}{\Delta}\big)^{\frac{4}{5}} $  yields
    \begin{align*}
         R^\dyn(T, \Delta, S)\le  R^\swi(T, S)\wedge F^\dyn(T, \Delta) \,, 
    \end{align*}
    where  $F^\dyn(T, \Delta)\coloneqq (2C+1) d^{\frac{2}{5}} \Delta^{\frac{1}{5}} T^{\frac{4}{5}} + 2(C_1+\tfrac{C_2}{d}+\tfrac{4}{d^{2}})d^{\frac{8}{5}}\Delta^{\frac{4}{5}}T^{\frac{1}{5}}$. 
    For strongly-convex losses with minimizers  inside $\Theta$,  Algorithm~\ref{alg:metagrad_lean} 
 with  $\sfB=  \frac{T}{S}\vee \big(\frac{dT}{\Delta}\big)^{\frac{2}{3}} $ gives 
    \begin{align*}
    R^\dyn(T, \Delta, S)\le  R^\swi(T, S) \wedge F^\dyn_\scvx(T, \Delta)\,.
\end{align*}
where $ F^\dyn_\scvx(T, \Delta)\coloneqq (2C'+1)   d^{\frac{2}{3}} \Delta^{\frac{1}{3}} T^{\frac{2}{3}} + 2(C_1'+\tfrac{C_2'}{d}+ \tfrac{4}{d^{2}})  d^{\frac{4}{3}} \Delta^{\frac{2}{3}} T^{\frac{1}{3}}$. 
    Finally, for  known $P$ we use  \eqref{eq:R_swi_TEWA}--\eqref{eq:R_swi_TEWA_sc} and apply the third result in Proposition~\ref{prop:conversions} to bound $R^\pat(T, P)$.
    For the general convex case, taking $\sfB=\big(\frac{r\sqrt{d}T}{P}\big)^{\frac{4}{5}}$  gives 
\begin{align*}
    R^\pat(T, P)\le (2C+1) r^{-\frac{1}{5}} d^{\frac{2}{5}}P^{\frac{1}{5}}T^{\frac{4}{5}}  
     +2(C_1+ \tfrac{C_2}{d}+ \tfrac{4}{d^{2}})r^{-\frac{4}{5}}  d^{\frac{8}{5}} P^{\frac{4}{5}}T^{\frac{1}{5}}\,.
\end{align*}
For strongly-convex losses with minimizers  inside $\Theta$,  taking  
$\sfB=\big(\frac{rdT}{P}\big)^{\frac{2}{3}}$ yields 
\begin{align*}
    R^\pat(T, P)\le (2C'+1) r^{-\frac{1}{3}} d^{\frac{2}{3}} P^{\frac{1}{3}}T^{\frac{2}{3}} + 2(C_1'+\tfrac{C_2'}{d} + \tfrac{4}{d^2}) r^{-\frac{2}{3}} d^{\frac{4}{3}}P^{\frac{2}{3}}T^{\frac{1}{3}}.
\end{align*}
    \end{proof}


\subsection{Parameter-free upper bounds}\label{appx:TEWA_BoB}
Corollary \ref{cor:TEWA_R_swi_known} presents the optimal choice of parameter $\sfB$ for TEWA-SE when  $S$, $\Delta$ and $P$  are  known.  
 When the non-stationarity measures are unknown, the optimal $\sfB$ cannot be directly computed, and we therefore employ   the Bandit-over-Bandit (BoB) framework from \cite{cheung19bob} to adaptively select   $\sfB$ from a  prespecified set $\mc{B}=\{2^i: i=0, 1, \dots, \floor{\log_2 T}\}$. BoB has been used in \cite{WeiL21}
 in a similar fashion to obtain parameter-free algorithms. Specifically,  BoB divides  the time horizon   into  $E=\ceil{T/ L}$  epochs each with length $L$, denoted by $(I_e)_{e=1}^{E}$ (where the last epoch may be shorter than $L$). In the first epoch, it runs TEWA-SE with $\sfB=\sfB_1$ which is randomly selected from $\mc{B}$. For subsequent epochs, it uses the cumulative empirical loss on the current epoch $ e-1$ to select  $\sfB_{e}\in \mc{B}$ for the next epoch via  EXP3 \cite{auer2002nonstochastic}. That is,  
 BoB computes 
\begin{align}
    p_{e, i}=(1-\gamma) \frac{s_{e, i}}{\sum_{i'\in [|\mc{B}|]} s_{e, i'}} + \frac{\gamma}{|\mc{B}|} \quad\forall i\in [|\mc{B}|], \quad \text{with}\quad \gamma=1 \wedge \sqrt{\frac{|\mc{\sfB}| \ln (|\mc{B}|)}{(\textrm{e}-1) E}} \,,\label{eq:BoB_p_gamma_def}
\end{align}
where $\textrm{e}$ denotes the base of the exponential function, and  then samples $i_e=i$ with probability $p_{e, i}$ yielding  $ \sfB_{e}=2^{i_e-1}$.\footnote{We adopt  clipping (by $\gamma$) following \cite{auer2002nonstochastic, cheung19bob}, though $\gamma=0$ suffices as discussed in \cite{stoltz2005thesis} and \cite[Section 11.6]{lattimore2020bandit}.}  
For $i\in [|\mc{B}|]$, initialized with  $ s_{0, i} =1$, the quantity $s_{e,i}$ for $e\in \bbN^+$ is updated by
computing 
\begin{align}\label{eq:update_s}
 s_{e+1, i}= 
        s_{e, i} \exp\Big(\frac{\gamma}{|\mc{B}|} \widehat{r}_{e, i} \Big)\,,
\end{align}
where with $M_T=1+2\sigma \sqrt{\log(T+1)}$, the importance-weighted reward $\widehat{r}_{e, i} $ takes the form 
\begin{align}\label{eq:reward_def}
\widehat{r}_{e, i}=
    \begin{cases}
   \left(\frac{1}{2} +  \frac{1}{2LM_T}\sum_{t\in I_e} (1-y_t)\right)  / p_{e, i} &\text{if}\; i=i_e  \\
        0& \text{otherwise}\,.
    \end{cases}
\end{align}
Note that conditioned on the event $\Lambda_T = \big\lbrace|\xi_t| \le 2\sigma \sqrt{\log (T+1)}, \; \forall t\in [T]\big\rbrace$  defined above  \eqref{eq:def_hat_G},  the absolute total reward in each epoch is bounded by $Q\coloneqq\max_{e\in [E]}\big|\sum_{t\in I_e} (1-y_t)\big| \le L M_T$, which ensures the rescaled reward $ \frac{1}{2} +  \frac{1}{2LM_T}\sum_{t\in I_e} (1-y_t) $ in \eqref{eq:reward_def} remains bounded within $ [0,1]$. The pseudo-code for TEWA-SE equipped with BoB is provided in Algorithm \ref{alg:BoB}, with theoretical guarantees detailed in Corollary \ref{cor:BoB}.


\begin{algorithm}
\caption{TEWA equipped with Bandit-over-Bandit (BoB)} \label{alg:BoB}
\textbf{Input:} 
$d, T, L, E=\ceil{T/L}, (I_e)_{e=1}^E, \mc{B}=\{2^i: i=0, 1, \dots, \floor{\log_2 T}\}$, and $\gamma\in (0,1)$ as defined in \eqref{eq:BoB_p_gamma_def}
\\
\textbf{Initialize:} $ s_{0, i} =1\; \forall i\in [|\mc{B}|]$
\begin{algorithmic}[1]
\For{$e=1, 2, \dots, E$}
\State Compute $p_{e, i}$ according to \eqref{eq:BoB_p_gamma_def} $\forall i\in [|\mc{B}|]$
\State  Sample $i_e=i $ with  probability $p_{e, i}$\,, and select  $\sfB_e=2^{i_e-1}\in \mc{B}$
\For{$t\in I_e$}
\State Run TEWA-SE with $\sfB=\sfB_e$ to select action $\bz_t$ and observe losses $ y_t=f_t(\bz_t)+ \xi_t$
\EndFor
\State Update $s_{e+1, i}$ according to \eqref{eq:update_s} $\forall i\in [|\mc{B}|]$
\EndFor
    \end{algorithmic}
\end{algorithm}



\begin{corollary}[TEWA-SE with BoB]\label{cor:BoB}
  Consider any horizon $T\in \bbN^+$  and assume that, for all $t \in [T]$, the loss $f_t$ is convex, or strongly-convex with $\argmin_{\bx\in\bbR^d}f_t(\bx)\in \Theta$ (referred to as the strongly-convex (SC) case).
  
  Then, for the general convex case, Algorithm~\ref{alg:BoB}   with epoch size $L=(dT)^{\frac{2}{3}}$ attains all the regret bounds from Corollary \ref{cor:TEWA_R_swi_known} plus an additional  term of 
  $d^{\frac{1}{3}}T^{\frac{5}{6}}  + d^{\frac{4}{3}}T^{\frac{1}{3}}$. For the SC case, Algorithm~\ref{alg:BoB}   with epoch size $L=d\sqrt{T}$ satisfies all the regret bounds from Corollary \ref{cor:TEWA_R_swi_known} plus an additional term of $d^{\frac{1}{2}} T^{\frac{3}{4}} + d\sqrt{T}$.  Both results omitted polylogarithmic factors.
\end{corollary}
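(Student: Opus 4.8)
The plan is to follow the two-level decomposition underlying the Bandit-over-Bandit framework of \cite{cheung19bob}. Writing $\bz_{1:T}$ for the actions actually played by Algorithm~\ref{alg:BoB} and fixing an arbitrary comparator sequence $\bu_{1:T}$, I would split the regret into a meta-level and a base-level term:
\begin{align*}
R(T,\bu_{1:T})
&= \underbrace{\sum_{t=1}^T\Exp[f_t(\bz_t)] - \min_{\sfB\in\mc{B}}\sum_{e=1}^E\Exp[\mc{L}_e(\sfB)]}_{\text{meta-regret}}\\
&\quad+ \underbrace{\min_{\sfB\in\mc{B}}\sum_{e=1}^E\Exp[\mc{L}_e(\sfB)] - \sum_{t=1}^T\Exp[f_t(\bu_t)]}_{\text{base-regret}},
\end{align*}
where $\mc{L}_e(\sfB)$ denotes the cumulative loss that a \emph{fresh} instance of TEWA-SE tuned with parameter $\sfB$ would incur over epoch $I_e$. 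The restart of TEWA-SE at each epoch boundary (see Algorithm~\ref{alg:BoB}) is exactly what makes $\mc{L}_e(\sfB)$ a well-defined per-arm loss, so that the meta-learner faces a genuine adversarial bandit instance over the arm set $\mc{B}$.

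For the meta-regret I would invoke the EXP3 guarantee \cite{auer2002nonstochastic}. Conditioning on the event $\Lambda_T$ (which by \eqref{eq:bound_P_Lambda_T_c} fails with probability at most $2/T$ and hence contributes only an additive $\cO(1)$), the absolute per-epoch reward satisfies $Q\le LM_T$, so the rescaled rewards in \eqref{eq:reward_def} lie in $[0,1]$. With $E=\ceil{T/L}$ epochs and $|\mc{B}|=\cO(\log T)$ arms, EXP3 bounds the regret in rescaled units by $\cO(\sqrt{E|\mc{B}|\log|\mc{B}|})$; multiplying back by the scale $2LM_T$ yields a meta-regret of order $M_T\sqrt{TL}$ up to polylogarithmic factors.

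For the base-regret I would apply Proposition~\ref{prop:conversions} \emph{epoch by epoch}, using the adaptive-regret guarantee of Theorem~\ref{thm:interval_regret}, which holds on every interval of length $\le\sfB\le L$ inside an epoch. Covering each epoch by intervals of length $\le\sfB$ and splitting additionally at the comparator's switch (or variation, or path-length) points, the total number of intervals across all epochs is at most $2S+E$ in the switching case, and analogously for $\Delta$ and $P$. The $2S$ part reproduces exactly the bound of Corollary~\ref{cor:TEWA_R_swi_known} at the same tuned $\sfB$, while the extra $E$ intervals — one per epoch boundary forced by the restarts — contribute a \emph{restart cost} of order $E\cdot(\sqrt{d}L^{3/4}+dL^{1/2}+d^2)$ in the convex case and $E\cdot(dL^{1/2}+d^2)$ in the strongly-convex case. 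Substituting $E=T/L$ together with $L=(dT)^{2/3}$ (resp.\ $L=d\sqrt{T}$) turns this restart cost into precisely $d^{1/3}T^{5/6}+(dT)^{2/3}+d^{4/3}T^{1/3}$ (resp.\ $d^{1/2}T^{3/4}+d\sqrt{T}$), which also dominates the meta-regret of the previous paragraph.

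The main obstacle I anticipate is making the epoch-wise application of Proposition~\ref{prop:conversions} uniform over both non-stationarity regimes. When the globally optimal tuning $\sfB^\star=T/S$ (or its $\Delta$/$P$ analogue) satisfies $\sfB^\star\le L$, one may simply take the competing fixed arm equal to $\sfB^\star$; but in the low-non-stationarity regime $\sfB^\star>L$ no arm in $\mc{B}$ can realize the global interval length within a single epoch, and one must instead cap the comparator arm at $\approx L$ and argue that the resulting bound is still dominated by the restart term rather than degrading the leading $S^{1/4}T^{3/4}$ (resp.\ $\sqrt{ST}$) rate. Verifying that the additional term is therefore regime-independent — and that the $\ceil{\cdot}$ rounding of $E$ and the snapping of $\sfB^\star$ to the nearest power of two in $\mc{B}$ cost only constant and polylogarithmic factors — is the part of the argument that requires the most care.
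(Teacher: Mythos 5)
Your proposal is correct and follows essentially the same route as the paper: the same meta/base decomposition enabled by the per-epoch restarts, the same EXP3 bound of order $\sqrt{TL}$ for the meta-regret, and the same epoch-wise use of the adaptive-regret guarantee with the comparator arm capped at $\approx L$ (the paper implements exactly this via $i^\dagger=\floor{\log_2\tfrac{T}{S}}\wedge\floor{\log_2 L}$), yielding the restart cost $\tfrac{T}{L}F^{\ada}(L,L)+\sqrt{TL}$ before substituting $L=(dT)^{\frac{2}{3}}$ (resp.\ $L=d\sqrt{T}$). The regime issue you flag as the main obstacle is resolved precisely as you describe, since for $T/S>L$ the term $S\cdot F^{\ada}(L,L)$ is still dominated by $F^{\swi}(T,S)$ while the remaining $\cO(E)$ intervals are absorbed into the restart term.
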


\begin{proof}[Proof of Corollary \ref{cor:BoB}]For brevity, we suppress terms that are polylogarithmic in $T$ using $\lesssim $ in this proof. 
        For all $\sfB^\dagger\in \mc{B}$,  we have
\begin{align}
     R^\dyn (T) &= \sum_{t=1}^T\Exp\left[ f_t(\bz_t) - \min_{\bz\in \Theta}f_t(\bz)\right] \nonumber\\
    &= 
    \sum_{t=1}^T\Exp\Big[ f_t(\bz_t) - f_t\left(\bz_t(\sfB^\dagger)\right)\Big] + 
    \sum_{t=1}^T\Exp\left[f_t\left(\bz_t(\sfB^\dagger)\right) - \min_{\bz\in\com}f_t(\bz)\right]\nonumber\\
    &= 
\underbrace{\sum_{e=1}^{E}\sum_{t\in I_e}\Exp\Big[ f_t\left(\bz_t(\sfB_{e})\right) - f_t\left(\bz_t(\sfB^\dagger)\right)\Big]}_{\text{term I}} + \underbrace{
    \sum_{e=1}^{E}\sum_{t\in I_e}\Exp\left[f_t\left(\bz_t(\sfB^\dagger)\right) - \min_{\bz\in\com}f_t(\bz)\right]}_{\text{term II}}\,,\label{eq:decompse_R_BoB}
\end{align}
where $\bz_t(\sfB_e)$ represents the  actual action taken by TEWA-SE in round $t$ of epoch $e$, and   $\bz_t(\sfB^\dagger)$ denotes the hypothetical action that TEWA-SE would have chosen had its $\sfB$ parameter been set to $\sfB^\dagger$. 
Term I in \eqref{eq:decompse_R_BoB} can be bounded by applying the classical analysis of EXP3 from \cite[Corollary 3.2]{auer2002nonstochastic}, combined  with \eqref{eq:bound_P_Lambda_T_c}, as follows
\begin{align}
   \forall \sfB^\dagger \in \mc{B}:\quad  \text{term I}&\le 4\sqrt{\textrm{e}-1}\sqrt{E|\mc{B}|\log |\mc{B}|}\cdot \Exp\left[Q\right] \nonumber\\
    &\lesssim   \sqrt{E }\cdot (\Prob(\Lambda_T)\Exp\left[Q \mid \Lambda_T\right] + \Prob(\Lambda_T^c)\Exp\left[Q \mid \Lambda_T^c\right])\nonumber\\
    &\lesssim  \sqrt{T/L } \cdot (LM_T + \tfrac{2}{T}\cdot L)
    \lesssim \sqrt{T L}\,.\label{eq:term1_BoB}
\end{align}
To bound term II,  we introduce shorthand $F^\ada(\sfB, T)$ to refer to the upper bound on $R^\ada(\sfB, T)$ in Theorem~\ref{thm:interval_regret}, and  $F^\swi(T, S), F^{\dyn}(T, \Delta, S)$ and $F^\pat(T, P)$ to refer to the upper bounds on $R^\swi(T, S), R^{\dyn}(T, \Delta, S)$ and $R^\pat(T, P)$  in Corollary~\ref{cor:TEWA_R_swi_known} for known $S, \Delta$ and $P$. We  also use   $S_e=1+ \sum_{t\in I_e }\ind{f_t\neq f_{t-1}}$. 
By choosing $ \sfB^\dagger = 2^{i^\dagger}$  in the analysis with $ i^\dagger=    \floor{\log_2\tfrac{T}{S}} \wedge \floor{\log_2 L} $, term II can be bounded in terms of the number of switches $S$ by 
\begin{align}
    \text{term II}&\le  \sum_{e=1}^E   \left(\ceil{\tfrac{L}{ \sfB^\dagger} }+ S_e\right)R^{\ada}(\sfB^\dagger, L)
     \le \big(\tfrac{T}{\sfB^\dagger} + S+ E\big)R^{\ada}(\sfB^\dagger, L)\nonumber\\
     &\le F^\swi(T, S)+\tfrac{T}{L}  F^\ada(L, L)  
    \,. \label{eq:term2_BoB}
\end{align}
Combining \eqref{eq:term1_BoB} and \eqref{eq:term2_BoB}, we obtain
\begin{align}
    R^\swi(T, S)&\lesssim F^\swi(T, S)  + \left[\tfrac{T}{L}F^\ada(L, L)+ \sqrt{TL}\right]\nonumber\\
    &\lesssim F^\swi(T, S)  + \begin{cases}
     d^{\frac{1}{3}}T^{\frac{5}{6}} + d^{\frac{4}{3}}T^{\frac{1}{3}}\\
       d^{\frac{1}{2}} T^{\frac{3}{4}} + d\sqrt{T} &(\text{SC})\,,
    \end{cases}
\end{align}
where we used $L=(dT)^{\frac{2}{3}}$ for the general convex case, and  $L=d\sqrt{T}$ for the strongly-convex case. 
Following  similar steps, by  choosing $\sfB^\dagger = 2^{i^\dagger}$ in the analysis with $ i^\dagger=   (\floor{\log_2\tfrac{T}{S}}\vee \floor{ \log_2(\sfB_\Delta)})\wedge \floor{\log_2 L} $ where $ \sfB_\Delta=\big(\frac{\sqrt{d}T}{\Delta}\big)^{\frac{4}{5}}$ for the general convex case or $\sfB_\Delta=\big(\frac{dT}{\Delta}\big)^{\frac{2}{3}}$ for the strongly-convex case, 
we obtain
\begin{align}
    R^\dyn(T, \Delta, S)&\lesssim F^\dyn(T,\Delta, S)   + \begin{cases} 
    d^{\frac{1}{3}}T^{\frac{5}{6}} +  d^{\frac{4}{3}}T^{\frac{1}{3}} \\
       d^{\frac{1}{2}} T^{\frac{3}{4}} + d\sqrt{T} &(\text{SC})\,.
    \end{cases}
\end{align}
The bound on $R^\pat(T, P)$ can be established analogously.  
\end{proof}

\section{Proofs of lower bounds}
\label{sec:lowerbounds_proof}
We call $\pi = \{\bz_t\}_{t=1}^{\infty}$ a randomized procedure if $\bz_t =\Phi_t(\{\bz_k\}_{k=1}^{t-1},\{y_k\}_{k=1}^{t-1})$ where $\Phi_t$ are Borel functions, and $\bz_1\in\mathbb{R}^d$ is deterministic. 
We emphasize that, throughout this section, we assume the noise variables $\{\xi_t\}_{t=1}^{T}$ are independent with cumulative distribution function $F$ satisfying the condition
\begin{align}\label{con:lw}
    \int\log\left(\d F\left(u\right)/\d F\left(u+v\right)\right) \d F(u)\leq I_0v^2,\quad\quad |v|<v_0,
\end{align}
for some $0 < I_0 < \infty, 0 < v_0 \leq \infty.$ This condition holds, for instance, if $F$ has a sufficiently smooth density with finite Fisher information. In the special case where $F$ is Gaussian, the inequality \eqref{con:lw} holds with $v_0 = \infty$. Note that Gaussian noise also satisfies our sub-Gaussian noise assumption   in  Section \ref{sec:intro}, 
which is used in the proof of the upper bounds.

We first restate and prove Theorem \ref{thm:lowerbound}, which establishes a lower bound on $R^\dyn(T, \Delta, S)$, and then  present and prove Theorem \ref{thm:2lowerbound}, which establishes a  lower bound on $R^\pat(T, P)$.
\theoremLowerBound*
\begin{proof}[Proof of Theorem \ref{thm:lowerbound}]
   Let $\eta_0:\mathbb{R}\to\mathbb{R}$ be an infinitely many times differentiable function that satisfies
\begin{align*}
    \eta_0(x)  \begin{cases}
    =1 \quad\quad &\text{if }|x|\leq 1/4\,,\\
    \in (0,1)\quad\quad &\text{if }1/4<|x|< 1\,,\\
    =0 \quad\quad &\text{if }|x|\geq 1\,.
    \end{cases}
\end{align*}
Denote by $\Omega =\{-1,1\}^{\dx}$   the set of binary sequences of length $d$, and let $\eta(x) = \int_{-\infty}^{x}\eta_0(u)\d u$. Consider the set of functions $f_{\bomega}:\mathbb{R}^{\dx}\to\mathbb{R}$ with $\bomega=(\omega_1, \dots, \omega_d) \in \{-1,1\}^{\dx}$ such that:
\begin{equation}\label{lw:function}
\begin{aligned}
    f_{\bomega}(\bx) = \alpha\norm{\bx}^2+\lbfuncconst h^{2}\left(\sum_{i=1}^{\dx}\omega_i\eta\left(\frac{x_i}{h}\right)\right),\quad\quad \bx = (x_1,\dots,x_{\dx}),
\end{aligned}
\end{equation}
where 
$h = \min\left(\dx^{-\frac{1}{2}},\left(\frac{T}{S}\right)^{-\frac{1}{4}},(\frac{dT}{\Delta})^{-\frac{1}{6}}\right)$, and $\lbfuncconst>0$ is to be assigned later. Let $L' = \max_{x\in\mathbb{R}}|\eta''\left(x\right)|$.
By \cite[Lemma 10]{akhavan2024contextual} we have that if $\lbfuncconst\leq \min\left(1/2\eta(1), \alpha/L'\right)$ then $f_{\bomega}\in \mathcal{F}_{\alpha}$. Moreover, if $\lbfuncconst\leq \alpha/2$, the equation $\nabla f_{\bomega}(\bx) = 0$ has the solution
\begin{align}\label{eq:def_x_star}
    \bx^{*}(\bomega) = \left(x^{*}_{1}(\bomega),\dots, x^{*}_{\dx}(\bomega)\right),\quad \text{with}\quad x^{*}_{i}(\bomega) = -\frac{h\lbfuncconst\omega_i}{2\alpha} \quad   \text{for}\;1\leq i\leq \dx\,.
\end{align}
This  is the unique minimizer of  $ f_{\bomega}$ and belongs to  $\bx^{*}(\bomega)\in \com=\mathbb{B}^d$ because  
\begin{align*}
    \norm{\bx^{*}(\bomega)}^2\leq \frac{ h^2 \lbfuncconst^2  \dx}{4\alpha^2}\leq \frac{1}{16}.
\end{align*}

We consider the following adversarial protocol. At the beginning of the game, the adversary selects $N_c = \min(S , (T\Delta^2/d^2)^{\frac{1}{3}})$ points from $\Omega$, sampled uniformly at random with replacement. Here without loss of generality we assumed that $(T\Delta^2)^{\frac{1}{3}}$ is a positive integer. Denote these points by $\{\bomega_k\}_{k=1}^{N_c}$, and then for each $k = 1,2,\dots,N_c$, let
\begin{align*}
    f_{(k-1)T/N_c +1}=\dots=f_{kT/N_c}=f_{\bomega_k}\,.
\end{align*}
For any $\bomega,\bomega'\in \Omega$ let $\rho(\bomega,\bomega')= \sum_{i=1}^{\dx}\mathbb{1}\left(\omega_i\neq\omega_i'\right)$ be the Hamming distance between $\bomega$ and $\bomega'$, with $\bomega=\left(\omega_{1},\dots,\omega_{d}\right)$ and $\bomega'=\left(\omega'_{1},\dots,\omega'_{d}\right)$. By construction, 
 $N_c\le S$ and 
\begin{align*}
\sum_{k=2}^{N_c}\max_{\bx\in\com}|f_{\bomega_{k-1}}(\bx)-f_{\bomega_{k}}(\bx)|\leq 2 \lbfuncconst h^2\eta(1)\sum_{k=2}^{N_c}\rho(\bomega_{k-1},\bomega_{k} )\leq \Delta\,.
\end{align*}

For any fixed $\bomega_1,\dots,\bomega_{N_c}\in\Omega$, and $1\leq t\leq T$, denote $\Gamma = [\bomega_1\,|\,\dots\,|\,\bomega_{N_c}]$ as the matrix whose columns are the $\bomega_k$'s. Denote by $\mathbf{P}_{\Gamma, t}$ the probability measure corresponding to the joint distribution of $\{\bz_k, y_k\}_{k=1}^t$ where $y_k = f_{k}(\bz_k)+\xi_k$ with independent identically distributed $\xi_k$’s such that $\eqref{con:lw}$ holds and $\bz_k$'s are chosen by the algorithm $\pi$. We have 
\begin{align}
\d\mathbf{P}_{\Gamma,t}\left(\bz_{1:t}, y_{1:t}\right) &= \d F\left(y_1-f_{1}\left(\bz_{1}\right)\right)\prod_{\tau=2}^{t}\d F\left(y_\tau-f_{\tau}\left(\Phi_\tau\left(\bz_{1:\tau-1}, y_{1:\tau-1}\right)\right)\right)
   \nonumber \\&= \d F\left(y_1-f_{\bomega_1}\left(\bz_1\right)\right)\prod_{\tau=2}^{t}\d F\left(y_\tau-f_{\bomega_{k_\tau}}\left(\Phi_\tau\left(\bz_{1:\tau-1}, y_{1:\tau-1}\right)\right)\right),\label{lw:dist}
\end{align}
where $k_\tau = \floor{(\tau-1)N_c/T} + 1$.
(We omit explicit mention of the dependence of $\mathbf{P}_{\Gamma,t}$ and $\Phi_\tau$ on $\bz_2, \dots, \bz_{\tau-1}$, since $\bz_\tau$ for $ \tau \geq 2$ is a Borel function of $\bz_1,y_1,\dots,\bz_{\tau-1}, y_{\tau-1}$.) Let $\Exp_{\Gamma,t}$ denote the expectation w.r.t. $\mathbf{P}_{\Gamma,t}$.

Note that by $\alpha$-strong convexity of $f_{\bomega}$ and the fact that  $\bx^*(\bomega) \in \argmin_{\bx\in \bbR^d} f_{\bomega}(\bx) $ from \eqref{eq:def_x_star}, we have
\begin{equation}\label{eq:strongc}
\sum_{t=1}^{T}\Exp_{\Gamma,t}\left[f_{\bomega_{k_t}}(\bz_t) - \min_{\bx\in\com}f_{\bomega_{k_t}}(\bx)\right] \ge \frac{\alpha}{2} \sum_{t=1}^{T}\Exp_{\Gamma,t}\left[\norm{\bz_t-\bx^{*}(\bomega_{k_t})}^2\right]\,. 
\end{equation}
Define the nearest-neighbour estimator
\begin{align*}
    \hat{\bomega}_t\in \argmin_{\bomega\in \Omega}\norm{\bz_t-\bx^{*}(\bomega)}.
\end{align*}
Using this combined with the triangle inequality, we have   $\norm{\bx^{*}(\hat{\bomega}_t) -\bx^{*}(\bomega_{k_t})}\leq\norm{\bz_t - \bx^{*}(\hat{\bomega}_t)}+\norm{\bz_t-\bx^{*}(\bomega_{k_t})}\leq 2\norm{\bz_t-\bx^{*}(\bomega_{k_t})}$. Together with \eqref{eq:def_x_star} this implies that
\begin{align*}
    \Exp_{\Gamma,t}\left[\norm{\bz_t-\bx^{*}(\bomega_{k_t})}^{2}\right]&\geq \frac{1}{4}\Exp_{\Gamma,t}\left[\norm{\bx^{*}(\hat{\bomega}_t)-\bx^{*}(\bomega_{k_t})}^{2}\right]
    =\frac{h^{2}\lbfuncconst^2}{4 \alpha^{2}}\Exp_{\Gamma,t}\left[\rho(\hat{\bomega}_t,\bomega_{k_t})\right].
\end{align*}
Summing over $1,\dots, T$, then taking the maximum over $\Gamma = [\bomega_1|\dots|\bomega_{N_c}]$ and then the minimum over all estimators $\hat{\bomega}_{1}, \dots, \hat{\bomega}_{T}$ with values in $\Omega$, we get
\begin{align}
\max_{\Gamma\in\Omega^{N_c}}\sum_{t=1}^{T}\Exp_{\Gamma,t}\left[\norm{\bz_t-\bx^{*}(\bomega_{k_t})}^{2}\right] \geq 
\frac{h^{2}\lbfuncconst^2}{4 \alpha^{2}}\underbrace{\min_{\hat{\bomega}_1,\dots,\hat{\bomega}_T\in \Omega}\max_{\Gamma\in\Omega^{N_c}}\sum_{t=1}^{T}\sum_{i=1}^{\dx}\Exp_{\Gamma,t}\left[\mathbb{1}\left(\hat{\omega}_{t,i}\neq\omega_{k_t,i}\right)\right]}_{\text{term I}}.\label{eq:def_LB_termI}
\end{align}
For term I, lower bounding the maximum with the average we can write
\begin{align*}
    \text{term I} &\geq 2^{-\dx N_c}\min_{\hat{\bomega}_1,
\dots,\hat{\bomega}_T\in \Omega}\sum_{t=1}^{T}\sum_{\Gamma\in\Omega^{N_c}}\sum_{i=1}^{\dx}\Exp_{\Gamma,t}\left[\mathbb{1}\left(\hat{\omega}_{t,i}\neq\omega_{k_t,i}\right)\right]
\\&\geq 2^{-\dx N_c}\sum_{t=1}^{T}\sum_{\Gamma\in\Omega^{N_c}}\sum_{i=1}^{\dx}\min_{\hat{\omega}_{t,i}\in \{-1,1\}}\Exp_{\Gamma,t}\left[\mathbb{1}\left(\hat{\omega}_{t,i}\neq\omega_{k_t,i}\right)\right].
\end{align*}
Next, for each $i=1,\dots,\dx$, define $\Gamma_i^{k_t} =\{[\bomega_1|\dots|\bomega_{N_c}]: \bomega_1,\dots,\bomega_{N_c}\in\Omega, \omega_{k_t,i}=1\}$. Given any $\Gamma\in\Gamma_i^{k_t}$, let $\bar{\Gamma} = [\bar{\bomega}_1|\dots|\bar{\bomega}_{N_c}]$ such that $\bar{\omega}_{k,j}=\omega_{k,j}$ for any $k\neq k_t$, 
and let $\bar{\omega}_{k_t,i}=-1$ and $\bar{\omega}_{k_t, j}=\omega_{k_t, j}$ for $j\ne i$.  Hence,
\begin{align*}
    \text{term I} &\geq
     2^{-\dx {N_c}}\sum_{t=1}^{T}\sum_{\Gamma\in\Omega^{N_c}}\sum_{i=1}^{\dx}\min_{\hat{\omega}_{t,i}\in\{-1,1\}}\left(\Exp_{\Gamma,t}\left[\mathbb{1}\left(\hat{\omega}_{t,i}\neq1\right)\right] +\Exp_{\bar{\Gamma},t}\left[\mathbb{1}\left(\hat{\omega}_{t,i}\neq-1\right)\right]\right)\\
    &\geq 
    \frac{1}{2}\sum_{t=1}^{T} \sum_{i=1}^{\dx}\min_{\Gamma \in \Gamma_{i}^{k_t}}\min_{\hat{\omega}_{t,i}\in \{-1,1\}}\left(\Exp_{\Gamma,t}\left[\mathbb{1}\left(\hat{\omega}_{t,i}\neq1\right)\right] +\Exp_{\bar{\Gamma},t}\left[\mathbb{1}\left(\hat{\omega}_{t,i}\neq-1\right)\right]\right).
\end{align*}
Thus, we can write
\begin{align*}
\text{KL}\left(\mathbf{P}_{\Gamma,t}||\mathbf{P}_{\bar{\Gamma},t}\right) &= \int \log\left(\frac{\d\mathbf{P}_{\Gamma,t}}{\d\mathbf{P}_{\bar{\Gamma},t}}\right)\d\mathbf{P}_{\Gamma,t}\\
&=\int\bigg[\log\left(\frac{\d F(y_1-f_{\bomega_1}(\bz_1))}{\d F(y_1-f_{\bar{\bomega}_1}(\bz_1))}\right)+
\\&\phantom{0000000}+\sum_{\tau=2}^{t}\log\left(\frac{\d F(y_\tau-f_{\bomega_{k_\tau}}\left(\Phi_\tau\left(\bz_{1:\tau-1},y_{1:\tau-1}\right))\right)}{\d F(y_\tau-f_{\bar{\bomega}_{k_\tau}}\left(\Phi_\tau\left(\bz_{1:\tau-1},y_{1:\tau-1}\right))\right)}\right)\bigg]
\\&\phantom{000000000}\d F\left(y_1-f_{\bomega_1}\left(\bz_{1}
\right)\right)\prod_{\tau=2}^{t}\d F\left(y_\tau-f_{\bomega_{k_\tau}}\left(\Phi_\tau\left(\bz_{1:\tau-1},y_{1:\tau-1}\right)
\right)\right)
\\&\leq I_0\sum_{\tau=1}^{t}\max_{\bx\in\com
}|f_{\bomega_{k_\tau}}(\bx) - f_{\bar{\bomega}_{k_\tau}}(\bx)|^{2}
\leq 4TN_c^{-1}I_0\lbfuncconst^2h^{4}\eta^2(1).
\end{align*}
Since $h \leq \min(\left(\frac{S}{T}\right)^{\frac{1}{4}},(\frac{\Delta}{dT})^{\frac{1}{6}})$, and by choosing $\lbfuncconst\leq \sqrt{\log(2)/(4I_0\eta^{2}(1))}$, we have $\text{KL}(\mathbf{P}_{\Gamma,t}||\mathbf{P}_{\bar{\Gamma},t})\le \log(2)$. Hence, Theorem 2.12 of \cite{Tsybakov09} gives
\begin{align*}
    \text{term I}\geq \frac{T\dx}{4}\exp(-\log(2)) = \frac{T\dx}{8}.
\end{align*}
Substituting this into \eqref{eq:def_LB_termI} and our overall bound \eqref{eq:strongc} yields
\begin{align*}
\max_{\Gamma\in\Omega^{N_c}}\sum_{t=1}^{T}\Exp_{\Gamma,t}\left[f_{\bomega_{k_t}}(\bz_t) - \min_{\bx\in\com}f_{\bomega_{k_t}}(\bx)\right]\ge  \frac{\alpha}{2}\cdot \frac{h^{2}\lbfuncconst^2 }{4\alpha^{2}}\cdot \frac{Td}{8} = \frac{h^{2}\lbfuncconst^2 T\dx}{64\alpha }. 
\end{align*}
Finally, substituting the definition of $h$ and noting that $\lbfuncconst$ is independent of $d, T,S$ and $\Delta$ completes the proof.
\end{proof}

\begin{theorem}\label{thm:2lowerbound}
Let $\com = \mathbb{B}^d$. For $\alpha>0$ denote by $\mathcal{F}_\alpha$ the class of $\alpha$-strongly convex and smooth functions. Let $\pi=\{\bz_t\}_{t=1}^{T}$ be any randomized algorithm. Then there exists $T_0>0$ such that for all $T\geq T_0$ it holds that 
\begin{align*}
\sup_{f_1,\dots,f_T\in\mathcal{F}_{\alpha}} R^{\pat}(T,P) \geq c_2\cdot  (d^{2}P)^{\frac{2}{5}}T^{\frac{3}{5}},
\end{align*}
where $c_2>0$ is a constant indepedent of $d, T$ and $P$. 
\end{theorem}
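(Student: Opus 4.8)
The plan is to adapt the construction and the analysis of Theorem~\ref{thm:lowerbound} to the path-length measure, changing only the budget constraint and then re-optimizing the two free parameters $h$ and $N_c$. I would reuse the same hard class $f_{\bomega}(\bx)=\alpha\|\bx\|^2+\lbfuncconst h^{2}\sum_{i=1}^{\dx}\omega_i\eta(x_i/h)$ indexed by $\bomega\in\Omega=\{-1,1\}^{\dx}$, together with its minimizer $\bx^{*}(\bomega)$ having coordinates $x_i^{*}(\bomega)=-h\lbfuncconst\omega_i/(2\alpha)$, and the same adversarial protocol: split $[T]$ into $N_c$ equal blocks and assign $f_{\bomega_k}$ to block $k$, with $\bomega_1,\dots,\bomega_{N_c}$ drawn i.i.d.\ uniformly from $\Omega$. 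The natural comparator is the per-round minimizer $\{\bx^{*}(\bomega_{k_t})\}_{t=1}^{T}$, against which the dynamic regret equals $\sum_t\Exp[f_t(\bz_t)-\min_{\bx}f_t(\bx)]$, so exhibiting one such sequence with path-length $\le P$ lower-bounds $R^{\pat}(T,P)$.

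The only genuinely new computation is the path-length of this comparator. Since $\bx^{*}(\bomega_k)-\bx^{*}(\bomega_{k-1})$ is nonzero (of magnitude $h\lbfuncconst/\alpha$) exactly on the coordinates where $\bomega_k,\bomega_{k-1}$ disagree, the comparator moves only at block boundaries and
\[
P\big(\{\bx^{*}(\bomega_{k_t})\}_{t=1}^{T}\big)=\frac{h\lbfuncconst}{\alpha}\sum_{k=2}^{N_c}\sqrt{\rho(\bomega_k,\bomega_{k-1})}\;\le\;\frac{h\lbfuncconst}{\alpha}(N_c-1)\sqrt{\dx},
\]
where the last, deterministic, worst-case ($\rho\le\dx$) bound guarantees that \emph{every} realization of $\Gamma$ has path-length at most $P$ provided $\tfrac{h\lbfuncconst}{\alpha}N_c\sqrt{\dx}\le P$. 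The crucial point is the $\sqrt{\rho}$ (hence $\sqrt{\dx}$) scaling here, in contrast to the $\rho$ (hence $\dx$) scaling that governed the total-variation budget in Theorem~\ref{thm:lowerbound}; this single change is what moves the exponents from $\Delta^{1/3}T^{2/3}$ to $P^{2/5}T^{3/5}$.

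With the comparator's path-length under control, the rest of the chain is identical to Theorem~\ref{thm:lowerbound}: $\alpha$-strong convexity gives $R^{\pat}(T,P)\ge\tfrac{\alpha}{2}\sum_t\Exp_{\Gamma,t}\|\bz_t-\bx^{*}(\bomega_{k_t})\|^2$, and the nearest-codeword estimator together with the Assouad-type averaging and the per-coordinate bound $\mathrm{KL}(\mathbf{P}_{\Gamma,t}\|\mathbf{P}_{\bar\Gamma,t})\le 4TN_c^{-1}I_0\lbfuncconst^2h^{4}\eta^2(1)$ yield
\[
\min_{\hat{\bomega}}\max_{\Gamma}\sum_{t=1}^{T}\Exp_{\Gamma,t}\big[\|\bz_t-\bx^{*}(\bomega_{k_t})\|^2\big]\;\gtrsim\;\frac{h^{2}\lbfuncconst^{2}T\dx}{\alpha^{2}},
\]
valid as long as the KL stays below $\log 2$, i.e.\ $N_c\gtrsim Th^{4}$; hence $R^{\pat}(T,P)\gtrsim \lbfuncconst^2 h^{2}T\dx/\alpha$. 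It then remains to optimize $h$ subject to $N_c\gtrsim Th^{4}$ (KL) and $N_c\lesssim \alpha P/(h\lbfuncconst\sqrt{\dx})$ (path-length). Feasibility of an integer $N_c$ forces $h^{5}\lesssim P/(T\sqrt{\dx})$, and taking $h\asymp(P/(T\sqrt{\dx}))^{1/5}$ with $N_c\asymp Th^{4}\asymp T^{1/5}P^{4/5}\dx^{-2/5}$ makes both constraints tight; substituting into the displayed bound gives $T\dx\cdot(P/(T\sqrt{\dx}))^{2/5}\asymp \dx^{4/5}P^{2/5}T^{3/5}=(\dx^{2}P)^{2/5}T^{3/5}$, with $c_2$ collecting the $\alpha$-dependent constants $\lbfuncconst,I_0,\eta(1)$.

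The step I expect to be the main obstacle is not any single inequality but checking \emph{simultaneous admissibility} of the chosen parameters: one needs $N_c\ge1$ (integer) and $N_c\le T$, and the feasibility requirement $h\le\dx^{-1/2}$ that keeps $\bx^{*}(\bomega)\in\com=B^d$. Up to constants these collapse to the single regime $P\lesssim T/\dx^{2}$ — equivalently the regime in which the target bound $(\dx^{2}P)^{2/5}T^{3/5}$ is itself sublinear in $T$ — which is exactly the $P=o(T)$ assumption flagged after \eqref{eq:2lowerbound}. It is here that the threshold $T_0$ (from $N_c\ge1$, i.e.\ $TP^{4}\gtrsim\dx^{2}$) and the improvement over the $d\sqrt{PT}$ lower bound of \cite{zhao2021bandit} enter, so the care required is in delimiting this regime rather than in the estimation argument itself.
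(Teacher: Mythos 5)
Your proposal is correct and follows essentially the same route as the paper's proof: the same hard function class and block-switching protocol from Theorem~\ref{thm:lowerbound}, the same key observation that the comparator's path-length scales as $\frac{h\lbfuncconst}{\alpha}\sum_k\sqrt{\rho(\bomega_{k-1},\bomega_k)}\le \frac{h\lbfuncconst}{\alpha}\sqrt{d}\,N_c$, and the same Assouad/KL machinery, with your optimized parameters $h\asymp(P/(T\sqrt{d}))^{1/5}$ and $N_c\asymp T^{1/5}P^{4/5}d^{-2/5}$ coinciding exactly with the paper's choices $h=\min(d^{-1/2},P/(N_c\sqrt{d}))$ and $N_c=\lfloor P^{4/5}T^{1/5}d^{-2/5}\rfloor$. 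Your explicit delimitation of the admissible regime ($P\lesssim T/d^2$, $N_c\ge 1$) is in fact slightly more careful than the paper's brief "for large enough $T$" remark.
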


\begin{proof}[Proof of Theorem \ref{thm:2lowerbound}]
The proof uses the same notation and follows the same steps as in the proof of Theorem~\ref{thm:lowerbound}, but with different choices for the parameters \(h\) and \(N_c\). Define the set of functions $f_{\bomega}:\mathbb{R}^{\dx}\to\mathbb{R}$ with $\bomega \in \{-1,1\}^{\dx}$ as they are defined in \eqref{lw:function}, and choose $h = \min(d^{-\frac{1}{2}}, \frac{P}{N_c\sqrt{d}})$ and  $N_c = \lfloor P^{\frac{4}{5}}T^{\frac{1}{5}}d^{-\frac{2}{5}}\rfloor$. Then we have that 
\begin{align}
&    \sum_{k=2}^{N_c}\norm{\bx^*(\bomega_{k-1}) - \bx^*(\bomega_{k})}
= \frac{h\iota}{\alpha}\sum_{k=2}^{N_c}\sqrt{\rho(\bomega_{k-1},\bomega_k)}\leq \frac{h\lbfuncconst}{\alpha}\sqrt{d}N_c \leq P\,,
\end{align}
for any $\iota\leq \frac{\alpha}{2}$. Following similar steps as in the proof of Theorem~\ref{thm:lowerbound} for large enough $T$ (when $h =\frac{P}{N_c\sqrt{d}}$) we get
\begin{align*}
\max_{\Gamma\in\Omega^{N_c}}\sum_{t=1}^{T}\Exp_{\Gamma,t}\left[f_{\bomega_{k_t}}(\bz_t) - \min_{\bx\in\com}f_{\bomega_{k_t}}(\bx)\right]
\geq 
\frac{h^{2}\lbfuncconst^2T\dx}{64\alpha}\geq c_2 (d^2P)^{\frac{2}{5}}T^{\frac{3}{5}}\,,
\end{align*}
where $c_2>0$ is independent of $d, T$ and $P$.
\end{proof}

\section{Proofs for clipped Exploration by Optimization}\label{appx:ExO_proofs}
We restate and prove Theorem \ref{thm:ExO_adaptive_regret} which establishes an adaptive regret guarantee for cExO. In this section,  we use $\ip{\bp,f_t}=\Exp_{\bz\sim \bp}[f_t(\bz)]$ where  $\bp$ belongs to a probability simplex.
\ExOtheorem*
\begin{proof}[Proof of Theorem \ref{thm:ExO_adaptive_regret}] 
Consider an arbitrary interval $[a,b]$ of length $b-a+1\leq \sfB$, and notice that for any $\bq^\star\in \tilde{\Delta}$,
\begin{align}
\max_{\bu\in\Theta}\sum_{t=a}^b\Exp[f_t(\bz_t) - f_t(\bu)]
&=\underbrace{\sum_{t=a}^b \ip{\bp_t-\bq^\star,f_t}}_{\text{term I}} + \underbrace{\sum_{t=a}^b\Exp_{\bz\sim \bq^\star}[f_t(\bz)]-\min_{\bu\in\Theta}\sum_{t=a}^bf_t(\bu)     }_{\text{term II}}\,.
\end{align} 
In what follows, we choose a suitable $\bq^\star$ and bound term I and term II separately. 

Recall that the covering set $\mc{C}$ is assumed in Section \ref{sec:convex} to have a discretization error of $\varepsilon$, implying that there exists a $\bu_{\mc{C}}\in \mc{C}$ such that $\sum_{t=a}^bf_t(\bu_{\mc{C}})-\min_{\bu\in\Theta}\sum_{t=a}^bf_t(\bu)\leq \varepsilon \sfB$. 
Define $\bq^\star\in\tilde\Delta$ to be the   distribution with probability mass given by 
\begin{align}\label{eq:def_q_star}
    q^\star(\bz) = \begin{cases}
    1-\gamma(|\mc{C}|-1)& \text{ if }\bz = \bu_{\mc{C}}\\
     \gamma & \text{ otherwise}\,.
\end{cases}
\end{align}
This construction ensures that
\begin{equation}\label{eq:regret_q_star}
\text{term II}\leq (\varepsilon+2\gamma|\mc{C}|)\sfB\,.
\end{equation}
To bound term I, we first apply Lemma~\ref{lem:FTRL} to  
the sequence of Online Mirror Descent (OMD) updates $\bq_t \in\tilde\Delta$  and the sequence of loss estimates $\widehat{\bs}_t$ to obtain
\begin{align}
    \sum_{t=a}^b\ip{\bq_t-\bq^\star,\widehat{\bs}_t } \leq \frac{1}{\eta}\left(\text{KL}(\bq^\star|| \bq_a)+\sum_{t=a}^bS_{\bq_t}(\eta\widehat{ \bs}_t)\right)\,,\label{eq:OMD_interval_regret}
\end{align}
where by  the definition of $q^\star(\cdot)$ in \eqref{eq:def_q_star}, we have 
\begin{align}
    \text{KL}(\bq^\star||\bq_a) &= \sum_{\bz\in\mc{C}} q^\star(\bz)\log\left(\frac{q^\star(\bz)}{q_a(\bz)}\right)\nonumber\\
    &=(1-\gamma(|\mc{C}|-1))\log\left(\frac{1-\gamma(|\mc{C}|-1)}{q_a(\bu_{\mc{C}})}\right)+\sum_{\bz\in\mc{C}\setminus\{\bu_{\mc{C}}\}}\gamma\log\left(\frac{\gamma}{q_a(\bz)}\right)\nonumber\\
    &\leq \log(\gamma^{-1})\,.\label{eq:OMD_KL_bound}
\end{align}
Then applying \eqref{eq:OMD_interval_regret} and \eqref{eq:OMD_KL_bound}, we  have 
\begin{align}\label{eq:regret_pt_q_star}
   \text{term I}
   &=\sum_{t=a}^b\left[\ip{\bq_t-\bq^\star,\widehat{\bs}_t}+\ip{\bp_t-\bq^\star,f_t}+\ip{\bq^\star-\bq_t,\widehat{\bs}_t}\right]
    \nonumber\\
    &\leq \frac{\log(\gamma^{-1})}{\eta}+\sum_{t=a}^b\left[\ip{\bp_t-\bq^\star,f_t}+\ip{\bq^\star-\bq_t,\widehat{\bs}_t}+\frac{1}{\eta}S_{\bq_t}(\eta\widehat{\bs}_t)\right]
   \nonumber \\
    &\stackrel{\text{(i)}}{\le}\frac{\log(\gamma^{-1})}{\eta} + \sfB\left( \inf_{\substack{\bp\in \Delta(\mc{C}),\\E\in \mc{E} }}\Lambda_\eta(\bq_t, \bp, E) + \eta d\right) \nonumber\\
    & \stackrel{\text{(ii)}}{\leq} \frac{\log(\gamma^{-1})}{\eta}+\sfB\left(\eta \kappa d^4\log(dT)+ \eta d\right)\,,
\end{align}
where (i) follows from the update rule  \eqref{eq:def_Lambda_eta} and the precision level assumed for solving the minimization problem \eqref{eq:def_Lambda_eta} (see line~\ref{line:ExO_pt} of Algorithm~\ref{alg:convex}), and (ii) uses \cite[Theorems 8.19 and 8.21]{lattimore24introbco} which establish that there exists a universal constant $\kappa$ such that  
\begin{align}
    \sup_{\bq\in\tilde\Delta}  \inf_{\substack{\bp\in \Delta(\mc{C}),\\E\in \mc{E} }} \frac{1}{\eta}\Lambda_\eta(\bq, \bp, E) \leq  \kappa d^4\log(dT)\,.\nonumber
\end{align} 
Finally, combining \eqref{eq:regret_q_star}  and \eqref{eq:regret_pt_q_star} we obtain
\begin{align}
\max_{\bu\in\Theta}\sum_{t=a}^b\Exp[f_t(\bz_t) - f_t(\bu)]
&=\sum_{t=a}^b \ip{\bp_t-\bq^\star,f_t} + \sum_{t=a}^b\Exp_{\bz\sim \bq^\star}[f_t(\bz)]-\min_{\bu\in\Theta}\sum_{t=a}^bf_t(\bu)\nonumber\\
& \le (\varepsilon+2\gamma|\mc{C}|)\sfB + 
\frac{\log(\gamma^{-1})}{\eta}+\sfB\left(\eta \kappa d^4\log(dT)+ \eta d\right)\nonumber\\
& \lesssim \frac{\sfB}{T}+ \sqrt{\sfB   d^4 \log(dT) \log( T|\mc{C}|)}\lesssim d^\frac{5}{2}\sqrt{\sfB}\,,\nonumber
\end{align}
where (i) applies $\varepsilon=\frac{1}{T}$, $\gamma = \frac{1}{T|\mc{C}|}$ and  $\eta=\sqrt{\log(\gamma^{-1})/(d^4\log(dT)\sfB)}$, and (ii) is by selecting the covering set $\mc{C}$ such  that   $\log|\mc{C}|\leq d\log(1+16dT^2)$ (existence given by \cite[Definition 8.12 and Exercise 8.13]{lattimore24introbco}). 
\end{proof}

The proof of Theorem \ref{thm:ExO_adaptive_regret} above relied on Lemma \ref{lem:FTRL}, which we present and prove below.
\begin{lemma}
\label{lem:FTRL}
Consider  Online Mirror Descent (OMD) with KL divergence regularization and fixed learning rate $\eta>0$ applied to  a sequence of  loss estimates 
  $\bs_t \in \bbR^{n}$ for ${t\in \bbN^+}$.   When run  over a convex and complete domain $\tilde\Delta\subseteq \Delta^{n-1}$, the algorithm produces a sequence of updates $\bq_t\in \tilde{\Delta}$ for $t\in \bbN^+$. For any comparator in $\bq^\star\in\tilde\Delta$ and time interval $\{t\in \bbN^+: a\le t\le b\}$, it holds that
    \begin{align*}
        \sum_{t=a}^{b}\ip{\bq_t-\bq^\star,\bs_t} &\leq \frac{1}{\eta}\left(\text{KL}(\bq^\star||\bq_a) +\sum_{t=a}^{b}S_{\bq_t}(\eta\bs_t)\right)\,,
    \end{align*}
    where $S_{\bq}(\eta\bs)=\max_{\bq'\in \Delta(\mc{C})}\ip{\bq-\bq', \eta\bs} - \text{KL}(\bq'||\bq)$.
\end{lemma}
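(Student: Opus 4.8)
The plan is to recognize Lemma~\ref{lem:FTRL} as the standard regret decomposition for Online Mirror Descent (OMD) run with the Legendre negative-entropy mirror map $\Psi(\bq)=\sum_i q_i\log q_i$, whose induced Bregman divergence is exactly $\text{KL}(\cdot\|\cdot)$. The algorithm's update (line~\ref{line:ExO_qt}) can be written as $\bq_{t+1}=\argmin_{\bq\in\tilde\Delta}\,\eta\ip{\bq,\bs_t}+\text{KL}(\bq\|\bq_t)$, and the completeness of $\tilde\Delta$ guarantees that the minimizer exists and stays in the relative interior, so the first-order conditions used below are valid. Since every $\bq_t$ has coordinates bounded below by $\gamma>0$, the divergence $\text{KL}(\bq'\|\bq_t)$ is finite for all $\bq'\in\Delta(\mc{C})$, so the maximizer defining $S_{\bq_t}(\eta\bs_t)$ is attained; these two observations are the only places the clipping and completeness hypotheses enter.

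First I would split the per-round linearized regret as $\eta\ip{\bq_t-\bq^\star,\bs_t}=\eta\ip{\bq_t-\bq_{t+1},\bs_t}+\eta\ip{\bq_{t+1}-\bq^\star,\bs_t}$. For the second piece I would invoke the constrained optimality condition $\ip{\eta\bs_t+\nabla\Psi(\bq_{t+1})-\nabla\Psi(\bq_t),\,\bq^\star-\bq_{t+1}}\ge 0$, which is valid for the feasible comparator $\bq^\star\in\tilde\Delta$ and rearranges to $\eta\ip{\bs_t,\bq_{t+1}-\bq^\star}\le\ip{\nabla\Psi(\bq_{t+1})-\nabla\Psi(\bq_t),\,\bq^\star-\bq_{t+1}}$. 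Applying the three-point identity for Bregman divergences to the triple $(\bq^\star,\bq_{t+1},\bq_t)$ rewrites the right-hand side exactly as $\text{KL}(\bq^\star\|\bq_t)-\text{KL}(\bq^\star\|\bq_{t+1})-\text{KL}(\bq_{t+1}\|\bq_t)$.

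Combining the two pieces yields $\eta\ip{\bq_t-\bq^\star,\bs_t}\le\big[\eta\ip{\bq_t-\bq_{t+1},\bs_t}-\text{KL}(\bq_{t+1}\|\bq_t)\big]+\text{KL}(\bq^\star\|\bq_t)-\text{KL}(\bq^\star\|\bq_{t+1})$. The bracketed stability term is the crux: because $\bq_{t+1}\in\tilde\Delta\subseteq\Delta(\mc{C})$ is a feasible point of the maximization defining $S_{\bq_t}$, the bracket equals precisely the objective of that maximization evaluated at $\bq'=\bq_{t+1}$, hence is bounded above by $S_{\bq_t}(\eta\bs_t)=\max_{\bq'\in\Delta(\mc{C})}\ip{\bq_t-\bq',\eta\bs_t}-\text{KL}(\bq'\|\bq_t)$. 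Summing the resulting inequality over $t=a,\dots,b$, the Bregman terms telescope to $\text{KL}(\bq^\star\|\bq_a)-\text{KL}(\bq^\star\|\bq_{b+1})$; dropping the nonnegative final divergence and dividing by $\eta$ gives the claim.

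I expect the only genuine subtlety to be the step relating the stability term to $S_{\bq_t}$. One must take the maximum in $S_{\bq}$ over the full simplex $\Delta(\mc{C})$ rather than over $\tilde\Delta$, so that the actual iterate $\bq_{t+1}$ is an admissible competitor, and one must note that evaluating $S_{\bq_t}$ at this true constrained iterate — not at the unconstrained Bregman-proximal point — still upper-bounds the bracket; this holds because the bracket is literally one value of the maximized objective. The remaining ingredients, namely the three-point identity and the telescoping of the divergences, are routine, so the write-up would only need to record the optimality inequality, the identity, and the single displayed chain of inequalities followed by summation.
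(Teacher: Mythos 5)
Your proposal is correct and follows essentially the same route as the paper's proof: the same first-order optimality condition for the constrained OMD update, the same Bregman/KL identity (your ``three-point identity''), the same key observation that the stability term $\ip{\bq_t-\bq_{t+1},\eta\bs_t}-\text{KL}(\bq_{t+1}||\bq_t)$ is one admissible value of the maximization defining $S_{\bq_t}(\eta\bs_t)$ since $\bq_{t+1}\in\tilde\Delta\subseteq\Delta(\mc{C})$, and the same telescoping with the nonnegative final divergence dropped. The only differences are presentational (you split the per-round regret upfront, whereas the paper performs this split in the final summation), so no substantive comparison is needed.
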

\begin{proof}[Proof of Lemma \ref{lem:FTRL}]
The proof is standard and included for completeness.
Let $F$ denote the neg-entropy $F(\bq)=\sum_{i=1}^{n}q_i\log(q_i)$ for $\bq\in \Delta^{n-1}$, and note that 
\begin{align}\label{eq:KL_and_entropy}
    \text{KL}(\bp||\bq)=F(\bp)-\ip{\bp-\bq,\nabla F(\bq)}-F(\bq)\quad \forall \; \bp, \bq\in \Delta^{n-1}\,.
\end{align} 
Consider the update rule of the OMD defined in the lemma:
\begin{align*}
\bq_{t+1}=\argmin_{\bq\in\tilde\Delta} \ip{\bq,\eta\bs_t}+\text{KL}(\bq||\bq_t)=\argmin_{\bq\in\tilde\Delta} \ip{\bq,\eta\bs_t}+F(\bq)-\bq \nabla  F(\bq_{t})\,,
\end{align*}
which implies by the first order optimality condition \cite[Proposition 26.14]{lattimore2020bandit} that, for any $\bq^\star\in \tilde\Delta$ and  time  $t$,
\begin{align}
\ip{\bq^\star-\bq_{t+1},\eta \bs_t +\nabla F(\bq_{t+1}) - \nabla F(\bq_t)} \geq 0\,.\label{eq:first_order_optimality}
\end{align}
Rearranging \eqref{eq:first_order_optimality} and applying \eqref{eq:KL_and_entropy}  we obtain 
\begin{align}
    \ip{\bq_{t+1}-\bq^\star,\bs_t} &\leq \frac{1}{\eta}\ip{\bq^\star-\bq_{t+1},\nabla F(\bq_{t+1})-\nabla F(\bq_{t})}\nonumber
    \\&= \frac{1}{\eta}\left(\text{KL}(\bq^\star||\bq_{t})-\text{KL}(\bq^\star||\bq_{t+1})-\text{KL}(\bq_{t+1}||\bq_{t})\right) \nonumber
    \\
    &\le -\ip{\bq_t-\bq_{t+1},\bs_t} +\frac{1}{\eta}S_{\bq_t}(\eta\bs_t)+ \frac{1}{\eta}\left(\text{KL}(\bq^\star||\bq_{t})-\text{KL}(\bq^\star||\bq_{t+1})\right)\,.
    \label{eq:first_order_optimality1}
\end{align}
Rearranging \eqref{eq:first_order_optimality1} and summing over $t\in [a,b]$ yields
\begin{align*}
\sum_{t=a}^b \ip{\bq_t-\bq^\star,\bs_t}&=\sum_{t=a}^b\left(\ip{\bq_{t+1}-\bq^\star,\bs_t} +\ip{\bq_t-\bq_{t+1},\bs_t}\right)\\
    &\leq \frac{1}{\eta}\sum_{t=a}^b\left( \text{KL}(\bq^\star||\bq_{t})-\text{KL}(\bq^\star||\bq_{t+1}) +S_{\bq_t}(\eta\bs_t)\right)
    \\&=\frac{1}{\eta}\left(  \text{KL}(\bq^\star||\bq_a)-\text{KL}(\bq^\star||\bq_{b+1}) +\sum_{t=a}^b S_{\bq_t}(\eta\bs_t)\right)\,,
\end{align*}
which combined with non-negativity of the KL divergence completes the proof.
\end{proof}

Finally, we apply Theorem \ref{thm:ExO_adaptive_regret} to prove the  bounds on $R^\swi(T,S), R^\dyn(T, \Delta, S)$ and $R^\pat(T, P)$  in Corollary \ref{cor:convex}, as well as the parameter-free guarantees in Corollary \ref{cor:BoB_ExO}. 
\ExOcorollaryKnown*
\begin{proof}[Proof of Corollary \ref{cor:convex}]
  We prove these results by applying the adaptive regret guarantee from Theorem~\ref{thm:ExO_adaptive_regret} and the conversions results from Proposition \ref{prop:conversions},  similarly to the proof of Corollary \ref{cor:TEWA_R_swi_known}.
\end{proof}
\begin{corollary}[cExO with BoB]\label{cor:BoB_ExO}
Let $T\in \bbN^+$. By partitioning the time horizon $[T]$ into epochs of length $L=d^{\frac{5}{2}}\sqrt{T}$, and  employing Bandit-over-Bandit  to select cExO's parameter $\sfB$ for each  epoch from the set $\mc{B}=\{2^i: i=0, 1, \dots, \lfloor\log_2T\rfloor\}$, this algorithm achieves all  regret bounds in Corollary~\ref{cor:convex} with an additional term of $d^{\frac{5}{4}}T^{\frac{3}{4}} $  (up to polylogarithmic factors). 
\end{corollary}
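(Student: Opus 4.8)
The plan is to reuse the Bandit-over-Bandit template from the proof of Corollary~\ref{cor:BoB}, replacing TEWA-SE by cExO as the base algorithm and exploiting that cExO's only relevant property here is its adaptive-regret guarantee $R^\ada(\sfB, T) \lesssim d^{5/2}\sqrt{\sfB}$ from Theorem~\ref{thm:ExO_adaptive_regret}. Partitioning $[T]$ into $E = \lceil T/L\rceil$ epochs $(I_e)_{e=1}^E$ of length $L$, and writing $\bz_t(\sfB_e)$ for the action cExO plays in epoch $e$ (run with the BoB-selected $\sfB_e$) and $\bz_t(\sfB^\dagger)$ for the hypothetical action had cExO been run with a fixed $\sfB^\dagger \in \mc{B}$ restarted each epoch, I would decompose, for any fixed $\sfB^\dagger$ and any instance whose minimizer sequence has at most $S$ switches,
\[
R^\dyn(T) = \underbrace{\sum_{e=1}^E\sum_{t\in I_e}\Exp\big[f_t(\bz_t(\sfB_e)) - f_t(\bz_t(\sfB^\dagger))\big]}_{\text{term I}} + \underbrace{\sum_{e=1}^E\sum_{t\in I_e}\Exp\big[f_t(\bz_t(\sfB^\dagger)) - \min_{\bz\in\Theta}f_t(\bz)\big]}_{\text{term II}}.
\]

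Term I is the meta-level EXP3 regret for picking $\sfB$ from $\mc{B}$. In the present noiseless setting the per-epoch reward $\sum_{t\in I_e}(1 - f_t(\bz_t))$ lies in $[0, 2L]$ because $|f_t|\le 1$, so the EXP3 analysis \cite[Corollary 3.2]{auer2002nonstochastic} gives, with $|\mc{B}| = O(\log T)$ and $\lesssim$ hiding the resulting polylogarithmic factors, term I $\lesssim \sqrt{E|\mc{B}|\log|\mc{B}|}\cdot 2L \lesssim \sqrt{T/L}\cdot L = \sqrt{TL}$; no conditioning on a high-probability noise event is needed, which simplifies the corresponding step of Corollary~\ref{cor:BoB}. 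Term II is the dynamic regret of a single fixed $\sfB^\dagger$, which I would bound exactly as in \eqref{eq:term2_BoB}: cover each epoch by $\lceil L/\sfB^\dagger\rceil$ length-$\sfB^\dagger$ sub-intervals, further split at the $S_e$ change points, apply the cExO adaptive-regret bound to each piece, and sum to obtain term II $\le (\tfrac{T}{\sfB^\dagger} + S + E)\,R^\ada(\sfB^\dagger, L)$. Choosing $\sfB^\dagger = 2^{i^\dagger}$ with $i^\dagger = \lfloor\log_2\tfrac{T}{S}\rfloor \wedge \lfloor\log_2 L\rfloor$ and using the monotonicity of $\sfB\mapsto R^\ada(\sfB, L)$ collapses this to $F^\swi(T, S) + \tfrac{T}{L}F^\ada(L, L)$, where $F^\ada(L, L) \lesssim d^{5/2}\sqrt{L}$ denotes cExO's adaptive-regret bound on intervals of length $\le L$.

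Combining the two terms yields
\[
R^\swi(T, S) \lesssim F^\swi(T, S) + \frac{T}{L}\,d^{5/2}\sqrt{L} + \sqrt{TL} = F^\swi(T, S) + \frac{d^{5/2}T}{\sqrt{L}} + \sqrt{TL},
\]
and balancing the two overhead terms $\tfrac{d^{5/2}T}{\sqrt{L}}$ and $\sqrt{TL}$ forces $L = d^{5/2}\sqrt{T}$, at which both equal $d^{5/4}T^{3/4}$ — precisely the claimed additive overhead and epoch length. Finally, I would re-run the identical argument with $\sfB^\dagger$ chosen for the dynamic- and path-length-constrained comparators (capping $i^\dagger$ by $\lfloor\log_2 L\rfloor$ and otherwise taking the index optimal for $\Delta$ or $P$, as in the proof of Corollary~\ref{cor:BoB}), converting the switching bound into the $R^\dyn(T,\Delta,S)$ and $R^\pat(T,P)$ bounds of Corollary~\ref{cor:convex} with the same $d^{5/4}T^{3/4}$ overhead. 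I expect the only genuinely nontrivial step to be the term-II reduction to $F^\swi(T, S) + \tfrac{T}{L}F^\ada(L, L)$ — verifying that the chosen $\sfB^\dagger$ makes $(\tfrac{T}{\sfB^\dagger} + S)R^\ada(\sfB^\dagger, L) \lesssim F^\swi(T, S)$ in both the $T/S\le L$ and $T/S > L$ regimes — while the EXP3 bound on term I and the optimization over $L$ are routine.
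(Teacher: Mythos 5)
Your proposal is correct and takes exactly the route the paper intends: the paper omits this proof, stating only that it is ``similar to that of Corollary~\ref{cor:BoB},'' and your argument is precisely that adaptation — the same term I/term II decomposition, the EXP3 bound $\sqrt{TL}$ (rightly simplified by the noiseless assumption of Section~\ref{sec:convex}), the reduction of term II to $F^\swi(T,S)+\tfrac{T}{L}F^\ada(L,L)$ with $F^\ada(L,L)\lesssim d^{5/2}\sqrt{L}$ from Theorem~\ref{thm:ExO_adaptive_regret}, and the correct balancing $\tfrac{d^{5/2}T}{\sqrt{L}}=\sqrt{TL}$ yielding $L=d^{5/2}\sqrt{T}$ and overhead $d^{5/4}T^{3/4}$. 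Your computations check out, so this fills in the omitted proof faithfully.
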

\begin{proof}[Proof of Corollary \ref{cor:BoB_ExO}]
    The proof is similar to that of Corollary \ref{cor:BoB} and is therefore omitted.
\end{proof}



\end{document}